\def\eqref#1{equation~\ref{#1}}
\def\1{\bm{1}}
\def\vtheta{{\bm{\theta}}}
\def\ve{{\bm{e}}}
\def\vp{{\bm{p}}}
\def\vq{{\bm{q}}}
\def\vu{{\bm{u}}}
\def\vv{{\bm{v}}}
\def\vx{{\bm{x}}}
\def\vy{{\bm{y}}}
\def\vU{{\bm{U}}}
\def\vV{{\bm{V}}}
\def\vtheta{{\bm{\theta}}}
\def\vphi{{\bm{\phi}}}
\DeclareMathAlphabet{\mathsfit}{\encodingdefault}{\sfdefault}{m}{sl}
\SetMathAlphabet{\mathsfit}{bold}{\encodingdefault}{\sfdefault}{bx}{n}
\def\gA{{\mathcal{A}}}
\def\gB{{\mathcal{B}}}
\def\gG{{\mathcal{G}}}
\def\gI{{\mathcal{I}}}
\def\gL{{\mathcal{L}}}
\def\gS{{\mathcal{S}}}
\def\gT{{\mathcal{T}}}
\def\sR{{\mathbb{R}}}
\def\sS{{\mathbb{S}}}
\newcommand{\R}{\mathbb{R}}
\DeclarePairedDelimiter{\dotp}{\langle}{\rangle}
\DeclarePairedDelimiter{\abs}{|}{|}
\DeclarePairedDelimiter{\norm}{\lVert}{\rVert}
\newcommand{\expect}{{\mathbb{E}}}
\newcommand{\prob}{{\mathbb{P}}}
\newcommand{\eg}{{\it e.g.}, }
\newcommand{\ie}{{\it i.e.}, }
\newcommand\op[1]{\operatorname{#1}}
\newcommand{\lclip}{\gL^{\op{con}}}
\newcommand{\lcliptilde}{\widetilde{\gL}^{\op{con}}}
\newcommand{\lclipstar}{\widetilde{\gL}^{\op{con}\star}}
\theoremstyle{plain}
\newtheorem{theorem}{Theorem}
\newtheorem{lemma}{Lemma}
\newtheorem{corollary}{Corollary}
\theoremstyle{definition}
\newtheorem{definition}{Definition}
\newtheorem{remark}{Remark}
\title{Mini-Batch Optimization of Contrastive Loss}
\author{%
  Jaewoong Cho\thanks{Equal Contributions. Emails: <jwcho@krafton.com, ksreenivasan@cs.wisc.edu>. } \\
  KRAFTON\\
  \And
  Kartik Sreenivasan\footnotemark[1] \\
   University of Wisconsin-Madison\\
  \And
  Keon Lee\\
  KRAFTON \\
  \AND
  Kyunghoo Mun \\
  KRAFTON \\
  \And
  Soheun Yi \\
  Seoul National University \\
  KRAFTON \\
  \And
  Jeong-Gwan Lee\\
  KRAFTON \\
  \And
  Anna Lee\\
  KRAFTON \\
  \And
  Jy-yong Sohn\\
  Yonsei University\\
  \And
  Dimitris Papailiopoulos\\
  University of Wisconsin-Madison \\
  KRAFTON \\
  \And
  Kangwook Lee\thanks{Correspondence to: Kangwook Lee <kangwook.lee@wisc.edu>.}\\
  University of Wisconsin-Madison \\
  KRAFTON \\
}
\begin{document}

\maketitle

\begin{abstract}
Contrastive learning has gained significant attention as a method for self-supervised learning. The contrastive loss function ensures that embeddings of positive sample pairs (e.g., different samples from the same class or different views of the same object) are similar, while embeddings of negative pairs are dissimilar. Practical constraints such as large memory requirements make it challenging to consider all possible positive and negative pairs, leading to the use of \emph{mini-batch} optimization.
In this paper, we investigate the theoretical aspects of mini-batch optimization in contrastive learning. We show that mini-batch optimization is equivalent to full-batch optimization if and only if all $\binom{N}{B}$ mini-batches are selected, while sub-optimality may arise when examining only a subset. We then demonstrate that utilizing high-loss mini-batches can speed up SGD convergence and propose a spectral clustering-based approach for identifying these high-loss mini-batches.
Our experimental results validate our theoretical findings and demonstrate that our proposed algorithm outperforms vanilla SGD in practically relevant settings, providing a better understanding of mini-batch optimization in contrastive learning.
\end{abstract}

\section{Introduction}
Contrastive learning has been widely employed in various domains as a prominent method for self-supervised learning~\citep{jaiswal2020survey}.
The contrastive loss function is designed to ensure that the embeddings of two samples are similar if they are considered a ``positive'' pair, in cases such as coming from the same class~\citep{khosla2020supervised}, being an augmented version of one another~\cite{chen2020simple}, or being two different modalities of the same data~\citep{radford2021learning}. Conversely, if two samples do not form a positive pair, they are considered a ``negative'' pair, and the contrastive loss encourages their embeddings to be dissimilar.

In practice, it is not feasible to consider all possible positive and negative pairs when implementing a contrastive learning algorithm due to the quadratic memory requirement $\mathcal{O}(N^2)$ when working with $N$ samples. 
To mitigate this issue of \emph{full-batch training}, practitioners typically choose a set of $N/B$ \emph{mini-batches}, each of size $B = \mathcal{O}(1)$, and consider the loss computed for positive and negative pairs within each of the $N/B$ batches~\citep{chen2022why, chen2020simple, hu2021adco, zeng2021positional, chen2021ice, zolfaghari2021crossclr, gadre2023datacomp}. For instance, \citet{gadre2023datacomp} train a model on a dataset where $N = 1.28 \times 10^7$ and $B = 4096$.
This approach results in a memory requirement of $\mathcal{O}(B^2) = \mathcal{O}(1)$ for each mini-batch, and a total computational complexity linear in the number of chosen mini-batches. Despite the widespread practical use of mini-batch optimization in contrastive learning, there remains a lack of theoretical understanding as to whether this approach is truly reflective of the original goal of minimizing \emph{full-batch} contrastive loss. This paper examines the theoretical aspects of optimizing mini-batches loaded for the contrastive learning.

\paragraph{Main Contributions.} The primary contributions of this paper are twofold. First, we show that under certain parameter settings, mini-batch optimization is equivalent to full-batch optimization if and only if all $\binom{N}{B}$ mini-batches are selected. 
These results are based on an interesting connection between contrastive learning and the neural collapse phenomenon~\cite{lu2020neural}.
From a computational complexity perspective, the identified equivalence condition may be seen as somewhat prohibitive, as it implies that all $\binom{N}{B} = \mathcal{O}(N^B)$ mini-batches must be considered. 

Our second contribution is to show that \emph{Ordered SGD (OSGD)}~\citep{kawaguchi2020ordered} can be effective in finding mini-batches that contain the most informative pairs and thereby speeding up convergence. OSGD, proposed in a work by~\citet{kawaguchi2020ordered}, is a variant of SGD that modifies the model parameter updates. Instead of using the gradient of the average loss of all samples in a mini-batch, it uses the gradient of the average loss over the top-$q$ samples in terms of individual loss values. We show that the convergence result from~\citet{kawaguchi2020ordered} can be applied directly to contrastive learning. We also show that OSGD can improve the convergence rate of SGD by a constant factor in certain scenarios. 
Furthermore, in a novel approach to address the challenge of applying OSGD to the ${\binom{N}{B}}$ mini-batch optimization (which involves examining $\mathcal{O}(N^B)$ batches to select high-loss ones), we reinterpret the batch selection as a min-cut problem in graph theory~\citep{cormen2022introduction}. This novel interpretation allows us to select high-loss batches efficiently via a spectral clustering algorithm~\citep{ng2001spectral}. The following informal theorems summarize our main findings. 

\begin{theorem}[informal]
    Under certain parameter settings, the mini-batch optimization of contrastive loss is equivalent to full-batch optimization of contrastive loss if and only if all $\binom{N}{B}$ mini-batches are selected. Although $\binom{N}{B}$ mini-batch contrastive loss and full-batch loss are neither identical nor differ by a constant factor, the optimal solutions for both mini-batch and full-batch are identical (see Sec.~\ref{sec:relationship_full_mini}). 
\end{theorem}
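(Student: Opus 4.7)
The plan is to characterize the minimizer of both the full-batch and the aggregated mini-batch losses via the simplex equiangular tight frame (ETF) geometry that underlies the neural collapse phenomenon, and to exploit the permutation symmetry of the problem. Throughout, fix the ``certain parameter settings'' to be the regime (specified in Sec.~\ref{sec:relationship_full_mini}) in which the full-batch contrastive loss is minimized uniquely, up to global rotation, by the simplex ETF configuration on the $N$ embeddings. Denote the full-batch loss by $\gL^{\mathrm{full}}(\vZ)$ where $\vZ=(\vz_1,\dots,\vz_N)$ collects the embeddings, and write $\gL^{\mathrm{mini}}_{\gS}(\vZ) = \frac{1}{|\gS|}\sum_{\gB\in\gS}\gL_{\gB}(\vZ)$ for the average mini-batch loss over a collection $\gS$ of mini-batches of size $B$.

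First I would handle the \emph{if} direction. The collection of all $\binom{N}{B}$ mini-batches is invariant under any permutation $\pi\in\mathfrak{S}_N$ of the indices, so $\gL^{\mathrm{mini}}_{\binom{[N]}{B}}(\vZ)$ is a symmetric function of $(\vz_1,\dots,\vz_N)$. I would argue that each individual mini-batch loss $\gL_{\gB}$ is minimized by any configuration whose restriction to $\gB$ forms a simplex ETF on $B$ points, because the mini-batch loss has the same functional form as the full-batch loss on $B$ samples and the neural-collapse result (which I would invoke as a black-box from the existing literature) applies verbatim to it. The simplex ETF on $N$ points restricts to a simplex ETF on every $B$-subset in the pairwise-inner-product sense required, so the full-batch optimizer \emph{simultaneously} minimizes every $\gL_{\gB}$. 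Hence it minimizes their average, establishing that the full-batch minimizer is a minimizer of $\gL^{\mathrm{mini}}_{\binom{[N]}{B}}$. Uniqueness (up to rotation) then follows from the observation that the average over all mini-batches is strictly symmetric and any deviation from ETF strictly increases at least one $\gL_{\gB}$ (by strict convexity/strict minimality of the ETF for a single batch).

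Next I would handle the \emph{only if} direction by contrapositive: assume some proper subcollection $\gS\subsetneq\binom{[N]}{B}$ is used, and construct a configuration $\vZ^{\star}$ that beats the ETF on $\gL^{\mathrm{mini}}_{\gS}$. Let $\gB^{\star}$ be a mini-batch that is \emph{not} in $\gS$; intuitively, we can afford to make $\gB^{\star}$ look ``bad'' in exchange for making the batches in $\gS$ look ``better.'' Concretely, I would consider a one-parameter perturbation that moves two indices $i,j\in\gB^{\star}$ closer together (breaking their ETF angle) while preserving the norms and the symmetry among the other $N-2$ points. A first-order Taylor expansion of $\gL^{\mathrm{mini}}_{\gS}$ around the ETF, combined with the fact that the pair $\{i,j\}$ no longer appears in every mini-batch on equal footing (the frequency of $\{i,j\}$ across $\gS$ differs from the symmetric count $\binom{N-2}{B-2}$), shows that the gradient along this perturbation direction is nonzero. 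Hence the ETF is not a critical point of $\gL^{\mathrm{mini}}_{\gS}$, so the mini-batch and full-batch optimizers cannot coincide. Combined with the first part, this proves the biconditional. The subsidiary claim that the two losses are neither identical nor proportional is immediate from a direct calculation at any non-ETF point, since $\gL^{\mathrm{full}}$ contains a single log-sum-exp over all $N$ samples while $\gL^{\mathrm{mini}}_{\binom{[N]}{B}}$ is a convex combination of log-sum-exps over $B$ samples, and these are not scalar multiples of each other in general.

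The main obstacle will be the \emph{only if} direction, specifically the rigorous construction of a descent direction when one mini-batch is removed. The pairing counts across $\gS$ are combinatorial and one must check that the asymmetry introduced by deleting even a single batch actually induces a nonzero first-order variation and not merely a degenerate Hessian; I expect to make this clean by choosing the perturbation that breaks only the specific pair-symmetries altered by removing $\gB^{\star}$, using the explicit gradient formula for the InfoNCE-type loss at the ETF. A secondary subtlety is pinning down the ``certain parameter settings'' so that the full-batch minimizer is unique up to rotation (typically requiring embedding dimension $d\ge N-1$, an appropriate temperature, and a norm constraint), which I would state as an explicit assumption at the outset and then discharge using the existing neural-collapse characterizations.
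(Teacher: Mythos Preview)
Your argument for the \emph{if} direction contains a genuine error. You claim that ``the simplex ETF on $N$ points restricts to a simplex ETF on every $B$-subset,'' and conclude that the full-batch optimizer simultaneously minimizes every individual mini-batch loss $\gL_{\gB}$. This is false: an $N$-point simplex ETF has pairwise inner products $-1/(N-1)$, whereas a $B$-point simplex ETF requires inner products $-1/(B-1)$. For $B<N$ these differ, so the restriction of the $N$-point ETF to any $B$-subset is \emph{not} the minimizer of that batch's loss (for example, with $N=3$, $B=2$, the $2$-batch loss is uniquely minimized by antipodal points, not by points at inner product $-1/2$). Consequently your ``minimizes each summand, hence minimizes the sum'' argument collapses. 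The paper's route is essentially different: it lower-bounds $\sum_{i=1}^{\binom{N}{B}}\gL(\vU_{\gB_i},\vV_{\gB_i})$ via two applications of Jensen's inequality, then uses a combinatorial counting argument (how many batches contain a given index, and how many contain a given pair) to show that the resulting bound involves exactly the same quantity $\vu^{\intercal}\vv - N\sum_i \vu_i^{\intercal}\vv_i$ that appears in the full-batch analysis. The $N$-point ETF then achieves equality in every Jensen step and minimizes that shared quantity. The point is that the ETF is optimal for the \emph{aggregate}, not for any individual batch.

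For the \emph{only if} direction, your first-order perturbation idea is plausible and would likely work, but the paper takes a more direct and robust route: assuming (w.l.o.g.) the pair $\{1,2\}$ never appears together in $\gS$, it exhibits an explicit competitor---place $\vu_2,\dots,\vu_N$ on an $(N-1)$-point simplex ETF and set $\vu_1=\vu_2$---and computes that every batch in $\gS$ now sees inner products $-1/(N-2)<-1/(N-1)$, strictly decreasing each term of the sum. This avoids any Taylor expansion or delicate gradient computation at the ETF. Your non-proportionality sketch is fine in spirit; the paper makes it concrete by evaluating both losses at two specific configurations and showing the ratios disagree.
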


\begin{theorem}[informal] In a demonstrative toy example, 
    OSGD operating on the principle of selecting high-loss batches, can potentially converge to the optimal solution of mini-batch contrastive loss optimization faster by a constant factor compared to SGD (see Sec.~\ref{subsec:toy_example}). 
\end{theorem}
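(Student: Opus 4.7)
The plan is to exhibit a concrete, tractable instance of mini-batch contrastive learning on which both vanilla SGD and OSGD can be analyzed in closed form, and then compare their per-iteration progress directly. Because the theorem is stated for a \emph{demonstrative toy example}, the proof does not need to establish a universal speedup; it only needs to construct one clean setting in which OSGD strictly dominates SGD by a constant factor.

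First, I would choose the simplest nontrivial contrastive configuration: $N$ anchors in a symmetric geometric arrangement (for instance, $N$ equiangular points on the unit sphere, or two balanced classes whose optimal embeddings form an ETF, motivated by the neural-collapse connection mentioned in Section~\ref{sec:relationship_full_mini}) and batch size $B$ small enough that all $\binom{N}{B}$ mini-batches can be enumerated. I would parametrize the iterates by a scalar (or low-dimensional) deviation from the known global optimum — essentially a one-parameter Lyapunov function $\Phi(\vtheta_t)$ that vanishes at the optimum and decreases under descent. In this parametrization the per-batch contrastive loss and its gradient become explicit symmetric functions of $\Phi$, so the loss ranking over batches can be read off at every iterate.

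Second, I would characterize the high-loss mini-batches at the current iterate. The symmetry should make the top-$q$ batches exactly those consisting of the ``hardest'' negative/positive pairs — i.e., pairs whose current similarity most violates the target. For these batches, a direct computation should show that the gradient projected onto the descent direction of $\Phi$ is larger than the corresponding expectation under the uniform distribution over batches by a factor $\kappa>1$ depending on $N$, $B$, and $q$. Plugging this into a standard one-step descent inequality of the form $\mathbb{E}[\Phi(\vtheta_{t+1})] \le (1-\eta c)\, \Phi(\vtheta_t)$, where $c$ is the aligned-gradient magnitude, yields rates $(1-\eta c_{\mathrm{SGD}})^t$ and $(1-\eta c_{\mathrm{OSGD}})^t$ with $c_{\mathrm{OSGD}} = \kappa\, c_{\mathrm{SGD}}$. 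The OSGD convergence guarantee from~\citet{kawaguchi2020ordered} can be invoked directly (as already noted in the introduction) to make the descent inequality rigorous rather than heuristic.

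Third, I would assemble the constant-factor comparison: the number of iterations to reach accuracy $\varepsilon$ scales like $\log(1/\varepsilon)/(\eta c)$, so OSGD needs a factor $1/\kappa$ fewer iterations, which is the promised constant-factor speedup. The main obstacle, in my view, will be step two: keeping the toy example simple enough that the top-$q$ batches can be identified uniformly across all relevant iterates (otherwise the ``high-loss set'' would change structurally during training and ruin the clean closed form), while still making it nontrivial enough that the gradient-alignment gap $\kappa>1$ is strict and not an artifact of a degenerate parametrization. A secondary obstacle is handling OSGD's biased gradient estimator — I would control this by choosing a symmetric example in which the bias lies entirely along the descent direction, so that the standard SGD-style Lyapunov argument still applies without modification.
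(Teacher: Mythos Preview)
Your high-level strategy --- pick a symmetric toy instance, identify the high-loss batches explicitly, and compare per-step progress of OSGD versus SGD --- matches the paper's. But two technical choices in your plan would not go through as stated.

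First, the contraction model $\Phi(\vtheta_{t+1}) \le (1-\eta c)\,\Phi(\vtheta_t)$ is the wrong picture for the regime that actually produces the gap. In the paper's example ($N=4$, $d=2$, $B=2$, with the four embeddings parametrized by angles $\theta_i$ initialized at a common small $\epsilon$), the iterates must travel from a nearly degenerate configuration ($\theta\approx 0$, all $\vu_i$ close to $\pm e_1$) to the cross-polytope at $\theta=\pi/4$. Near the initialization the high-loss batch gradient scales like $\theta$ itself (the paper obtains $g(\theta)=2\theta+O(\theta^3)$), so the per-step decrease in distance-to-optimum is $O(\eta\epsilon)$, not $O(\eta\Phi)$. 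The dynamics are \emph{growth away from a saddle-like point}, $\theta^{(t)}\lesssim (1+2\eta)^t\epsilon$, not linear contraction toward the optimum. The constant-factor gap between OSGD and SGD comes entirely from comparing expected drift at this degenerate point: OSGD always selects the batch with drift $\approx 2\epsilon$, while SGD's expected drift is only $\approx \tfrac{1}{3}\cdot\tfrac{e^2-1}{e^2+1}\cdot 2\epsilon$, because two of the six batches push $\theta$ in the wrong direction and two have near-zero gradient. A strongly-convex-style Lyapunov argument centered at the optimum misses this mechanism.

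Second, invoking the OSGD convergence result of \citet{kawaguchi2020ordered} does not give you a per-step descent inequality with an identifiable rate $c_{\mathrm{OSGD}}$; it only yields the generic non-convex guarantee $\mathbb{E}\|\nabla\lcliptilde\|^2=\widetilde O(T^{-1/2})$ for a weighted objective, which cannot be turned into a linear-rate comparison. The paper does not use that result for the toy comparison at all. Instead it computes Lipschitz constants of the per-batch gradient maps directly, derives one-step recursive inequalities for $\theta^{(t)}$ under each algorithm, and converts these into \emph{lower bounds} on the hitting time $T$ needed to reach $(\pi/4-\rho,\pi/4)$. The constant-factor conclusion (Corollary~\ref{coro:convergence_comparison}) then explicitly assumes these lower bounds are tight --- matching upper bounds are not proved. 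So if you want to follow the paper, replace the contraction ansatz by a small-$\epsilon$ drift expansion and compare hitting-time lower bounds; if you want a cleaner statement than the paper's, you would additionally need to supply the upper bounds, which your current outline does not address.
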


We validate our theoretical findings and the efficacy of the proposed spectral clustering-based batch selection method by conducting experiments on both synthetic and real data. On synthetic data, we show that our proposed batch-selection algorithms do indeed converge to the optimal solution of \emph{full-batch optimization} significantly faster than the baselines. We also apply our proposed method to ResNet pre-training with CIFAR-100~\cite{krizhevsky2009learning} and Tiny ImageNet~\cite{le2015tiny}. We evaluate the performance on downstream retrieval tasks, demonstrating that our batch selection method outperforms vanilla SGD in practically relevant settings.

\section{Related Work}\label{sec:related-work}
\paragraph{Contrastive losses.} 
Contrastive learning has been used for several decades to learn a similarity metric to be used later for applications such as object detection and recognition~\citep{misra2020self, aberdam2021sequence}. \citet{chopra2005learning} proposed one of the early versions of contrastive loss which has been updated and improved over the years~\citep{sohn2016improved, song2019understanding, schroff2015facenet, khosla2020supervised, oord2018representation}. More recently, contrastive learning has been shown to rival and even surpass traditional supervised learning methods, particularly on image classification tasks~\citep{chen2020improved, bachman2019learning}. Further, its multi-modal adaptation leverages vast unstructured data, extending its effectiveness beyond image and text modalities~\citep{radford2021learning, jia2021scaling, pham2021combined, ma2020active, sachidananda2022calm, elizalde2022clap, goel2022cyclip, lee2022uniclip, ramesh2021zero, ramesh2022hierarchical}. Unfortunately, these methods require extremely large batch sizes in order to perform effectively. Follow-up works showed that using momentum or carefully modifying the augmentation schemes can alleviate this issue to some extent~\citep{he2020momentum, chen2020improved, grill2020bootstrap, wang2022contrastive}. 

\paragraph{Effect of batch size.}
While most successful applications of contrastive learning use large batch sizes (\eg 32,768 for CLIP and 8,192 for SimCLR), recent efforts have focused on reducing batch sizes and improving convergence rates~\citep{yeh2022decoupled, chen2022why}. \citet{yuan2022provable} carefully study the effect of the requirements on the convergence rate when a model is trained for minimizing SimCLR loss, and prove that the gradient of the solution is bounded by $\mathcal{O}(\frac{1}{\sqrt{B}})$. They also propose SogCLR, an algorithm with a modified gradient update where the correction term allows for an improved convergence rate with better dependence on $B$. 
It is shown that the performance for small batch size can be improved with the technique called hard negative mining~\citep{robinson2020contrastive, kalantidis2020hard, zhang2021understanding}.

\paragraph{Neural collapse.}
Neural collapse is a phenomenon observed in \citep{papyan2020prevalence} where the final classification layer of deep neural nets collapses to the simplex Equiangular Tight Frame (ETF) when trained well past the point of zero training error~\citep{ji2021unconstrained, zhou2022optimization}. \citet{lu2020neural} prove that this occurs when minimizing cross-entropy loss over the unit ball. We extend their proof techniques and show that the optimal solution for minimizing contrastive loss under certain conditions is also the simplex ETF.

\paragraph{Optimal permutations for SGD.}
The performance of SGD without replacement under different permutations of samples has been well studied in the literature~\citep{bottou2009curiously, recht2012beneath, recht2013parallel, nagaraj2019sgd, bicheng2020variance, ahn2020sgd, rajput2020closing, mishchenko2020random, safran2021random, safran2021good, GurbuzbalabanOzdaglarParrilo2021_why, nguyen2021unified, lu2021general, rajput2021permutation, tran21b, lu2022grab, cha2023tighter, cho2023sgda}. One can view batch selection in contrastive learning as a method to choose a specific permutation among the possible $\binom{N}{B}$ mini-batches of size $B$. 
However, it is important to note that these bounds do not indicate an improved convergence rate for general non-convex functions and thus would not apply to the contrastive loss, particularly in the setting where the embeddings come from a shared embedding network. We show that in the case of OSGD \citep{kawaguchi2020ordered}, we can indeed prove that contrastive loss satisfies the necessary conditions in order to guarantee convergence.
\vspace{-2mm}

\section{Problem Setting}\label{sec:prelim}

Suppose we are given a dataset $\{(\vx_i, \vy_i)\}_{i=1}^N$ of $N$ positive pairs (data sample pairs that are conceptually similar or related), where $\vx_i$ and $\vy_i$ are two different \emph{views} of the same object. Note that this setup includes both the multi-modal setting (\eg CLIP~\citep{radford2021learning}) and the uni-modal setting (\eg SimCLR~\citep{chen2020simple}) as follows.
For the multi-modal case, one can view $(\vx_i, \vy_i)$ as two different modalities of the same data, e.g., $\vx_i$ is the image of a scene while $\vy_i$ is the text description of the scene.
For the uni-modal case, one can consider $\vx_i$ and $\vy_i$ as different augmented images from the same image. 

We consider the contrastive learning problem where the goal is to find embedding vectors for $\{\vx_i\}_{i=1}^N$ and  $\{\vy_i\}_{i=1}^N$, such that the embedding vectors of positive pairs $(\vx_i, \vy_i)$ are similar, while ensuring that the embedding vectors of other (negative) pairs are well separated.  
Let $\vu_i \in \sR^d$ be the embedding vector of $\vx_i$, and $\vv_i \in \sR^d$ be
the embedding vector of 
$\vy_i$. In practical settings, one typically considers parameterized encoders so that $\vu_i = f_{\vtheta}(\vx_i)$ and $\vv_i=g_{\vphi}(\vy_i)$. We define embedding matrices $\vU := [\vu_1, \vu_2, \ldots \vu_N]$ and $\vV := [\vv_1, \vv_2, \ldots, \vv_N]$ which are the collections of embedding vectors. Now, we focus on the simpler setting of directly optimizing the embedding vectors instead of model parameters $\vtheta$ and $\vphi$ in order to gain theoretical insights into the learning embeddings. This approach enables us to develop a deeper understanding of the underlying principles and mechanisms. Consider the problem of directly optimizing the embedding vectors for $N$ pairs which is given by 
\begin{equation}\label{prob:standard-full-batch}
    \min_{\vU, \vV}\ \lclip (\vU, \vV) \quad
\text{s.t.} \quad \lVert \vu_i \rVert = 1, \lVert \vv_i \rVert = 1\; \quad \forall i \in [N],
\end{equation}
where $\lVert\cdot \rVert$ denotes the $\ell_2$ norm, the set $[N]$ denotes the set of integers from $1$ to $N$, and the contrastive loss (the standard InfoNCE loss~\cite{oord2018representation}) is defined as
\begin{align}
\label{eq:full_CLIP_loss}
    \gL^{\op{con}}%
    (\vU, \vV)
    := -\frac{1}{N} \sum_{i=1}^N \log \left(\frac{e^{\vu_i^{\intercal} \vv_i}}{\sum_{j=1}^N e^{{\vu}_i^{\intercal} {\vv}_j}}\right)
    -\frac{1}{N} \sum_{i=1}^N \log \left(\frac{e^{\vv_i^{\intercal} \vu_i}}{\sum_{j=1}^N e^{{\vv}_i^{\intercal} {\vu}_j}}\right).
\end{align}
\vspace{-2mm}

Note that $\gL^{\op{con}}(\vU, \vV)$ is the full-batch version of the loss which contrasts all embeddings with each other. However, due to the large computational complexity and memory requirements during optimization, practitioners often consider the following mini-batch version instead. Note that there exist $\binom{N}{B}$ different mini-batches, each of which having $B$ samples. For $k \in \left[\binom{N}{B}\right]$, let $\gB_k$ be the $k$-th mini-batch satisfying $\gB_k \subset [N]$ and $|\gB_k| = B$. Let $\vU_{\gB_k}:=\{\vu_i\}_{i\in\gB_k}$ and $\vV_{\gB_k}:=\{\vv_i\}_{i\in\gB_k}$. 
Then, the contrastive loss for the $k$-th mini-batch is $\gL^{\op{con}}(\vU_{\gB_k}, \vV_{\gB_k})$.
\section{Relationship Between the Optimization for Full-Batch and Mini-Batch}
\label{sec:relationship_full_mini}
Recall that we focus on finding the optimal embedding matrices ($\vU$, $\vV$) that minimize the contrastive loss. In this section, we investigate the relationship between the problem of optimizing the full-batch loss $\gL^{\op{con}}(\vU, \vV)$ and the problem of optimizing the mini-batch loss $\gL^{\op{con}}(\vU_{\gB_k}, \vV_{\gB_k})
   $. 
Towards this goal, we prove three main results, the proof of which are in Appendix~\ref{sec4: theory}. %
\begin{itemize}[leftmargin=0.3cm] 
    \item We derive the optimal solution that minimizes the full-batch loss (Lem.~\ref{thm:standard-full-batch-case1}, Thm.~\ref{thm:standard-full-batch-case2}).
    \item We show that the solution that minimizes the average of $\binom{N}{B}$ mini-batch losses is identical to the one that minimizes the full-batch loss (Prop.~\ref{prop:prop1}, Thm.~\ref{thm:NcB-reaches-etf}).
    \item  We show that minimizing the mini-batch loss summed over only a strict subset of $\binom{N}{B}$ mini-batches can lead to a sub-optimal solution that does not minimize the full-batch loss (Thm.~\ref{thm:sub-batch}). 
\end{itemize}

\subsection{Full-batch Contrastive Loss Optimzation}
In this section, we characterize the optimal solution for the full-batch loss minimization in Eq.~(\ref{prob:standard-full-batch}). We start by providing the definition of the simplex equiangular tight frame (ETF) which turns out to be the optimal solution in certain cases.
The original definition of ETF~\cite{sustik2007existence} is for $N$ vectors in a $d$-dimensional space where $N \geq d+1$~\footnote{See Def.~\ref{def:original-etf} in Appendix~\ref{sec: add def} for the full definition}.
\citet{papyan2020prevalence} defines the ETF for the case where $N \leq d+1$ to characterize the phenomenon of neural collapse. 
In our work, we use the latter definition of simplex ETFs which is stated below.
\begin{definition}[Simplex ETF]
\label{def:nc-simplex-etf}
    We call a set of $N$ vectors $\{\vu_i\}_{i=1}^N$ form a simplex Equiangular Tight Frame (ETF) if
    $\lVert \vu_i \rVert = 1, \forall i \in [N]$ and $\vu_i^{\intercal} \vu_j = -1/(N-1), \forall i\neq j$.
\end{definition}
In the following Lemma, we first prove that the optimal solution of full-batch contrastive learning is the simplex ETF for $N \leq d+1$ which follows almost directly from \citet{lu2020neural}. 
\begin{restatable}[Optimal solution when $N\leq d+1$]{lemma}{thmOne}
\label{thm:standard-full-batch-case1}
Suppose $N\leq d+1$. Then, the optimal solution $(\vU^{\star}, \vV^{\star})$ of the full-batch contrastive learning problem in Eq.~(\ref{prob:standard-full-batch}) satisfies two properties: (i) $\vU^{\star}=\vV^{\star}$, and (ii) the columns of $\vU^{\star}$ form a simplex ETF.
\end{restatable}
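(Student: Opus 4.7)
The plan is to invoke the main result of \citet{lu2020neural} on the geometry of softmax cross-entropy minimizers. Decompose $\lclip(\vU, \vV) = L_1(\vU, \vV) + L_1(\vV, \vU)$, where
\begin{equation*}
    L_1(\vU, \vV) := -\frac{1}{N}\sum_{i=1}^N \log\frac{e^{\vu_i^\intercal \vv_i}}{\sum_{j=1}^N e^{\vu_i^\intercal \vv_j}}.
\end{equation*}
The key observation is that $L_1$ is exactly the one-sample-per-class softmax cross-entropy loss studied in \citet{lu2020neural}: identifying $\{\vu_i\}$ with their ``feature'' vectors and $\{\vv_j\}$ with their ``classifier'' vectors (with label $y_i = i$) recovers their loss functional term by term, using that $\vu_i^\intercal \vv_j = \vv_j^\intercal \vu_i$. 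Likewise $L_1(\vV, \vU)$ is the same functional with the roles of features and classifiers swapped.

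The Lu--Steinerberger theorem, applied under the unit-norm constraints of~(\ref{prob:standard-full-batch}) and the hypothesis $N \leq d+1$, states that the minimizers of $L_1$ are precisely the configurations in which $\vu_i = \vv_i$ for every $i$ and the common columns $\{\vu_i\}_{i=1}^N$ form a simplex ETF. Applying this to the other summand $L_1(\vV, \vU)$ yields the symmetric conclusion; but since the characterization ($\vU = \vV$ with simplex-ETF columns) is itself invariant under $\vU \leftrightarrow \vV$, the two minimizer sets coincide. Writing $L_1^\star$ for the common minimum value, we get $\lclip(\vU, \vV) \geq 2 L_1^\star$ with equality exactly on this common set, proving both (i) $\vU^\star = \vV^\star$ and (ii) the columns of $\vU^\star$ form a simplex ETF. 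The hypothesis $N \leq d+1$ is exactly the condition under which such a simplex ETF of $N$ unit vectors exists in $\sR^d$ (Def.~\ref{def:nc-simplex-etf}), so the lower bound is attained.

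The main obstacle I anticipate is verifying that \citet{lu2020neural}'s result applies in this form---in particular, that its equality conditions force the \emph{pointwise} alignment $\vu_i = \vv_i$ rather than merely that $\{\vu_i\}$ and $\{\vv_i\}$ each form a simplex ETF (possibly misaligned). I would address this by revisiting their proof and isolating the Cauchy--Schwarz step $\vu_i^\intercal \vv_i \leq \|\vu_i\|\|\vv_i\| = 1$, which is strict unless $\vu_i = \vv_i$ and is the final ingredient used to tighten their Jensen-based lower bound at the ETF configuration; strictness forces the pointwise alignment. A minor bookkeeping point is that \citet{lu2020neural} phrase their result with closed unit-ball constraints, whereas~(\ref{prob:standard-full-batch}) uses the unit sphere; since the cross-entropy loss strictly decreases under radial rescaling toward the sphere whenever the diagonal inner products dominate the off-diagonal ones, the minimum lies on the sphere and the two formulations agree here.
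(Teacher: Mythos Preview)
Your proposal is correct and mirrors the paper's own proof: both decompose $\lclip$ into two symmetric one-sided losses, identify each with the cross-entropy functional of \citet{lu2020neural}, and use the fact that the two summands share the same minimizer (simplex ETF with $\vU=\vV$) so that $\min(\gL(\vU,\vV)+\gL(\vV,\vU))=\min\gL(\vU,\vV)+\min\gL(\vV,\vU)$. The paper simply reproduces the Lu--Steinerberger Jensen/spectral argument in full rather than citing it as a black box; note that the alignment $\vu_i=\vv_i$ there comes from the Cauchy--Schwarz equality on the stacked quadratic form $\vv_{\text{stack}}^{\intercal}((N\mathbb{I}_N-\mathbf{1}_N\mathbf{1}_N^{\intercal})\otimes\mathbb{I}_d)\vu_{\text{stack}}$ (forcing $\vv_{\text{stack}}\propto M\vu_{\text{stack}}=N\vu_{\text{stack}}$), not from the per-coordinate bound $\vu_i^{\intercal}\vv_i\le 1$ you sketch.
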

Actually, many practical scenarios satisfy $N > d+1$. %
However, the approach used in~\citet{lu2020neural} cannot be directly applied for $N> d+1$, leaving it as an open problem. While solving the open problem for the general case seems difficult, we characterize the optimal solution for the specific case of $N=2d$, subject to the conditions stated below.

\begin{definition}[Symmetric and Antipodal]\label{def:sym_and_anti} Embedding matrices 
$\vU$ and $\vV$ are called \emph{symmetric and antipodal} if $(\vU, \vV)$ satisfies two properties: (i) Symmetric \ie $\vU=\vV$; (ii) Antipodal \ie 
for each $i \in [N]$, there exists $j(i)$ such that $\vu_{j(i)}=-\vu_i$.
\end{definition}

We conjecture that the optimal solutions for $N=2d$ are \emph{symmetric} and \emph{antipodal}. Note that the symmetric property holds for $N\leq d+1$ case, and the antipodality is frequently assumed in geometric problems such as the sphere covering problem in \citep{borodachov2022optimal}.

Thm.~\ref{thm:standard-full-batch-case2} shows that when $N=2d$, the optimal solution for the full-batch loss minimization, under a \emph{symmetric} and \emph{antipodal} configuration, form a cross-polytope which is defined as the following.

\begin{definition}[Simplex cross-polytope]
     We call a set of $N$ vectors $\{\vu\}_{i=1}^N$ form a simplex cross-polytope if, for all $i$, the following three conditions hold: $\|\vu_i\| = 1$; there exists a unique $j$ such that $\vu_i^\intercal\vu_j = -1$; and $\vu_i^\intercal\vu_k = 0$ for all $k \notin \{i, j\}$.
\label{def:cross-polytope}
\end{definition}
\begin{restatable}[Optimal solution when $N=2d$]{theorem}{thmPoly}    
\label{thm:standard-full-batch-case2}
Let 
\begin{align}
(\vU^{\star}, \vV^{\star}) :=\arg\min_{(\vU,\vV)\in\gA}\lclip(\vU,\vV)\quad\text{s.t.}\quad\|\vu_i\|=1, \|\vv_i\|=1\quad\forall i\in[N],    
\end{align}
where $\gA:=\{(\vU,\vV): \vU, \vV\text{ are symmetric and antipodal%
}\}$.
Then, the columns of $\vU^{\star}$ form a simplex cross-polytope for $N=2d$.
\end{restatable}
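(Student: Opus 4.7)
The plan is to exploit the two structural constraints in $\gA$ to collapse the $N=2d$ unit-norm optimization into one over only $d$ representative vectors, after which the problem decouples and elementary convexity identifies the optimum. First, under the symmetry $\vU=\vV$ the two halves of $\lclip$ coincide, and since $\vu_i^{\intercal}\vu_i=1$,
\begin{equation*}
\lclip(\vU,\vU) \;=\; \frac{2}{N}\sum_{i=1}^N \log\!\Bigl(\sum_{j=1}^N e^{\vu_i^{\intercal}\vu_j}\Bigr) - 2.
\end{equation*}
Using the antipodal property, I index the $N=2d$ columns as $\{\pm \vw_k\}_{k=1}^d$ for some unit vectors $\vw_1,\dots,\vw_d\in\R^d$. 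Because $\cosh$ is even, the denominator sum for any $\vu_i=\pm\vw_k$ equals
\begin{equation*}
\sum_{j=1}^N e^{\vu_i^{\intercal}\vu_j} \;=\; \sum_{\ell=1}^d \bigl(e^{\vw_k^{\intercal}\vw_\ell}+e^{-\vw_k^{\intercal}\vw_\ell}\bigr) \;=\; 2\sum_{\ell=1}^d \cosh(\vw_k^{\intercal}\vw_\ell),
\end{equation*}
which is blind to the sign choice. Summing over each antipodal pair doubles each term, so the objective reduces to
\begin{equation*}
\lclip(\vU,\vU) \;=\; \frac{2}{d}\sum_{k=1}^d \log\!\Bigl(2\sum_{\ell=1}^d \cosh(\vw_k^{\intercal}\vw_\ell)\Bigr) - 2.
\end{equation*}

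Next, I minimize this reduced objective via a pointwise bound. For each $k$ the inner sum equals $\cosh(1)+\sum_{\ell\neq k}\cosh(\vw_k^{\intercal}\vw_\ell)$, and since $\cosh(t)\geq 1$ with equality iff $t=0$, each term in the outer $\sum_k$ is individually lower bounded by $\log(2\cosh(1)+2(d-1))$. Monotonicity of $\log$ together with strict convexity of $\cosh$ implies that all these bounds hold with equality simultaneously iff $\vw_k^{\intercal}\vw_\ell=0$ for every $k\neq\ell$, i.e.\ $\{\vw_1,\ldots,\vw_d\}$ is an orthonormal basis of $\R^d$. Such a basis clearly exists, so the bound is attained and every minimizer satisfies $\vw_k^{\intercal}\vw_\ell = \delta_{k\ell}$. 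Consequently the $2d$ columns $\{\pm\vw_k\}$ of $\vU^{\star}$ satisfy $\vu_i^{\intercal}\vu_{j(i)}=-1$ for a unique partner $j(i)$ and $\vu_i^{\intercal}\vu_m=0$ for the remaining $N-2$ indices, matching Definition~\ref{def:cross-polytope}.

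The main technical hazard is making sure the reduction in the first step is genuinely lossless: the antipodal pairing $i\mapsto j(i)$ must be an involution so that the pairs partition $[N]$ into exactly $d$ blocks, and the evenness of $\cosh$ is essential for the denominator to be insensitive to the sign of $\vu_i$; if either property failed, extra cross-terms would survive and the separable lower bound would be loose. A secondary care point is that the pointwise bound must be sharp enough that its equality conditions characterize the minimizers rather than only the optimal value; this is what upgrades the argument from ``some cross-polytope attains the minimum'' to ``every symmetric and antipodal optimum is a cross-polytope'' as claimed.
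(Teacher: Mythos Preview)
Your argument is correct and notably more direct than the paper's, provided the antipodal constraint is read so that the $2d$ columns genuinely partition into $d$ pairs $\{\pm\vw_k\}$. The paper instead works in two stages: after the same symmetric reduction it applies Jensen's inequality to the concave map $t\mapsto\log(1+e^{-2}+t)$ to convert the loss into a pairwise energy $\sum_{i}\sum_{j\neq i} g(\|\vu_i-\vu_j\|^2)$ with $g$ convex and decreasing, and then invokes a separate lemma (an elementary Cohn--Kumar type bound) that minimizes any such energy over antipodal configurations via a second Jensen step together with $\norm{\sum_i\vu_i}^2\geq 0$. Your route replaces both stages by the single inequality $\cosh t\geq 1$, which is sharper and far more transparent; the paper's decomposition buys reusability (the same energy lemma is recycled in the mini-batch Theorem~\ref{thm:NcB-reaches-etf}) rather than simplicity here.

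The one place to be careful is exactly the hazard you flag. Definition~\ref{def:sym_and_anti} literally only requires that for each $i$ some index $j(i)$ satisfy $\vu_{j(i)}=-\vu_i$; it does not force the pairing to be a fixed-point-free involution, so unbalanced configurations such as $(\vv,\vv,\vv,-\vv)$ with $d=2$ are formally in $\gA$ yet cannot be written as $\{\pm\vw_1,\pm\vw_2\}$. For the index with $\vu_i=-\vv$ your termwise lower bound $\sum_j e^{\vu_i^\intercal\vu_j}\geq e+e^{-1}+2(d-1)$ then fails (here $e+3e^{-1}<e+e^{-1}+2$), so the pointwise argument does not close. The paper's route still covers this case because the slack in $\norm{\sum_i\vu_i}^2\geq 0$ absorbs the imbalance. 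If ``antipodal'' is meant in the strong multiset sense (which the geometric motivation and the cross-polytope target strongly suggest), your proof is complete and strictly simpler; under the literal definition you would need either to rule out unbalanced multiplicities first or to replace the termwise bound by a global one.
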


\emph{Proof Outline.} By the antipodality assumption, we can apply Jensen's inequality to $N-2$ indices without itself ${\vu}_i$ and antipodal point $-\vu_i$ for a given $i \in [N]$. Then we show that the simplex cross-polytope also minimizes this lower bound while satisfying the conditions that make the applications of Jensen's inequality tight. 

For the general case of $N>d+1$, excluding $N=2d,$ we still leave it as an open problem.
\subsection{
Mini-batch Contrastive Loss Optimization
}\label{sec:main_theory}

\begin{figure}
  \centering
  \subfigure[]{
    \centering
    \includegraphics[width=0.4\columnwidth]{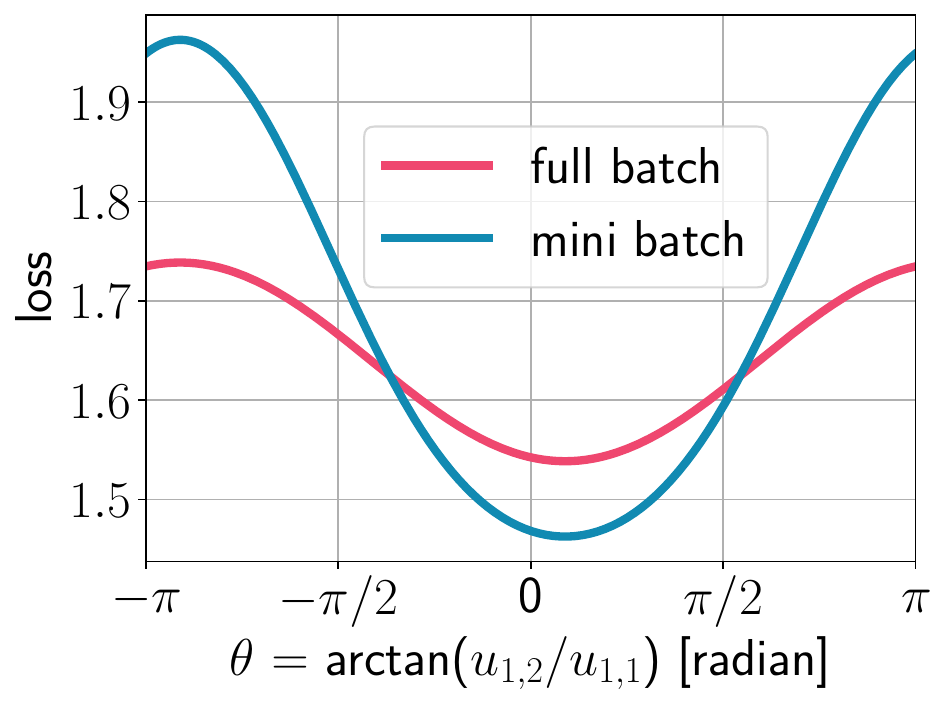}
    \label{fig:ful_mini_loss_compare}
  }
  \subfigure[]{
    \centering
    \includegraphics[width=0.55\columnwidth]{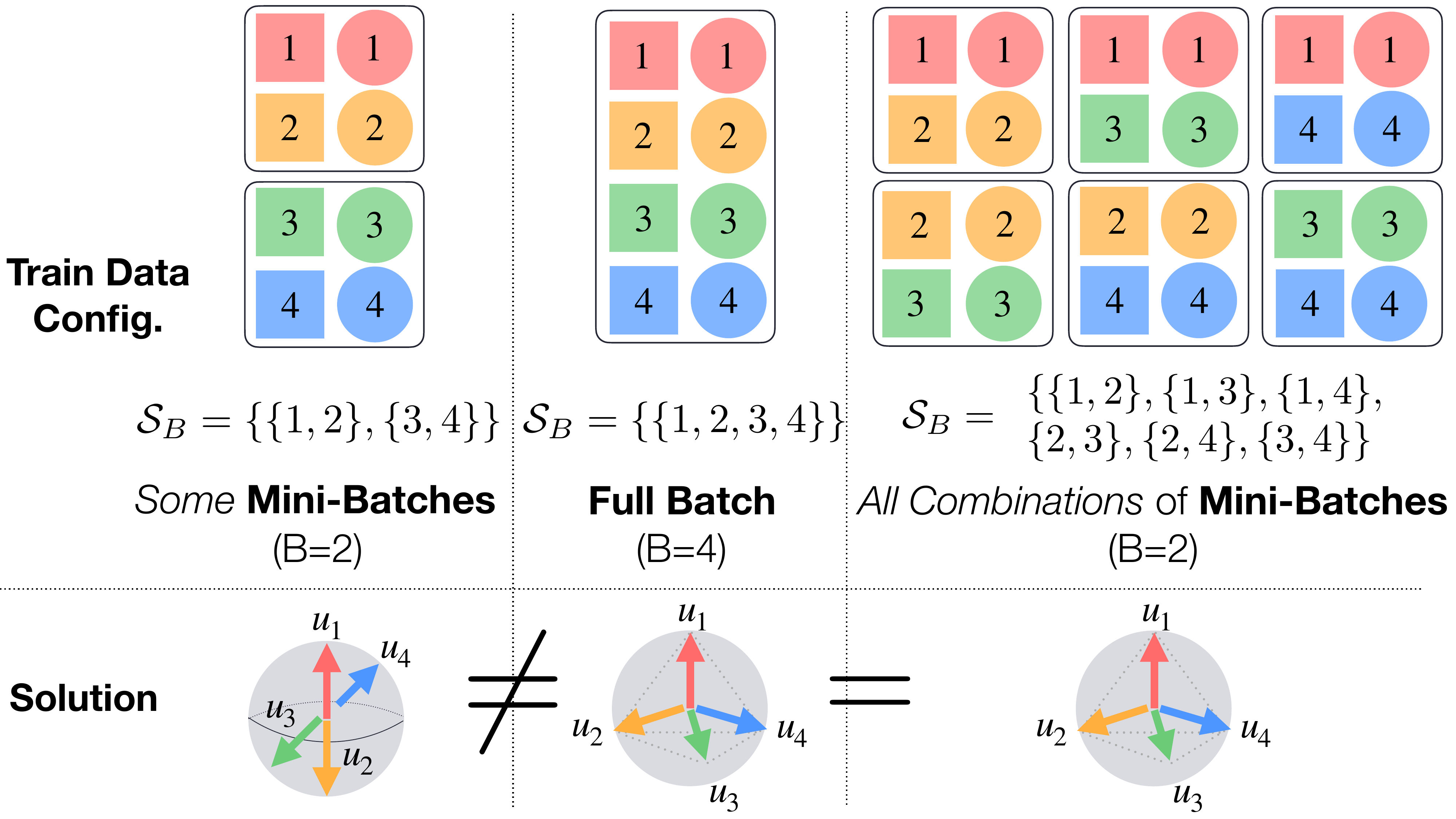}
    \label{fig:concept}
  }
  \caption{(a) Comparing mini-batch loss and full-batch loss when $N=10, B=2$, and $d=2$. We illustrate this by manipulating a single embedding vector $\vu_1$ while maintaining all other embeddings  ($\vv_1$ and $\{\vu_i, \vv_i\}_{i=2}^{10}$) at their optimal solutions. Specifically, $\vu_1 = [u_{1,1}, u_{1,2}]$ is varied as $[\cos(\theta), \sin(\theta)]$ for $\theta\in[-\pi,\pi]$. While the two loss functions are not identical, corroborating Prop.\ref{prop:prop1}, their minimizers align, providing empirical support for Thm.~\ref{thm:NcB-reaches-etf}; (b) The relationship between full-batch and mini-batch optimization in contrastive learning. Consider optimizing $N=4$ pairs of $d=3$ dimensional embedding vectors $\{(\vu_i, \vv_i)\}_{i=1}^N$ where $\vu_i$ and $\vv_i$ are shown as colored square and circle, respectively. The index $i$ is written in the square/circle. The black rounded box represents a batch. We compare three batch selection options: (i) full batch, \ie $B=4$, (ii) all $\binom{N}{B}=6$ mini-batches with size $B=2$, and (iii) some mini-batches. Here, $\gS_B$ is the set of mini-batches where each mini-batch is represented by the set of constituent samples' indices. Our theoretical/empirical findings are: the optimal embedding that minimizes full-batch loss and the one that minimizes the sum of $\binom{N}{B}$ mini-batch losses are identical, while the one that minimizes the mini-batch losses summed over only a strict \textit{subset} of $\binom{N}{B}$ batches does not guarantee the negative correlation between $\vu_i$ and $\vu_j$ for $i \ne j$.
  This illustration is supported by our mathematical results in Thms.~\ref{thm:NcB-reaches-etf} and ~\ref{thm:sub-batch}. 
  }
  \label{fig:full-vs-mini}
\end{figure}

Here we consider the mini-batch contrastive loss optimization problem, 
where we first choose multiple mini-batches of size $B$ and then find $\vU,\vV$ that minimize the sum of contrastive losses computed for the chosen mini-batches. 
Note that this is the loss that is typically considered in the contrastive learning since computing the full-batch loss is intractable in practice.
Let us consider a subset of all possible $\binom{N}{B}$ mini-batches and denote their indices by
$\gS_{B}\subseteq\left[\binom{N}{B}\right]$. For a fixed $\gS_{B}$, the mini-batch loss optimization problem is formulated as:
\begin{equation}
\label{prob:mini-batch}
\min_{\vU, \vV}\ \lclip_{\op{mini}}(\vU,\vV;\gS_B)\quad 
\text{s.t.} \quad \lVert \vu_i \rVert = 1, \lVert \vv_i \rVert = 1\; \quad \forall i \in [N],
\end{equation}
where the loss of given mini-batches is
$    \lclip_{\op{mini}}(\vU,\vV;\gS_B)
:=\frac{1}{|\gS_B|}\sum_{i\in \gS_B} \lclip (\vU_{\gB_i}, \vV_{\gB_i}).$
To analyze the relationship between the full-batch loss minimization in Eq.~(\ref{prob:standard-full-batch}) and the mini-batch loss minimization in Eq.~(\ref{prob:mini-batch}), 
we first compare the objective functions of two problems as below.

\begin{restatable}{proposition}{propOne}
\label{prop:prop1}
The mini-batch loss and full-batch loss are not identical, nor is one a simple scaling of the other by a constant factor. In other words, when $\gS_B=\left[\binom{N}{B}\right]$,
for all $B \ge 2$, there exists no constant $c$ such that $\lclip_{\op{mini}}(\vU,\vV; \gS_B)= c\cdot\lclip(\vU,\vV) \quad \text{for all} \quad \vU, \vV$.
\end{restatable}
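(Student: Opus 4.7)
The plan is to prove the claim by contradiction. Assume a constant $c$ exists with $\lclip_{\op{mini}}(\vU,\vV;\gS_B)=c\,\lclip(\vU,\vV)$ for every feasible $(\vU,\vV)$, where $\gS_B=\left[\binom{N}{B}\right]$ and $2\le B<N$ (the case $B=N$ is trivial). Evaluating both losses at the coincident configuration $\vu_i=\vv_i=\ve$ yields $\lclip=2\log N$ and $\lclip_{\op{mini}}=2\log B$, forcing $c=\log B/\log N$. It then suffices to exhibit a perturbation of this configuration for which the second-order terms of the two losses are \emph{not} in this same ratio.

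Concretely, I would consider the one-parameter family $\vu_i=\cos\epsilon\,\ve+\sin\epsilon\,\bm{\xi}_i$ and $\vv_j=\cos\epsilon\,\ve+\sin\epsilon\,\bm{\eta}_j$ with $\bm{\xi}_i,\bm{\eta}_j$ unit vectors orthogonal to $\ve$, so that $\vu_i^\intercal\vv_j=1+\sin^2\epsilon\,(s_{ij}-1)$ where $s_{ij}:=\bm{\xi}_i^\intercal\bm{\eta}_j$. A Taylor expansion of each log-sum-exp, together with the identities $\binom{N-1}{B-1}/\binom{N}{B}=B/N$ and $\binom{N-2}{B-2}/\binom{N}{B}=B(B-1)/[N(N-1)]$ used when averaging over the $\binom{N}{B}$ mini-batches, yields
\begin{align*}
\lclip(\vU,\vV)&=2\log N-\frac{2(N-1)}{N^2}\,T\sin^2\epsilon+O(\epsilon^4),\\
\lclip_{\op{mini}}(\vU,\vV;\gS_B)&=2\log B-\frac{2(B-1)}{BN}\,T\sin^2\epsilon+O(\epsilon^4),
\end{align*}
where $T:=\sum_i s_{ii}-\tfrac{1}{N-1}\sum_{i\ne j} s_{ij}$. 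For any perturbation with $T\ne 0$ (easily arranged, e.g.\ by taking $\bm{\xi}_i=\bm{\eta}_i=+\vf$ for $i$ in some proper subset of $[N]$ and $-\vf$ otherwise, for a unit vector $\vf$ orthogonal to $\ve$), the proportionality $\lclip_{\op{mini}}=c\,\lclip$ forces both $c=\log B/\log N$ (from the constant terms) and $c=(B-1)N/[B(N-1)]$ (from the $\sin^2\epsilon$ coefficients), which combine into the identity $\tfrac{B\log B}{B-1}=\tfrac{N\log N}{N-1}$.

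The main obstacle is ruling out this identity for every $2\le B<N$. I would handle it by noting that $g(x):=\tfrac{x\log x}{x-1}$ is strictly increasing on $(1,\infty)$: its derivative equals $(x-\log x-1)/(x-1)^2$, which is positive because $\log x<x-1$ strictly for $x>1$. Hence $g(B)\ne g(N)$ whenever $B\ne N$, producing the desired contradiction. The computation itself is elementary; the only genuinely delicate step is the combinatorial averaging that produces the factor $(B-1)/(BN)$ in the mini-batch quadratic correction, which the two counting identities above make transparent.
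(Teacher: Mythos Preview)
Your proof is correct, but it takes a substantially more elaborate route than the paper's. The paper simply evaluates both losses at two discrete configurations: the all-coincident one $\vu_i=\vv_i=\ve_1$ (giving $c=\log B/\log N$, as you found) and the orthonormal one $\vu_i=\vv_i=\ve_i$ (giving $c=\frac{\log(e+B-1)-1}{\log(e+N-1)-1}$). These two values of $c$ visibly disagree, and no further analysis is needed. By contrast, you extract a second constraint on $c$ from a first-order perturbation of the all-coincident point, which requires the Taylor expansion, the combinatorial averaging identities, and the monotonicity of $g(x)=x\log x/(x-1)$ to finish. Your approach has one minor advantage: it only needs $d\ge 2$ (room for a single direction $\vf\perp\ve$), whereas the paper's orthonormal configuration tacitly needs $d\ge N$. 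It also makes transparent, via the matching coefficients $\tfrac{2(N-1)}{N^2}T$ and $\tfrac{2(B-1)}{BN}T$, exactly how the mini-batch averaging rescales the first-order deviation from the coincident point---a structural observation that the paper's two-point comparison does not reveal. But if your goal is just the stated proposition, the paper's argument is considerably shorter.
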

\vspace{-2mm}

We illustrate this proposition by visualizing the two loss functions in Fig.~\ref{fig:ful_mini_loss_compare} when $N=10, B=2$, and $d=2$. We visualize it along a single embedding vector $\vu_1$ by freezing all other embeddings ($\vv_1$ and $\{\vu_i, \vv_i\}_{i=2}^{10}$) at the optimal solution and varying $\vu_1 = [u_{1,1}, u_{1,2}]$ as $[\cos(\theta), \sin(\theta)]$ for $\theta \in [-\pi, \pi]$. One can confirm that two losses are not identical (even up to scaling).

Interestingly, the following result shows that the optimal solutions of both problems are identical. 

\begin{restatable}[Optimization with all possible ${\binom{N}{B}}$ mini-batches]{theorem}{thmTwo}    
\label{thm:NcB-reaches-etf}
Suppose $B\geq 2$. The set of minimizers of the $\binom{N}{B}$ mini-batch problem in Eq.~(\ref{prob:mini-batch}) is \emph{the same} as that of the full-batch problem in Eq.~(\ref{prob:standard-full-batch}) for two cases: (i) $N\leq d+1$, and (ii) $N = 2d$ and the pairs ($\vU$, $\vV$) are restricted to those satisfying the conditions stated in Def.~\ref{def:sym_and_anti}.
In such cases, the solutions $(\vU,\vV)$ for the $N \choose B$ mini-batch optimization problem satisfies the following: Case (i) $\{{\vu_i}\}_{i=1}^N$ forms a simplex ETF and ${\vu}_i={\vv}_i$ for all $i\in [N]$; Case (ii): $\{{\vu_i}\}_{i=1}^N$ forms a simplex cross-polytope.

\end{restatable}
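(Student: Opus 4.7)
The plan is to exploit the full pair-permutation symmetry that $\lclip_{\op{mini}}(\cdot\,;\,[\binom{N}{B}])$ inherits from averaging over \emph{all} size-$B$ mini-batches, and reduce the characterization of its minimizers to the full-batch characterizations already established in Lemma~\ref{thm:standard-full-batch-case1} and Theorem~\ref{thm:standard-full-batch-case2}.

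I would first rewrite the mini-batch objective by swapping the outer sum over batches with the inner sum over indices within each batch. Using the combinatorial identities that each index $i$ belongs to exactly $\binom{N-1}{B-1}$ of the $\binom{N}{B}$ mini-batches, and each ordered pair $(i,j)$ with $i \neq j$ to exactly $\binom{N-2}{B-2}$ of them, one obtains
\[
\lclip_{\op{mini}}(\vU,\vV;[\tbinom{N}{B}]) \;=\; \frac{1}{N}\sum_{i=1}^N \mathbb{E}_{\gB \ni i}\!\left[-\vu_i^\intercal\vv_i + \log \sum_{j\in\gB} e^{\vu_i^\intercal\vv_j}\right] \;+\; (\text{symmetric term}),
\]
where $\gB$ is drawn uniformly over the size-$B$ batches containing $i$. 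This representation shows that $\lclip_{\op{mini}}$ depends on $(\vU,\vV)$ only through the inner products $M_{ij} = \vu_i^\intercal \vv_j$; moreover, it is convex in $M$ (log-sum-exp of linear functions is convex) and invariant under joint permutations of the $N$ pair labels.

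Next, I would verify that the candidate extremizers from Lemma~\ref{thm:standard-full-batch-case1} (case (i)) and Theorem~\ref{thm:standard-full-batch-case2} (case (ii)) are indeed minimizers of $\lclip_{\op{mini}}$. The key observation is that at the simplex ETF, $\vu_i^\intercal \vv_j = -1/(N-1)$ is the same for every $j \neq i$, so $\sum_{j\in\gB} e^{\vu_i^\intercal \vv_j}$ is deterministic across all $\gB \ni i$; at the cross-polytope under antipodality, the sum takes only a small number of values determined by whether the antipode of $\vu_i$ is included, and a similar determinism survives after averaging. In both cases Jensen's inequality for the concave $\log$ is tight, so $\lclip_{\op{mini}}$ at these candidates collapses in closed form to a full-batch-like objective whose negative-pair contributions are rescaled by $(B-1)/(N-1)$. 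The arguments of Lemma~\ref{thm:standard-full-batch-case1} and Theorem~\ref{thm:standard-full-batch-case2} apply verbatim to this rescaled objective since the coefficient $(B-1)/(N-1) > 0$ does not affect the structure of the optimum. For the converse direction, I would argue that any minimizer of $\lclip_{\op{mini}}$ must inherit the pair-permutation symmetry and satisfy the first-order (KKT) conditions on the unit-norm constraints; convexity in $M$ together with exchangeability of the pair indices reduces the problem to a one-parameter family indexed by the common off-diagonal value of $M$, and the same arguments as before pin that parameter down uniquely.

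The main obstacle I foresee is the direction of Jensen's inequality. When attempting to lower-bound $\mathbb{E}_{\gB\ni i}[\log\sum_{j\in\gB}e^{\vu_i^\intercal\vv_j}]$, concavity of $\log$ naturally yields only the \emph{upper} bound $\log\!\big(e^{\vu_i^\intercal\vv_i} + \tfrac{B-1}{N-1}\sum_{j\neq i}e^{\vu_i^\intercal\vv_j}\big)$, which is the wrong sign for bounding $\lclip_{\op{mini}}$ from below. Closing this gap will require carefully combining the tightness of Jensen's at the symmetric configurations with the strict convexity / KKT analysis sketched above, in the same spirit as the ``Jensen applied to $N-2$ indices'' reduction used in the proof outline of Theorem~\ref{thm:standard-full-batch-case2}.
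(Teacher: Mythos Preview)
Your proposal has a genuine gap, and you have already put your finger on it: the direction of Jensen's inequality. The workaround you sketch---convexity in the Gram matrix $M$ together with permutation symmetry and KKT conditions---does not close the gap. The feasible set of Gram matrices coming from unit-norm columns is not convex, so convexity of the objective in $M$ does not yield uniqueness or symmetry of minimizers. And permutation invariance of the objective does not by itself force any given minimizer to be permutation-invariant; it only forces the \emph{set} of minimizers to be. So the ``reduce to a one-parameter family'' step is unjustified as written.

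The paper avoids the Jensen direction problem entirely, and the fix is instructive. For Case~(i) it does \emph{not} average $\log$ over batches. Instead, inside each batch it applies Jensen to the convex function $t\mapsto e^t$ (so $\sum_{k\in\gB,\,k\neq j} e^{\vu_j^\intercal(\vv_k-\vv_j)}\ge (B-1)\exp(\text{avg})$), and then applies Jensen a second time to the convex function $t\mapsto\log(1+(B-1)e^t)$ across the double sum over all batches and all $j\in\gB$. Both applications are lower bounds on the loss. An exact counting argument (each $j$ appears in $\binom{N-1}{B-1}$ batches, each ordered pair $(j,k)$ in $\binom{N-2}{B-2}$) collapses the exponent to precisely the same quantity $\frac{\vu^\intercal\vv - N\sum_i\vu_i^\intercal\vv_i}{N(N-1)}$ that appears in the full-batch lower bound, at which point Lemma~\ref{thm:standard-full-batch-case1} applies verbatim. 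For Case~(ii) the trick is different: Jensen is applied \emph{within a single batch} to the concave function $t\mapsto\log(c+t)$, splitting one $\log$ of a sum over $k\in\gB$ into an average of $\log$'s indexed by $k$. This is the correct direction because concavity is being used inside a single term, not across an expectation of terms; the subsequent sum over all batches is then handled by exact counting, not by another inequality, and one reduces to a double sum $\sum_j\sum_{k\neq j}$ to which Lemma~\ref{lemma:elementary_cohn-kumar} applies. In short: never take an expectation over batches through $\log$; instead, push Jensen inside each batch (or use the convex composite $\log(1+ae^{bt})$), and let combinatorics do the averaging exactly.
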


\emph{Proof Outline.} Similar to the proof of Lem.~\ref{thm:standard-full-batch-case1}, we bound the objective function from below using Jensen's inequality. Then, we show that this lower bound is equivalent to a scaling of the bound from the proof of Lem.~\ref{thm:standard-full-batch-case1}, by using careful counting arguments. Then, we can simply repeat the rest of the proof to show that the simplex ETF also minimizes this lower bound while satisfying the conditions that make the applications of Jensen's inequality tight. %

Now, we present mathematical results specifying the cases when the solutions of mini-batch optimization and full-batch optimization \emph{differ}. First, we show that when $B=2$, minimizing the mini-batch loss over any strict subset of $\binom{N}{B}$ batches, is not equivalent to minimizing the full-batch loss.

\begin{restatable}[Optimization with fewer than ${\binom{N}{B}}$ mini-batches]{theorem}{thmThree}
\label{thm:sub-batch} Suppose $B=2$ and $N\leq d+1$. Then, the minimizer of Eq.~(\ref{prob:mini-batch}) for $\gS_B \subsetneq \left[{\binom{N}{B}}\right]$ is not the minimizer of the full-batch optimization in Eq.~(\ref{prob:standard-full-batch}).
\end{restatable}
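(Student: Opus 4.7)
The plan is to prove the contrapositive by showing that the simplex ETF --- which by Lem.~\ref{thm:standard-full-batch-case1} is the unique (up to orthogonal transform) minimizer of the full-batch problem Eq.~(\ref{prob:standard-full-batch}) --- fails to be a critical point of the strict-subset mini-batch loss on the product of unit spheres, and therefore cannot appear among its minimizers either.

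First I would invoke Thm.~\ref{thm:NcB-reaches-etf}: the ETF is a critical point of the full $\binom{N}{2}$ mini-batch objective, so the tangent-space gradient sum $\sum_{k=1}^{\binom{N}{2}}\nabla\lclip(\vU_{\gB_k},\vV_{\gB_k})$ vanishes there. Decomposing the sum along $\gS_B$ and its complement $\gS_B^c := [\binom{N}{2}]\setminus\gS_B$, the ETF is a critical point of $\lclip_{\op{mini}}(\cdot;\gS_B)$ exactly when the complement contribution $\sum_{k\in\gS_B^c}\nabla\lclip(\vU_{\gB_k},\vV_{\gB_k})$ vanishes; so it suffices to exhibit a tangent direction in which this complement contribution is nonzero.

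Next I would compute, for one batch $\{a,b\}$, the tangent gradient $\nabla_{\vu_a}\lclip(\vU_{\{a,b\}},\vV_{\{a,b\}})$ at the ETF. Rewriting each softmax term as $-\log\sigma(\cdot)$ and using $\vu_i=\vv_i$ with $\vu_i^{\intercal}\vu_j=-1/(N-1)$, all four sigmoid weights coincide at the positive constant $\beta:=\sigma(-N/(N-1))$, and a direct differentiation gives Euclidean gradient $-\beta(\vu_a-\vu_b)$, whose projection onto $\vu_a^{\perp}$ equals $\beta\bigl[\vu_b + \tfrac{1}{N-1}\vu_a\bigr]$. Summing over the missing partners $M_a := \{b\neq a : \{a,b\}\in\gS_B^c\}$ produces a tangent vector proportional to $\sum_{b\in M_a}\vu_b + \tfrac{|M_a|}{N-1}\vu_a$, whose squared norm collapses, via the ETF inner products, to $\tfrac{N\,|M_a|\,(N-1-|M_a|)}{(N-1)^2}$, which vanishes iff $|M_a|\in\{0,N-1\}$.

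To close the argument I would show that a ``good'' index $a^{\star}$ with $0 < |M_{a^{\star}}| < N-1$ always exists when $\gS_B$ is a nonempty strict subset. The key observation is that an index with $|M_a|=0$ shares a $\gS_B$-batch with every other index while one with $|M_a|=N-1$ shares none, so the two types cannot coexist; hence either every $|M_a|=0$ (giving $\gS_B=[\binom{N}{2}]$) or every $|M_a|=N-1$ (giving $\gS_B=\emptyset$), both ruled out by hypothesis. At such an $a^{\star}$ the tangent gradient of $\lclip_{\op{mini}}(\cdot;\gS_B)$ is nonzero at $\vu_{a^{\star}}$, so the ETF is not a critical point of Eq.~(\ref{prob:mini-batch}), hence not a minimizer. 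The main technical obstacle I anticipate is the explicit gradient bookkeeping at the ETF --- confirming that the four sigmoid weights collapse so the tangent projection assumes the clean form $\vu_b + \tfrac{1}{N-1}\vu_a$; the squared-norm algebra and the combinatorial dichotomy on $|M_a|$ are routine once this is in place.
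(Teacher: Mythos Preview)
Your argument is correct but follows a genuinely different route from the paper. The paper's proof is a direct value comparison: assuming without loss of generality that the pair $\{1,2\}$ is absent from $\gS_B$, it constructs an explicit competitor $(\widetilde{\vU},\widetilde{\vV})$ by placing $\tilde{\vu}_2,\dots,\tilde{\vu}_N$ on an $(N-1)$-point simplex ETF and setting $\tilde{\vu}_1=\tilde{\vu}_2$ (with $\tilde{\vv}_i=\tilde{\vu}_i$). Every surviving batch then has off-diagonal inner product $-1/(N-2)$ instead of $-1/(N-1)$, so each term $\log\bigl(1+\exp(-1/(N-2)-1)\bigr)$ is strictly smaller than the corresponding ETF term $\log\bigl(1+\exp(-1/(N-1)-1)\bigr)$, contradicting optimality of the ETF. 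Your approach instead shows the ETF fails the first-order optimality condition on the product of spheres: you compute the tangent gradient of a single $B=2$ batch at the ETF, sum over the complement $\gS_B^c$, reduce the squared norm to $N\,|M_a|\,(N-1-|M_a|)/(N-1)^2$, and use a clean combinatorial dichotomy to produce an index with $0<|M_a|<N-1$. The paper's construction is more elementary and yields an explicit improving direction with a quantified gap; your argument is more structural, since the formula for the gradient norm characterizes exactly which subsets $\gS_B$ could leave the ETF critical and handles all missing pairs at once rather than singling one out.
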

\emph{Proof Outline.} We show that  there exist embedding vectors that are not the simplex ETF, and have a strictly lower objective value.
This implies that the optimal solution of any set of mini-batches that does not contain all $\binom{N}{2}$ mini-batches is not the same as that of the full-batch problem.%

The result of Thm.~\ref{thm:sub-batch} is extended to the general case of $B \geq 2$, under some mild assumption; please check Prop.~\ref{thm:ncb-is-necessary1} and~\ref{thm:ncb-is-necessary2} in Appendix~\ref{sec4: theory}.
Fig.~\ref{fig:concept} summarizes the main findings in this section.%
\section{Ordered Stochastic Gradient Descent for Mini-Batch Contrastive Learning}\label{sec:osgd}

Recall that the optimal embeddings for the full-batch optimization problem in Eq.~(\ref{prob:standard-full-batch}) can be obtained by minimizing the sum of $\binom{N}{B}$ mini-batch losses, according to Thm.~\ref{thm:NcB-reaches-etf}. An easy way of approximating the optimal embeddings is using gradient descent (GD) on the sum of losses for $\binom{N}{B}$ mini-batches, or to use a stochastic approach which applies GD on the loss for a randomly chosen mini-batch. Recent works found that applying GD on selective batches outperforms SGD in some cases~\citep{kawaguchi2020ordered, lu2021general, loshchilov2015online}.
A natural question arises: does this hold for mini-batch \emph{contrastive} learning? Specifically, (i) Is SGD enough to guarantee good convergence on mini-batch contrastive learning?, and (ii) Can we come up with a batch selection method that outperforms vanilla SGD?  
To answer this question:
\begin{itemize}[leftmargin=0.3cm] 
    \item We show that Ordered SGD (OSGD)~\cite{kawaguchi2020ordered} can potentially accelerate convergence compared to vanilla SGD in a demonstrative toy example (Sec.~\ref{subsec:toy_example}). 
    We also show that the convergence results from \citet{kawaguchi2020ordered} can be extended to mini-batch contrastive loss optimization (Sec.~\ref{subsec:osgd_convergence}).
    \item We reformulate the batch selection problem into a min-cut problem in graph theory~\citep{cormen2022introduction}, by considering a graph with $N$ nodes where each node is each positive pair and each edge represents a proxy to the contrastive loss between two nodes. 
    This allows us to devise an efficient batch selection algorithm by leveraging spectral clustering~\citep{ng2001spectral} (Sec.~\ref{subsec:spectral_clustering}). 
\end{itemize}

\subsection{Convergence Comparison in a Toy Example: OSGD vs. SGD}\label{subsec:toy_example}

This section investigates the convergence of two gradient-descent-based methods, OSGD and SGD. The below lemma shows that the contrastive loss is geodesic non-quasi-convex, which implies the hardness of proving the convergence of gradient-based methods for contrastive learning in Eq.~(\ref{prob:standard-full-batch}). 
\begin{restatable}{lemma}{lclipnonquasiconvex} %
\label{lem:lclipnonquasiconvex}
Contrastive loss $\lclip(\vU,\vV)$ %
is a geodesic non-quasi-convex function of $\vU, \vV$ on $\gT = \{ (\vU, \vV): \lVert\vu_i\rVert = \lVert\vv_i\rVert=1, \forall i\in[N]\}$. 
\end{restatable}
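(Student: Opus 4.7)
The plan is to refute geodesic quasi-convexity by exhibiting an explicit geodesic $\gamma:[0,1]\to\gT$ together with an intermediate time $t^\star\in(0,1)$ for which $\lclip(\gamma(t^\star))>\max\{\lclip(\gamma(0)),\lclip(\gamma(1))\}$. Since $\gT$ is the product manifold $(S^{d-1})^{2N}$ equipped with the product metric, any path whose $2N$ coordinates independently trace constant-speed great-circle arcs on $S^{d-1}$ is a geodesic, so it suffices to choose $\gamma$ coordinatewise.

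I would present the core construction in the simplest nontrivial setting $N=2,\ d=2$ and afterwards indicate how it extends. Define the candidate path
\[
\vu_1(t)=(\cos\pi t,\ \sin\pi t),\qquad \vu_2(t)=(-\cos\pi t,\ \sin\pi t),\qquad \vv_i(t)=\vu_i(t)\ \text{for}\ i\in\{1,2\}.
\]
All four coordinates traverse great-circle arcs of length $\pi$ on $S^1$ at constant speed $\pi$, so this defines a genuine geodesic on $\gT$. At the endpoints $t\in\{0,1\}$ one has $\vu_2(t)=-\vu_1(t)$ and $\vv_i(t)=\vu_i(t)$; the configuration is a simplex ETF and is thus a global minimizer of $\lclip$ by Lem.~\ref{thm:standard-full-batch-case1}. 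At the midpoint $t^\star=\tfrac12$ every coordinate collapses to $(0,1)$, so all relevant inner products equal $+1$.

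Plugging into Eq.~\eqref{eq:full_CLIP_loss} yields $\lclip(\gamma(0))=\lclip(\gamma(1))=2[\log(e+e^{-1})-1]$ and $\lclip(\gamma(\tfrac12))=2\log 2$; the strict inequality $2\log 2 > 2[\log(e+e^{-1})-1]$ simplifies to $4e^2>e^2+1$, which is trivially true. This directly produces the required violation of quasi-convexity along $\gamma$.

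For general $N\ge 2$ and sufficiently large $d$, I would embed the above construction in the first two coordinates of $S^{d-1}$ and freeze the remaining $2(N-2)$ embeddings at fixed unit vectors lying in the orthogonal complement, so that the inner products $\vu_i^\intercal\vv_j$ and $\vv_i^\intercal\vu_j$ involving any index $\ge 3$ become $t$-independent. The only $t$-dependent contributions to $\lclip(\gamma(t))$ then collapse to a monotone function of $\cos(2\pi t)$ plus an additive constant, and the midpoint-versus-endpoint strict inequality carries over verbatim. The main obstacle, aside from routine bookkeeping, is verifying this clean factorization of the $t$-dependence through the single scalar $\vu_1(t)^\intercal\vu_2(t)=-\cos(2\pi t)$; the underlying analytical fact powering the inequality is that the softmax cross-entropy strictly prefers $\vu_1$ and $\vu_2$ to be anti-aligned rather than aligned, so the ETF endpoints sit at a lower loss than the collapsed midpoint.
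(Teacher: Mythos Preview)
Your proof is correct and takes a genuinely different route from the paper. The paper constructs an ad hoc numerical counterexample: it picks two specific configurations $(\vU^1,\vV^1)$ and $(\vU^2,\vV^2)$ in $N=d=2$, forms the normalized midpoint $(\vU^3,\vV^3)$, and verifies by direct evaluation that $\lclip(\vU^3,\vV^3)\approx 2.798 > 2.773\approx\max\{\lclip(\vU^1,\vV^1),\lclip(\vU^2,\vV^2)\}$. Your construction is more conceptual: you take the two endpoints to be global minimizers (the two simplex ETFs related by swapping $\vu_1\leftrightarrow\vu_2$), parametrize the entire geodesic explicitly, and reduce the endpoint-versus-midpoint comparison to the analytic inequality $2e>e+e^{-1}$ (your stated simplification ``$4e^2>e^2+1$'' should read $2e^2>e^2+1$, but this is immaterial). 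This buys you a cleaner, computation-free argument and a nice structural insight: non-quasi-convexity arises precisely because distinct global minima are connected by geodesics passing through the fully collapsed configuration. The paper's example, by contrast, avoids one subtlety you incur: in your construction each sphere coordinate connects antipodal points, so the minimizing geodesic is non-unique. This does not invalidate the counterexample under the standard definition (quasi-convexity must hold along \emph{every} geodesic), but it is worth flagging; a tiny perturbation of the endpoints would remove the degeneracy while preserving the strict inequality by continuity, should a referee insist on unique minimizing geodesics. Your sketch for general $N$ and $d$ is also sound: with $\vu_i,\vv_i$ for $i\ge 3$ frozen in the orthogonal complement, the only $t$-dependent inner product is $\vu_1(t)^{\intercal}\vv_2(t)=-\cos(2\pi t)$, and the $i\in\{1,2\}$ summands in $\lclip$ become $\log\bigl(e+e^{-\cos(2\pi t)}+(N-2)\bigr)-1$, which is strictly larger at $t=\tfrac12$ than at $t\in\{0,1\}$.
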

We provide the proof in Appendix~\ref{sec5:theory}.

In order to compare the convergence of OSGD and SGD, we focus on a toy example where convergence to the optimal solution is achievable with appropriate initialization. 
Consider a scenario where we have $N=4$ embedding vectors $\{\vu_i\}_{i=1}^N$ with $\vu_i \in \sR^2$. Each embedding vector is defined as 
$\vu_1 =(\cos\theta_1, \sin \theta_1); \vu_2 =(\cos\theta_2, -\sin \theta_2); \vu_3 =(-\cos\theta_3, -\sin \theta_3); \vu_4 =(-\cos\theta_4, \sin \theta_4)$ for parameters $\{ \theta_i \}_{i=1}^n$. Over time step $t$, we consider updating the parameters $\vtheta^{(t)}:=[\theta_1^{(t)},\theta_2^{(t)},\theta_3^{(t)},\theta_4^{(t)}]$ using gradient descent based methods. For all $i$, the initial parameters are set as 
$\theta_i^{(0)}=\epsilon  > 0$, and the other embedding vectors are initialized as $\vv_i^{(0)} = \vu_i^{(0)}$. This setting is illustrated in Fig.~\ref{fig:toy_example_fig}.

\begin{figure}
  \centering
  \subfigure[]{
    \centering
    \includegraphics[width=0.5\columnwidth]{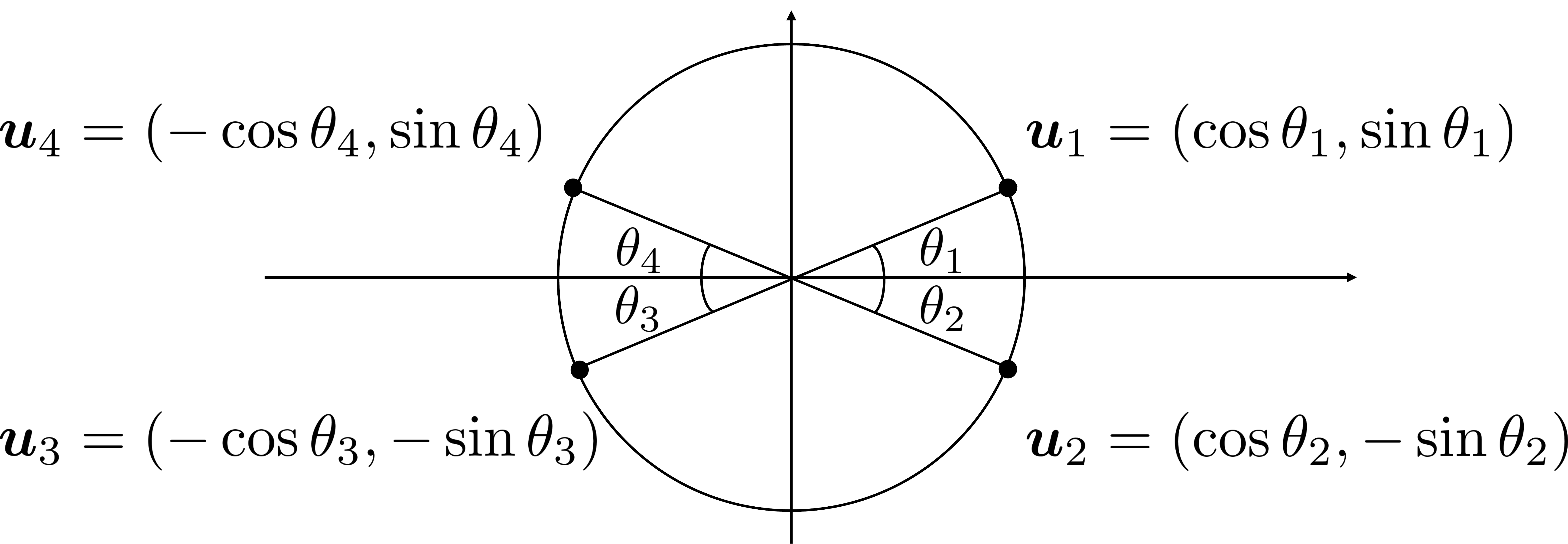}
    \label{fig:toy_example_fig}
  }
  \subfigure[]{
    \centering
    \includegraphics[width=0.45\columnwidth]{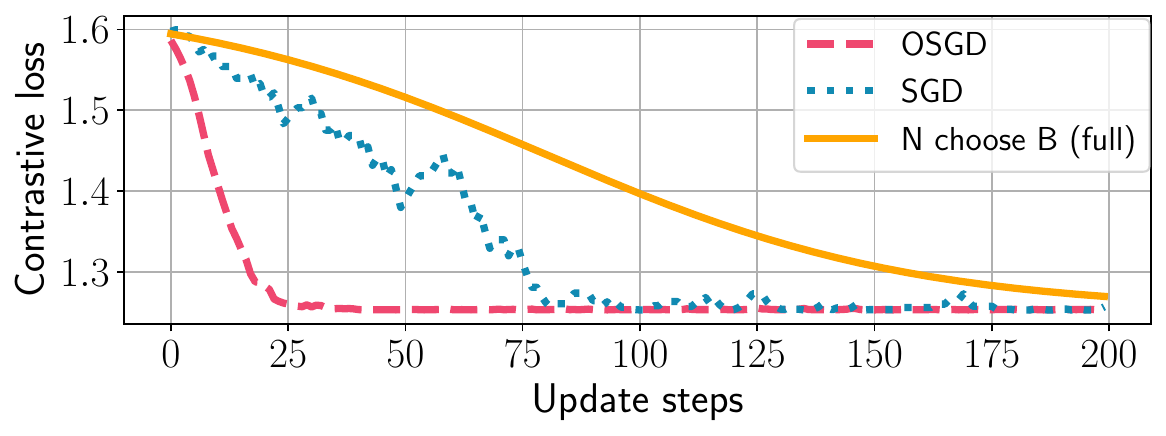}
    \label{fig:toy_example_loss}
  }
  \caption{(a) Toy example considered in Sec.~\ref{subsec:toy_example}; (b) The training loss curves of three algorithms (OSGD, SGD, and $\binom{N}{B}$ full-batch gradient descent) applied on the toy example when $N=4$ and $B=2$. The x-axis represents the number of update steps, while the y-axis displays the loss in Eq.~(\ref{eq:full_CLIP_loss}). OSGD converges the fastest among the three methods.}
  \label{fig:toy_example}
\end{figure}

At each time step $t$, each learning algorithm
begins by selecting a mini-batch $\gB^{(t)}\subset \left\{1,2,3,4\right\}$ with batch size $|\gB^{(t)}|=2$. SGD \emph{randomly} selects a mini-batch, while OSGD selects a mini-batch as follows: $\gB^{(t)} =  \arg \max \limits_{\gB\in \gS} \lclip (\vU_{\gB}(\vtheta^{(t)}), \vV_{\gB} (\vtheta^{(t)}))$.
Then, the algorithms update $\vtheta^{(t)}$ using gradient descent on $\lclip(\vU_{\gB},\vV_{\gB})$ with a learning rate $\eta$: $\vtheta^{(t+1)} =\vtheta^{(t)} - \eta \nabla_{\vtheta} \lclip (\vU_{\gB^{(t)}},\vV_{\gB^{(t)}})$.
For a sufficiently small margin $\rho > 0$, let $T_{\textnormal{OSGD}}, T_{\textnormal{SGD}}$ be the minimal time required for the algorithms to reach the condition $\mathbb{E}[\vtheta^{(T)}] \in (\pi/4-\rho, \pi/4)^{N}$. 
Under this setting, the following theorem compares OSGD and SGD, in terms of the lower bound on the time required for the convergence to the optimal solution.

\begin{restatable}{theorem}{thmToy}
\label{thm:Toy-example}
   Consider the described setting where the parameters $\vtheta^{(t)}$ of embedding vectors are updated, as shown in Fig.~\ref{fig:toy_example_loss}. 
   Suppose there exist $\tilde{\epsilon}$, $\overline{T}$ such that for all $t$ satisfying  $\gB^{(t)}=\left\{1,3\right\}$ or $\left\{2,4\right\}$, $\|\nabla_{\vtheta^{(t)}} \lclip (\vU_{\gB^{(t)}}, \vV_{\gB^{(t)}})\| \leq \tilde{\epsilon}$, and $T_{\textnormal{OSGD}}, \ T_{\textnormal{SGD}}< \overline{T}.$ 
    Then, we have the following inequalities:
    \[
    T_{\textnormal{OSGD}} \geq {\pi/4 - \rho - \epsilon +O(\eta^2 \epsilon + \eta \epsilon^3) \over \eta \epsilon}, \quad
    T_{\textnormal{SGD}} \geq {3(e^2+1) \over e^2-1} {\pi/4-\rho-\epsilon+O(\eta^2 \epsilon+\eta^2 \tilde{\epsilon}) \over \eta \epsilon+O(\eta \epsilon^3+\eta \tilde{\epsilon})}.
    \]
\end{restatable}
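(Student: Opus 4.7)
The plan is to exploit the symmetry of the four-point configuration to collapse the dynamics into a single scalar trajectory $\theta^{(t)}$, compute the expected per-step angular progress for each algorithm by summing over the six possible mini-batches of size two, and divide the target distance $\pi/4-\rho-\epsilon$ by that progress to obtain the two lower bounds. Under the initialization $\theta_i^{(0)}=\epsilon$ and $\vv_i^{(0)}=\vu_i^{(0)}$, together with the dihedral symmetry of the configuration, both updates preserve $\mathbb{E}[\theta_1^{(t)}]=\cdots=\mathbb{E}[\theta_4^{(t)}]=:\theta^{(t)}$ and $\mathbb{E}[\vv_i^{(t)}]=\mathbb{E}[\vu_i^{(t)}]$ in expectation, so the problem reduces to tracking the scalar $\theta^{(t)}$ with $\theta^{(0)}=\epsilon$ toward the target interval $(\pi/4-\rho,\pi/4)$.

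The next step is to classify the $\binom{4}{2}=6$ mini-batches by directly evaluating inner products. Using $\vu_1\cdot\vu_2=\vu_3\cdot\vu_4=\cos(\theta_1+\theta_2)$, $\vu_1\cdot\vu_4=\vu_2\cdot\vu_3=-\cos(\theta_1+\theta_4)$, and $\vu_1\cdot\vu_3=-\cos(\theta_1-\theta_3)$, three types emerge. The \emph{good} batches $\{1,2\}$ and $\{3,4\}$ have loss $2\log(1+e^{\cos(2\theta)-1})$ and gradient $-2\sigma(\cos(2\theta)-1)\sin(2\theta)$ w.r.t.\ a contained $\theta_i$; at $\theta=\epsilon$ this is $-2\epsilon+O(\epsilon^3)$, pushing $\theta_i$ toward $\pi/4$. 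The \emph{bad} batches $\{1,4\}$ and $\{2,3\}$ have loss $2\log(1+e^{-\cos(2\theta)-1})$ and gradient $+2\sigma(-\cos(2\theta)-1)\sin(2\theta)$; at $\theta=\epsilon$ this is $+\frac{4\epsilon}{1+e^2}+O(\epsilon^3)$, pushing $\theta_i$ \emph{away} from $\pi/4$. The \emph{neutral} batches $\{1,3\}$ and $\{2,4\}$ contribute a gradient bounded by $\tilde\epsilon$ in norm, by hypothesis. Since the good-batch loss strictly dominates, \textsf{OSGD} always selects $\{1,2\}$ or $\{3,4\}$, while \textsf{SGD} samples all six uniformly.

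For the \textsf{OSGD} bound, a fixed index $i$ lies in the selected good batch with probability $1/2$, so $\mathbb{E}[\theta_i^{(t+1)}-\theta_i^{(t)}]=\tfrac{1}{2}\cdot 2\eta\epsilon + O(\eta\epsilon^3+\eta^2\epsilon) = \eta\epsilon+O(\eta\epsilon^3+\eta^2\epsilon)$, where the remainder absorbs the drift of $\theta^{(t)}$ from $\epsilon$ over at most $\bar T$ steps. Dividing $\pi/4-\rho-\epsilon$ by this per-step progress gives the $T_{\textnormal{OSGD}}$ bound. For \textsf{SGD}, each batch is drawn with probability $1/6$; of the three batches containing $i$, one is good (contributing $+2\eta\epsilon$), one is bad (contributing $-\tfrac{4\eta\epsilon}{1+e^2}$), and one is neutral (contributing $O(\eta\tilde\epsilon)$), while the other three omit $i$. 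Summing yields $\mathbb{E}[\theta_i^{(t+1)}-\theta_i^{(t)}]=\tfrac{1}{6}\cdot 2\eta\epsilon\bigl(1-\tfrac{2}{1+e^2}\bigr)+O(\eta\epsilon^3+\eta\tilde\epsilon+\eta^2\epsilon) = \tfrac{\eta\epsilon(e^2-1)}{3(e^2+1)}+O(\eta\epsilon^3+\eta\tilde\epsilon+\eta^2\epsilon)$, and dividing the distance by this produces exactly the $\tfrac{3(e^2+1)}{e^2-1}$ factor in the stated $T_{\textnormal{SGD}}$ bound.

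The main obstacle is recognizing that the slowdown of \textsf{SGD} relative to \textsf{OSGD} is \emph{not} just the naive $3/2$ factor from the ratio of good-batch selection frequencies ($1/2$ vs.\ $1/6$); rather, \textsf{SGD} also draws the bad batches, which actively undo part of its progress with cancellation weight $\sigma(-2)=1/(1+e^2)$ arising from the softmax at the near-antipodal configuration. Inverting the resulting net good-minus-bad progress produces the hyperbolic cotangent factor $\coth(1)=(e^2+1)/(e^2-1)$ that appears in the theorem. A secondary technical difficulty is justifying that the first-order Taylor expansion centered at $\theta=\epsilon$ remains accurate over the full trajectory, which is exactly why the hypotheses $T_{\textnormal{OSGD}},T_{\textnormal{SGD}}<\bar T$ and the higher-order remainders $O(\eta^2\epsilon+\eta\epsilon^3)$ are included in the statement.
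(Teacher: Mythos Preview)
Your proposal is correct and follows essentially the same strategy as the paper: classify the six mini-batches into good ($\{1,2\},\{3,4\}$), bad ($\{1,4\},\{2,3\}$), and neutral ($\{1,3\},\{2,4\}$) types via the inner products, compute the angular gradient for each type, observe that OSGD always selects a good batch while SGD samples uniformly and suffers the bad-batch cancellation $\sigma(-2)=1/(1+e^2)$, and divide the target distance $\pi/4-\rho-\epsilon$ by the resulting per-step drift to obtain the two bounds. Your identification of the $(e^2-1)/(e^2+1)$ factor as arising from the net good-minus-bad progress exactly matches the paper's computation of $F_1(\epsilon,\epsilon,\epsilon,\epsilon)=\tfrac{2(e^2-1)}{e^2+1}\epsilon+O(\epsilon^3)$.

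The paper's execution differs only in presentation. For OSGD it treats the alternation between $\{1,2\}$ and $\{3,4\}$ as \emph{deterministic} (after updating $\theta_1,\theta_2$ the batch $\{3,4\}$ has strictly higher loss), defining $\phi^{(t)}=\theta_1^{(2t)}$ and $T_{\text{half}}=T_{\text{OSGD}}/2$, rather than your ``probability $1/2$'' framing; the per-step drift is identical either way. For SGD the paper does not collapse to a scalar but tracks the full vector $\vtheta^{(t)}$, bounds $|F_1(\vtheta^{(t)})-F_1(\vtheta^{(0)})|$ via an explicit Lipschitz constant $2(1+C)$ for the drift field, and propagates $\mathbb{E}\|\vtheta^{(t)}-\vtheta^{(0)}\|$ recursively; this is where the $O(\eta^2\epsilon)$ remainder and the $\bar T$ hypothesis are actually used. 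Your plan defers this to ``the remainder absorbs the drift,'' which is the right idea but would need to be filled in along these lines to be rigorous, since the $\theta_i^{(t)}$ are not all equal along an SGD trajectory and the drift is not constant.
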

\begin{corollary}\label{coro:convergence_comparison}
    Suppose lower bounds of $T_{\textnormal{OSGD}}$, $T_{\textnormal{SGD}}$ in Thm.~\ref{thm:Toy-example} are tight, and the learning rate $\eta$ is small enough. Then, $T_{\textnormal{OSGD}}/T_{\textnormal{SGD}}=(e^2-1)/3(e^2+1)\approx 1/4$.
\end{corollary}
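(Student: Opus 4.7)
The plan is to start from the two tight lower bounds supplied by Theorem~\ref{thm:Toy-example} and simply take their quotient, showing that every correction term is genuinely lower-order under the small-$\eta$ limit. Writing out the ratio explicitly,
\begin{equation*}
\frac{T_{\textnormal{OSGD}}}{T_{\textnormal{SGD}}}
= \frac{e^2-1}{3(e^2+1)} \cdot \underbrace{\frac{\pi/4-\rho-\epsilon+O(\eta^2\epsilon+\eta\epsilon^3)}{\pi/4-\rho-\epsilon+O(\eta^2\epsilon+\eta^2\tilde\epsilon)}}_{(A)} \cdot \underbrace{\frac{\eta\epsilon+O(\eta\epsilon^3+\eta\tilde\epsilon)}{\eta\epsilon}}_{(B)}.
\end{equation*}
My goal is to argue that both $(A)$ and $(B)$ tend to $1$ under the stated hypotheses, leaving the prefactor $(e^2-1)/(3(e^2+1))$ as the exact limit.

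The key step is a careful separation of scales. First I would fix $\rho$ small enough that $\pi/4 - \rho - \epsilon$ is bounded away from $0$ by a positive constant (this is implicit since otherwise the lower bounds would be vacuous). Then, because the correction terms in the numerators of $(A)$ are of order $\eta^2 \epsilon$, $\eta \epsilon^3$, or $\eta^2 \tilde{\epsilon}$, each of which vanishes as $\eta \to 0$ with $\epsilon$ held fixed, the numerator and denominator of $(A)$ both converge to the same constant $\pi/4 - \rho - \epsilon$, giving $(A) \to 1$. For $(B)$, after dividing through by $\eta$, we need $O(\epsilon^3 + \tilde{\epsilon})/\epsilon \to 0$; this follows from the tightness assumption combined with the small-$\eta$ regime, provided $\tilde{\epsilon}$ is of order $o(\epsilon)$ (which is consistent with the gradient-vanishing hypothesis in Theorem~\ref{thm:Toy-example}, since at the equilibrium directions $\{1,3\}$ and $\{2,4\}$ the gradient is small).

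Combining the two limits gives $\lim_{\eta \to 0} T_{\textnormal{OSGD}}/T_{\textnormal{SGD}} = (e^2-1)/(3(e^2+1))$. A direct numerical check with $e^2 \approx 7.389$ yields $(e^2-1)/(3(e^2+1)) \approx 6.389/25.167 \approx 0.254 \approx 1/4$, which is the claimed approximation.

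The main obstacle I anticipate is not the algebra itself, which is elementary, but rather justifying that the assumption ``tight lower bounds'' together with ``$\eta$ small enough'' is strong enough to kill the $O(\eta\tilde\epsilon)$ term inside $(B)$ relative to the $\eta \epsilon$ leading term. If $\tilde\epsilon$ were allowed to be of the same order as $\epsilon$, the ratio would pick up an additional constant factor and the clean $(e^2-1)/(3(e^2+1))$ would not emerge. So the argument hinges on reading the hypothesis as $\tilde\epsilon = o(\epsilon)$; once this is granted, the rest is routine limit-taking and the corollary follows.
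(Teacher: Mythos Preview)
Your approach is correct and is exactly what the paper intends: the corollary is stated without proof and is meant to follow immediately by taking the ratio of the two tight bounds from Theorem~\ref{thm:Toy-example} and letting the lower-order corrections vanish. Your identification of the needed side condition $\tilde\epsilon = o(\epsilon)$ (together with $\epsilon$ small, which is already part of the toy-example setup) is a genuine refinement over the paper's informal statement, which only mentions small $\eta$; note in particular that the factor $(B)=1+O(\epsilon^2+\tilde\epsilon/\epsilon)$ does not involve $\eta$ at all, so ``$\eta$ small'' alone cannot dispose of it---your reading that $\epsilon$ and $\tilde\epsilon$ are implicitly small from the initialization and the near-antipodal gradient assumption is the right way to close this gap.
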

In Fig.~\ref{fig:toy_example_loss}, we present training loss curves 
of the full-batch contrastive loss in Eq.~(\ref{eq:full_CLIP_loss}) for various algorithms implemented on the toy example. One can observe that the losses of all algorithms eventually converge to 1.253, 
the optimal loss achievable when the solution satisfies $\vu_i=\vv_i$ and $\{\vu_i\}_{i=1}^N$ form simplex cross-polytope. As shown in the figure, OSGD converges faster than SGD to the optimal loss. 
This empirical evidence corroborates our theoretical findings in Corollary~\ref{coro:convergence_comparison}. %

\subsection{Convergence of OSGD in Mini-batch Contrastive Learning Setting}\label{subsec:osgd_convergence}
Recall that it is challenging to prove the convergence of gradient-descent-based methods for contrastive learning problem in Eq.~(\ref{prob:standard-full-batch}) due to the non-quasi-convexity of the contrastive loss $\mathcal{L}^{\text{con}}$. Instead of focusing on the contrastive loss, we consider a proxy, the \emph{weighted} contrastive loss defined as $\lcliptilde(\vU, \vV) \coloneqq \frac{1}{q} \sum_{j=1}^{{N \choose B}} \gamma_j \lclip(\vU_{\gB_{(j)}}, \vV_{\gB_{(j)}})$ with $\gamma_j = {\sum_{l=0}^{q-1} {j - 1 \choose l}{{N \choose B}-j \choose k - l - 1}}/{{{N \choose B} \choose k}}$
for two arbitrary natural numbers $k, q \le \binom{N}{B}$ where $\gB_{(j)}$ is a mini-batch with $j$-th largest loss among batches of size $B$.
Indeed, this is a natural objective obtained by applying OSGD to our problem, and we show the convergence of such an algorithm by extending the results in~\citet{kawaguchi2020ordered}.
OSGD updates the embedding vectors using the gradient averaged over $q$ batches that have \emph{the largest losses} among randomly chosen $k$ batches (see Algo.~\ref{alg:osgd-naive} in Appendix~\ref{sec5:theory}).
Let $\vU^{(t)}$, $\vV^{(t)}$ be the updated embedding matrices when applying OSGD for $t$ steps starting from $\vU^{(0)}$, $\vV^{(0)}$, using the learning rate $\eta_t$. 
Then the following theorem, proven in Appendix~\ref{sec5:theory}, holds. 
\begin{restatable}[Convergence results]{theorem}{osgdconvergence}\label{thm:osgd_convergence}
    Consider sampling $t^{\star}$ from $[T-1]$ with probability proportional to $\{\eta_t\}_{t=0}^{T-1}$, that is, $\prob(t^{\star} = t) = {\eta_t}/{(\sum_{i=0}^{T-1} \eta_i)}$. 
    Then $\forall\rho > \rho_0 = 2\sqrt{2/B} + 4e^2 / B$, we have
    \[\expect\left[\left\|\nabla \lcliptilde(\vU^{(t^{\star})}, \vV^{(t^{\star})})\right\|^2 \right] 
          \leq \frac{(\rho + \rho_0)^2}{\rho(\rho-\rho_0)} 
          \frac{\left( \lcliptilde(\vU^{(0)}, \vV^{(0)}) - \lclipstar \right) + 8{\rho}\sum_{t=0}^{T-1}\eta_t^2}{\sum_{t=0}^{T-1}\eta_t},
    \]    
    where $\lclipstar$ denotes the minimized value of $\lcliptilde$.
\end{restatable}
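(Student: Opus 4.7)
The plan is to adapt the non-convex OSGD convergence analysis of \citet{kawaguchi2020ordered} to the mini-batch contrastive setting. Three ingredients on $\lcliptilde$ need to be verified: (i) unbiasedness of the OSGD stochastic gradient with respect to $\nabla\lcliptilde$, (ii) $\rho_0$-smoothness of $\lcliptilde$, and (iii) a bound on the stochastic gradient's second moment. Once these are in hand, the stated bound follows from the standard descent-lemma telescoping argument applied to the iterate $t^{\star}$ sampled with probability proportional to $\eta_t$.

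\textbf{Unbiasedness of the OSGD estimator.} The coefficient $\gamma_j$ is, by a standard hypergeometric identity, exactly the probability that the batch with the $j$-th largest loss lies among the top $q$ losses when $k$ of the $\binom{N}{B}$ batches are drawn uniformly at random without replacement. Averaging the gradient over the top-$q$ selected batches is therefore an unbiased estimator of $\nabla\lcliptilde$, so the conditional expectation $\expect[g_t \mid \vU^{(t)},\vV^{(t)}] = \nabla\lcliptilde(\vU^{(t)},\vV^{(t)})$ holds, paralleling the argument in \citet{kawaguchi2020ordered}.

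\textbf{Smoothness and noise constants.} Each per-batch term $\lclip(\vU_{\gB},\vV_{\gB})$ is a log-sum-exp of inner products of unit-norm embeddings. Since $\vu_i^{\intercal}\vv_j \in [-1,1]$, every exponent lies in $[e^{-1},e]$ and every softmax weight is bounded away from $0$ and $1$. Direct calculation of the first two derivatives, combined with Cauchy-Schwarz applied to softmax-weighted sums of unit-norm vectors, yields per-batch Lipschitz-gradient and gradient-norm bounds that scale as $\mathcal{O}(e^2/B)$. The $\gamma_j$-weighted average (with $\sum_j\gamma_j = q$) together with the extra variance from top-$q$ sub-selection of $k$ random batches produces the stated $\rho_0 = 2\sqrt{2/B} + 4e^2/B$: the $4e^2/B$ piece reflects the deterministic smoothness of $\lcliptilde$, and the $2\sqrt{2/B}$ piece controls the standard deviation of the stochastic gradient around $\nabla\lcliptilde$.

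\textbf{Descent-lemma telescope.} Using $\rho_0$-smoothness of $\lcliptilde$,
\begin{equation*}
\lcliptilde(\vU^{(t+1)},\vV^{(t+1)}) \leq \lcliptilde(\vU^{(t)},\vV^{(t)}) - \eta_t\langle\nabla\lcliptilde,g_t\rangle + \tfrac{\rho_0\eta_t^2}{2}\lVert g_t\rVert^2.
\end{equation*}
Conditioning on the past, unbiasedness gives $\expect\langle\nabla\lcliptilde,g_t\rangle = \expect\lVert\nabla\lcliptilde\rVert^2$. Splitting $\lVert g_t\rVert^2$ via a Young-type inequality with free parameter $\rho$ into a $\lVert\nabla\lcliptilde\rVert^2$-piece and a noise piece controlled by the variance bound above, then telescoping from $t = 0$ to $T - 1$, and finally dividing through by $\sum_t\eta_t$ produces the claim after interpreting the weighted sum as an expectation over $t^{\star}$. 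The prefactor $(\rho+\rho_0)^2/(\rho(\rho-\rho_0))$ arises precisely from this Young decomposition together with the requirement $\rho > \rho_0$ so that the $\lVert\nabla\lcliptilde\rVert^2$ coefficient stays strictly positive after rearrangement; the $8\rho\sum_t\eta_t^2$ term absorbs the residual noise contribution.

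\textbf{Main obstacle.} The delicate step is producing the \emph{sharp} constants $4e^2/B$ and $2\sqrt{2/B}$ on the smoothness and noise of $\lcliptilde$, since the $\gamma_j$ weights couple gradients across all $\binom{N}{B}$ batches and one must avoid losing factors of $\binom{N}{B}$ or $q$ in the bookkeeping. The computation reduces to repeated Cauchy-Schwarz and Jensen applied to softmax-weighted sums of unit-norm vectors, but getting the precise constants stated in the theorem requires careful tracking of how $B$ enters the per-sample normalization and of how top-$q$ sub-selection inflates the per-batch variance.
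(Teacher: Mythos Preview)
Your high-level checklist (unbiasedness, smoothness, second-moment bound, then telescope) is reasonable, but two concrete points diverge from what the paper actually does and from what the stated bound requires.

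\textbf{The meaning of $\rho_0$.} You split $\rho_0 = 2\sqrt{2/B} + 4e^2/B$ into a ``smoothness piece'' $4e^2/B$ and a ``noise-standard-deviation piece'' $2\sqrt{2/B}$. That is not how the constant arises. In the paper, both summands together form the Lipschitz constant of $\nabla\lcliptilde$ on the unit ball: writing $\lclip(\vU_\gB,\vV_\gB)=\gL^M(\vU_\gB^\intercal \vV_\gB)$, one has $\nabla_{\vU_\gB}\lclip = \vV_\gB(\nabla_X\gL^M)^\intercal$, and the product rule produces one term proportional to $\|\nabla_X\gL^M\|\le 2\sqrt{2/B}$ and one proportional to the Lipschitz constant of $\nabla_X\gL^M$, giving $4e^2/B$. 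The stochastic-gradient norm bound is a separate computation yielding $L=4$, and it is $L^2/2=8$ that produces the $8\rho\sum_t\eta_t^2$ term, not $2\sqrt{2/B}$.

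\textbf{Where the prefactor $(\rho+\rho_0)^2/(\rho(\rho-\rho_0))$ comes from.} The paper does not run a descent lemma on $\lcliptilde$ directly. It first invokes the weakly-convex stochastic framework of Davis--Drusvyatskiy applied to the Moreau envelope $\lcliptilde_\rho(\vU,\vV)=\min_{\vU',\vV'}\{\lcliptilde(\vU',\vV')+\tfrac{\rho}{2}\|(\vU',\vV')-(\vU,\vV)\|^2\}$, obtaining a bound on $\expect\|\nabla\lcliptilde_\rho\|^2$ with prefactor $\rho/(\rho-\rho_0)$. It then uses $\rho_0$-Lipschitzness of $\nabla\lcliptilde$ together with the proximal identities to show $\|\nabla\lcliptilde\|\le \tfrac{\rho+\rho_0}{\rho}\|\nabla\lcliptilde_\rho\|$, which after squaring contributes the extra $(\rho+\rho_0)^2/\rho^2$. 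The free parameter $\rho$ is the Moreau-envelope parameter, not a Young-inequality knob inside a direct descent argument; a standard smooth-nonconvex SGD telescope would give a bound with no such free $\rho$ and a different (simpler) prefactor, so your proposed route does not recover the precise statement as written.
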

Given sufficiently small learning rate $\eta_t \sim O(t^{-1/2}),$ $\mathbb{E}\|\nabla \lcliptilde\|^2$ decays at the rate of $\widetilde{O}(T^{-1/2}).$ Therefore, this theorem guarantees the convergence of OSGD for mini-batch contrastive learning.

\subsection{Suggestion: Spectral Clustering-based Approach}\label{subsec:spectral_clustering}

\begin{figure}[!t]
\centering
\includegraphics[width=0.98\columnwidth]{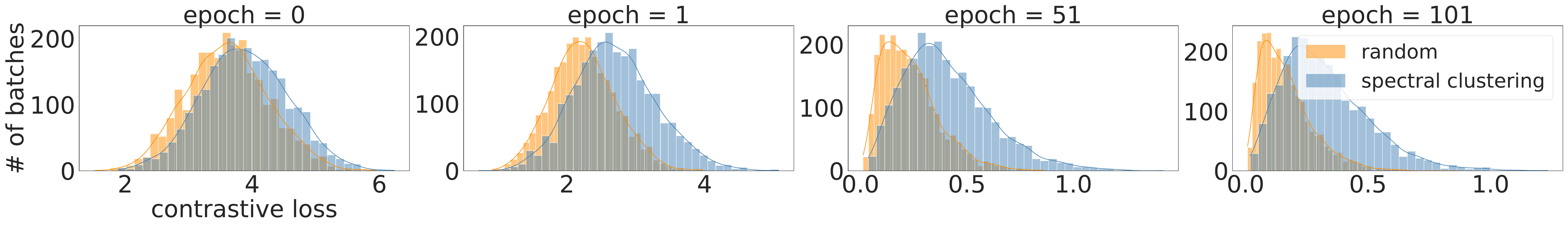}
\caption{Histograms of batch counts for $N/B$ batches, for the contrastive loss measured from ResNet-18 models trained on CIFAR-100 using SGD, where $N$=50,000 and $B$=20. Each plot is derived from a distinct training epoch. Here we compare two batch selection methods: 
(i) randomly shuffling $N$ samples and partition them into $N/B$ batches of size $B$, (ii) our SC method given in Algo.~\ref{alg:spectral_clustering}. 
The histograms show that batches generated through the proposed spectral clustering method tend to contain a higher proportion of large loss values when compared to random batch selection. Similar results are observed in different settings, details of which are given in 
Appendix~\ref{sec:batch_counts_appendix}. %
}
\label{fig:loss_histogram}
\end{figure}
\begin{wrapfigure}{R}{0.47\textwidth}
\vspace{-0.5cm}
\hspace{0.05cm}
\begin{minipage}{0.47\textwidth}
\begin{algorithm}[H]
\footnotesize
   \caption{Spectral Clustering Method}
   \label{alg:spectral_clustering}
   \DontPrintSemicolon
   \KwIn{the number of positive pairs $N$, batch size $B$, 
   embedding matrices: $\vU$, $\vV$}
   \KwOut{selected batches $\{\gB_j\}_{j=1}^{N/B}$}
   \BlankLine
   Construct the affinity matrix $A$:
   $A_{ij}=\mathbbm{1}\{i\neq j\}\times w(i,j)$\;
   Construct the degree matrix $D$ from $A$:
   $D_{ij} = \mathbbm{1}\{i=j\}\times(\sum_{j=1}^NA_{ij})$\;
   $L \leftarrow D-A$; $k \leftarrow N/B$\; 
   $\{\gB_j\}_{j=1}^{N/B}\leftarrow$Apply the even-sized spectral clustering algorithm with $L$ and $k$\; 
   \Return $\{\gB_j\}_{j=1}^{N/B}$
\end{algorithm}
\end{minipage}
\end{wrapfigure}
Applying OSGD to mini-batch contrastive learning has a potential benefit as shown in Sec.~\ref{subsec:toy_example}, but it also has some challenges. Choosing the best $q$ batches with high loss in OSGD is only doable after we evaluate losses of all $\binom{N}{B}$ combinations, which is computationally infeasible for large $N$. A naive solution to tackle this challenge is to first randomly choose $k$ batches and then select $q$ high-loss batches among $k$ batches. However, this naive random batch selection method does not guarantee that the chosen $q$ batches are having the highest loss among all $\binom{N}{B}$ candidates. 
Motivated by these issues of OSGD, we suggest an alternative batch selection method inspired by graph theory. 
Note that the contrastive loss $\mathcal{L}^{\sf{con}}(U_{\gB}, V_{\gB})$ for a given batch $\gB$  is lower bounded as follows: 
{
\begin{align}
    &\begin{aligned}
        & 
        \frac{1}{B(B-1)}\left\{\sum_{i\in\gB}\sum_{j \in \gB \setminus \{i\}}\log\left(1+(B-1)e^{\vu_i^\intercal(\vv_j-\vv_i)}\right)+\log\left(1+(B-1)e^{\vv_i^\intercal(\vu_j-\vu_i)}\right)\right\}.
    \end{aligned}
\end{align}
}This lower bound is derived using Jensen's inequality. Detailed derivation is provided in Appendix~\ref{sec:alg_detail_sc}. A nice property of this lower bound is that it can be expressed as a summation of terms over a pair $(i,j)$ of samples within batch $\gB$. 
Consider a graph $\gG$ with $N$ nodes, where the weight between node $k$ and $l$ is defined as $w(k,l):= \sum_{(i,j)\in\{(k,l), (l,k)\}}\log\left(1+(B-1)e^{\vu_i^\intercal(\vv_j-\vv_i)}\right)+\log\left(1+(B-1)e^{\vv_i^\intercal(\vu_j-\vu_i)}\right)$. 
Recall that our goal is to choose $q$ batches having the highest contrastive loss among $N \choose B$ batches. We relax this problem by reducing our search space such that the $q=N/B$ chosen batches $\gB_1, \cdots, \gB_q$ form a partition of $N$ samples, \ie $\gB_i \cap \gB_j = \varnothing$ and $\cup_{i \in [q]} \gB_i = [N]$. In such scenario, our target problem is equivalent to the problem of clustering $N$ nodes in graph $\gG$ into $q$ clusters with equal size, where the objective is to minimize the sum of weights of inter-cluster edges.
This problem is nothing but the min-cut problem~\citep{cormen2022introduction}, and we can employ even-sized spectral clustering algorithm which solves it efficiently. 
The pseudo-code of our batch selection method\footnote{Our algorithm finds $N/B$ good clusters \emph{at once}, instead of only finding a single best cluster. Compared with such alternative approach, our method is (i) more efficient when we update models for multiple iterations, and (ii) guaranteed to load all samples with $N/B$ batches, thus expected to have better convergence~\citep{bottou2009curiously, haochen2019random, GurbuzbalabanOzdaglarParrilo2021_why}.} is provided in Algo.~\ref{alg:spectral_clustering}, and further details of the algorithm are provided in Appendix~\ref{sec:alg_detail}. Fig.~\ref{fig:loss_histogram} shows the histogram of contrastive loss for $N/B$ batches chosen by the random batch selection method and the proposed \emph{spectral clustering} (SC) method. One can observe that the SC method favors batches with larger loss values.
\vspace{-2mm}
\section{Experiments}\label{sec:exp}
\vspace{-2mm}
We validate our theoretical findings and the effectiveness of our proposed batch selection method by providing experimental results on synthetic and real datasets. 
We first show that our experimental results on synthetic dataset coincide with two main theoretical results: (i) the relationship between the full-batch contrastive loss and the mini-batch contrastive loss given in Sec.~\ref{sec:relationship_full_mini}, (ii) the analysis on the convergence of OSGD and the proposed SC method given in Sec.~\ref{sec:osgd}. 
To demonstrate the practicality of our batch selection method, we provide experimental results on CIFAR-100~\cite{krizhevsky2009learning} and Tiny ImageNet~\cite{le2015tiny}. Details of the experimental setting can be found in Appendix~\ref{sec:exp_detail}, and our code is available at \href{https://github.com/krafton-ai/mini-batch-cl.git}{https://github.com/krafton-ai/mini-batch-cl}.

\begin{figure*}[!t]
\centering

\raisebox{2.4mm}[0mm][0mm]{\rotatebox[origin=l]{90}{\scriptsize{\textsc{$d=2N$}}}}\hspace{1mm}
    \includegraphics[height=.12\columnwidth]{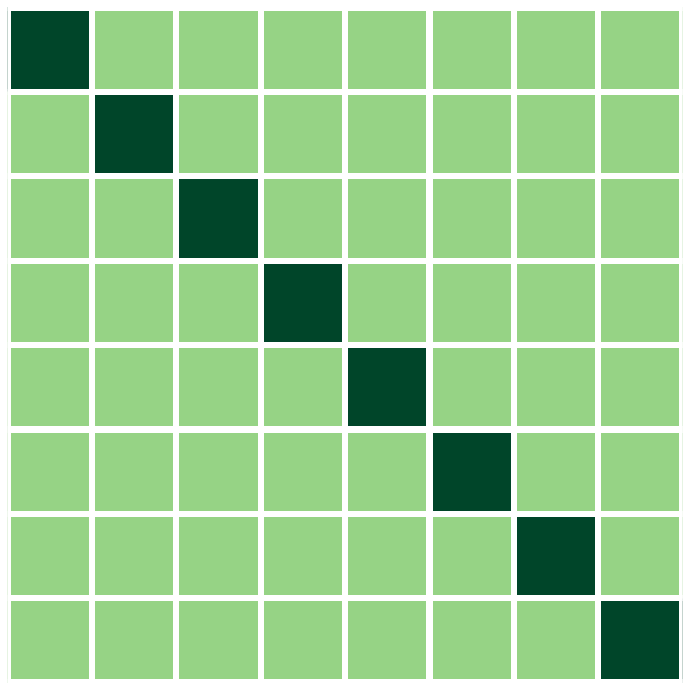}
    \label{fig:N8_d16_B2_ETF_wo_cbar}
    \hspace{1mm}
    \includegraphics[height=.12\columnwidth]{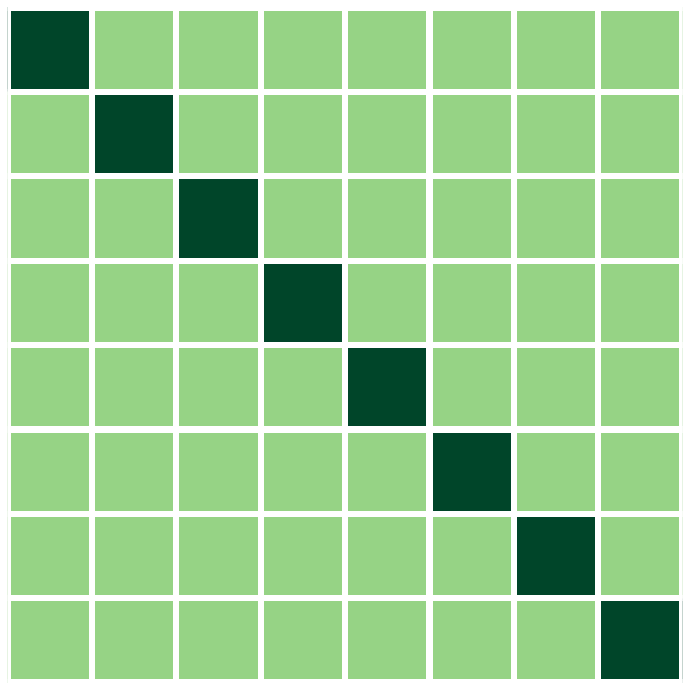}
    \label{fig:d=2N_N8_d16_lr0.5_s20000_z_fixed_mini_batch_B2_full_wo_cbar}
    \hspace{1mm}
    \includegraphics[height=.12\columnwidth]{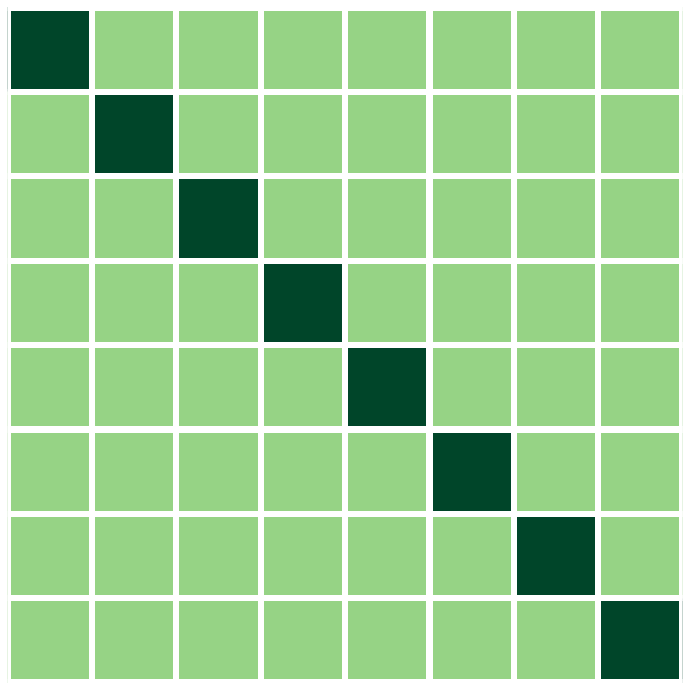}
    \label{fig:d=2N_N8_d16_lr0.5_s20000_z_fixed_mini_batch_B2_NcB_wo_cbar}
    \hspace{1mm}
    \includegraphics[height=.12\columnwidth]{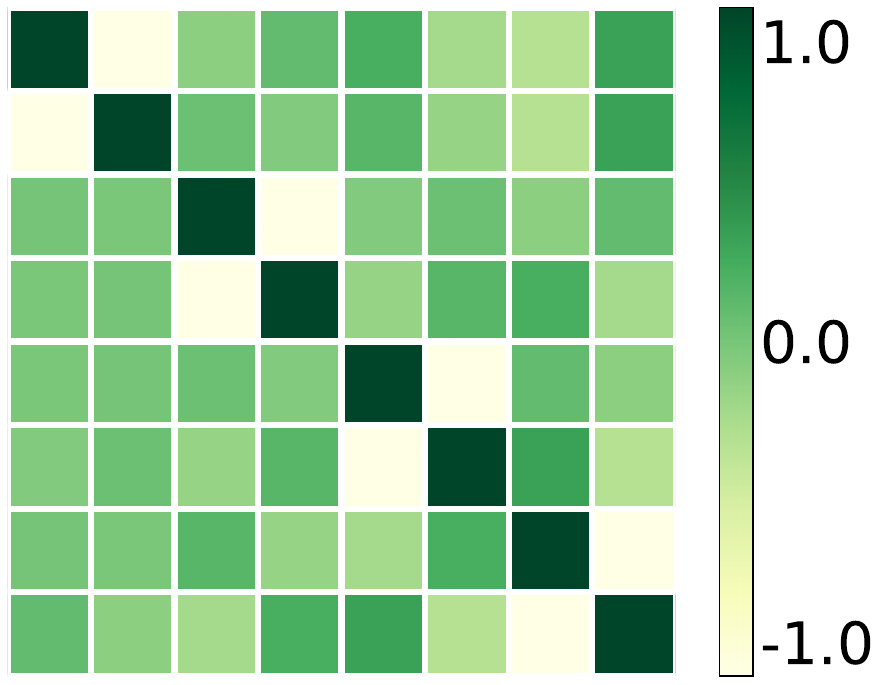}
    \label{fig:d=2N_N8_d16_lr0.5_s20000_z_fixed_mini_batch_B2_f_w_cbar}
    \hspace{1mm}
    \includegraphics[height=.13\columnwidth]{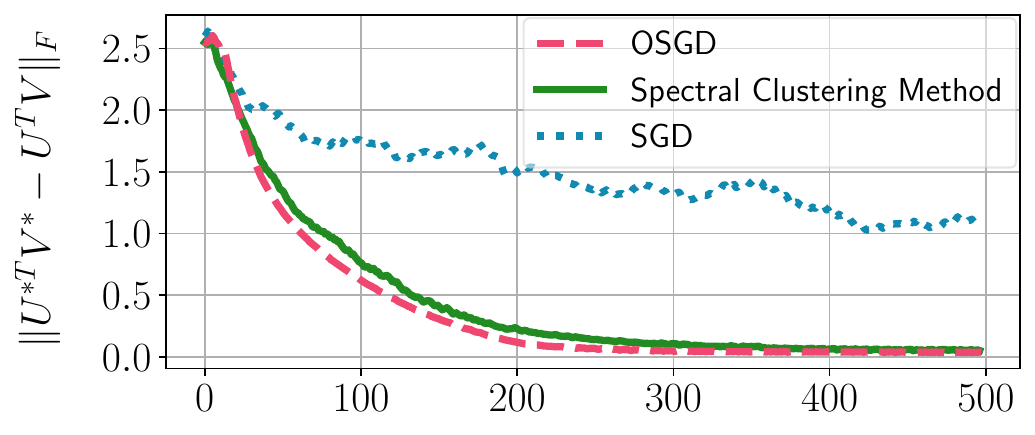}
    \label{fig:d=2N_norm_difference_d_2N}

\raisebox{2.2mm}[0mm][0mm]{\rotatebox[origin=l]{90}{\scriptsize{\textsc{$d=N/2$}}}
\hspace{-2.2mm}}\hspace{1mm}
\subfigure[optimal]{
    \includegraphics[height=.12\columnwidth]{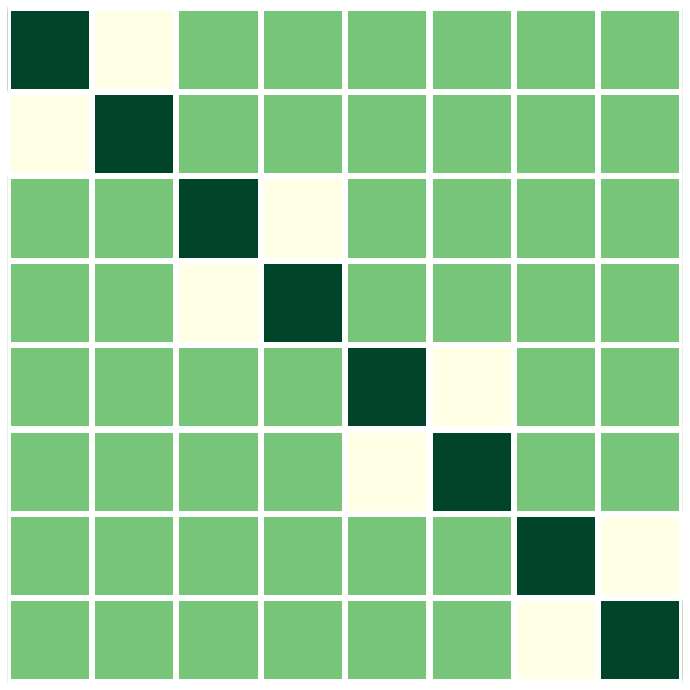}
    \label{fig:N8_d4_B2_CP_wo_cbar}
}
\subfigure[full-batch]{
    \includegraphics[height=.12\columnwidth]{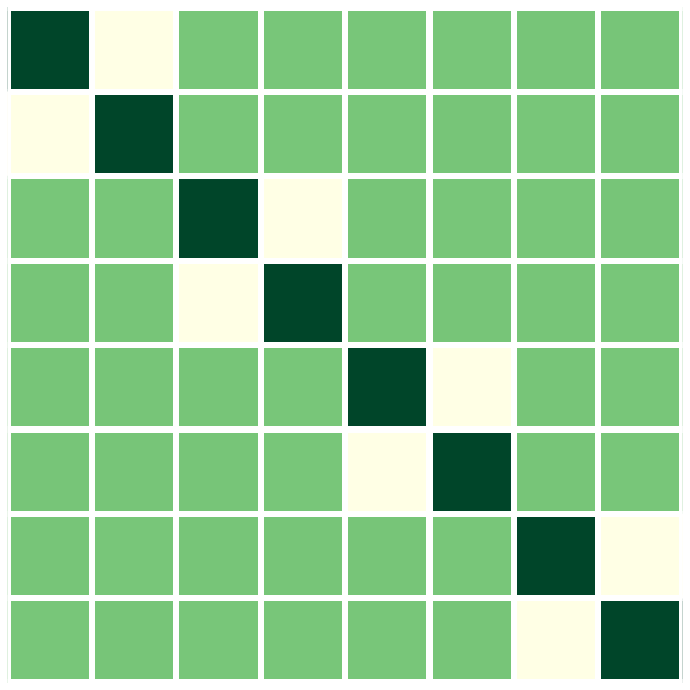}
    \label{fig:d=N_d_2_N8_d4_lr0.5_s20000_z_fixed_mini_batch_B2_full_wo_cbar}
}
\subfigure[${N\choose B}$-all]{
    \includegraphics[height=.12\columnwidth]{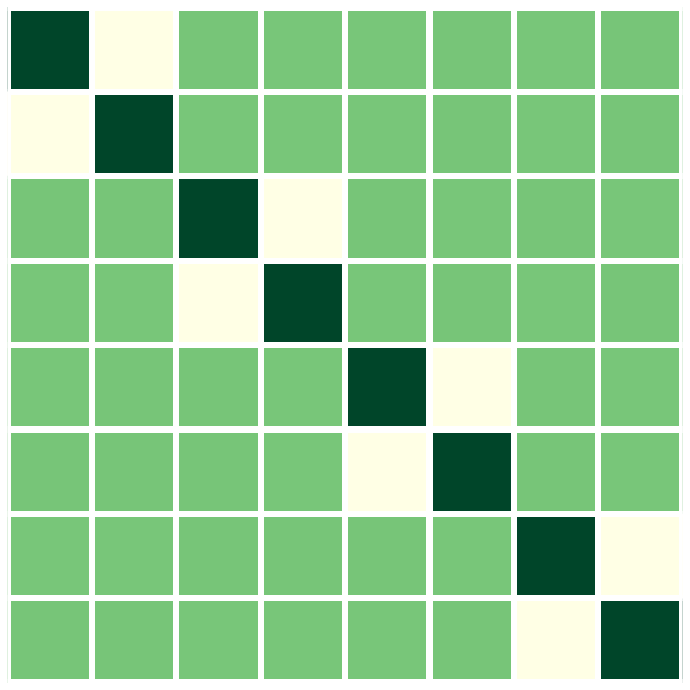}
    \label{fig:d=N_d_2_N8_d4_lr0.5_s20000_z_fixed_mini_batch_B2_NcB_wo_cbar}
}
\subfigure[${N\choose B}$-sub]{
    \includegraphics[height=.12\columnwidth]{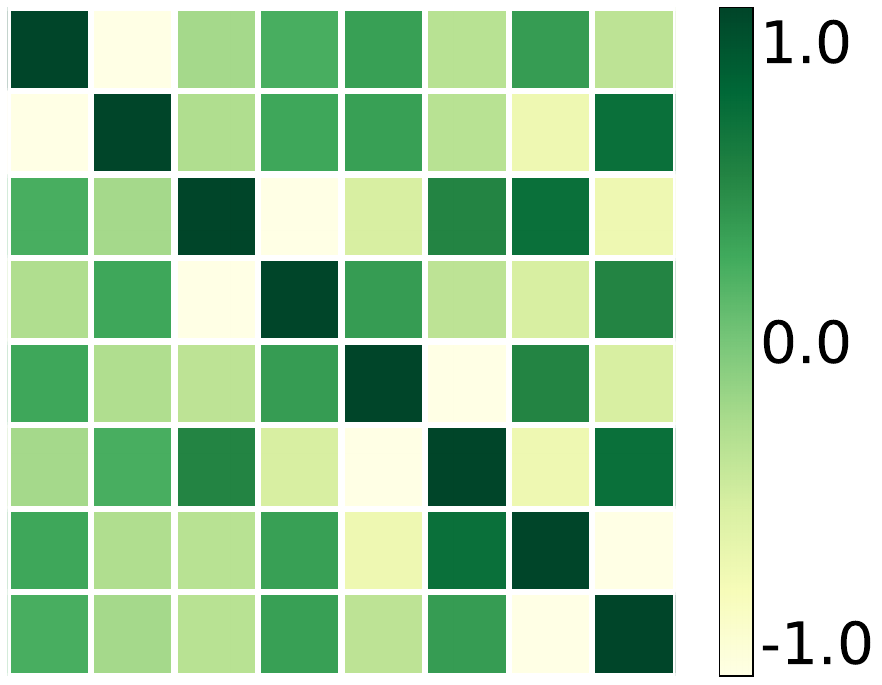}
    \label{fig:d=N_d_2_N8_d4_lr0.5_s20000_z_fixed_mini_batch_B2_f_w_cbar}
}
\subfigure[norm difference]{
    \includegraphics[height=.13\columnwidth]{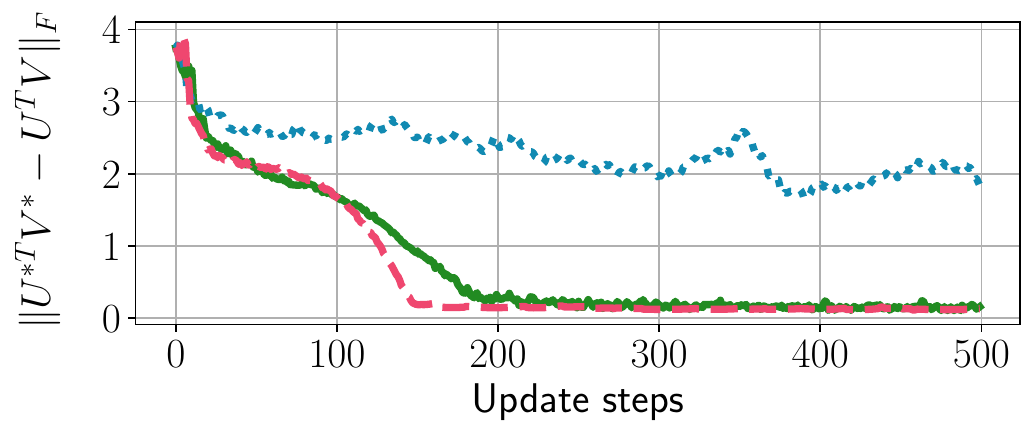}
    \label{fig:d=2N_norm_difference_d_2N}
}

\caption{
The behavior of embedding matrices $\vU, \vV$ optimized by different batch selection methods for $N=8$ and $B=2$ (Top: $d=2N$, Bottom: $d=N/2$). 
(a)-(d): Heatmap of $N \times N$ matrix visualizing the pairwise inner products $\vu_i^\intercal\vv_j$, where
(a): ground-truth solution (ETF for $d=2N$, cross-polytope for $d=N/2$), (b): optimized the full-batch loss with GD, (c): optimized the sum of $\binom{N}{B}$ mini-batch losses with GD, (d): optimized a partial sum of $\binom{N}{B}$ mini-batch losses with GD. 
Note that both (b) and (c) reach the ground-truth solution in (a), while (d) does not, supporting our theoretical results in Sec.~\ref{sec:main_theory}. 
Further, (e) compares the convergence of three mini-batch selection algorithms: 1) SGD, 2) OSGD, and 3) our spectral clustering method, when updating embeddings for 500 steps. OSGD and our method nearly converge to the optimal solution, while SGD does not. Here, $y$-axis represents the Frobenius norm of the difference between the heatmaps of the optimal solution and the updated embeddings, denoted by $\|\vU^{\star \intercal}\vV^{\star}-\vU^{\intercal}\vV\|_F$. 
}
\label{fig:sim_theorem_base_N_8}
\end{figure*}

\subsection{Synthetic Dataset}\label{sec:synthetic_exp}
Consider the problem of optimizing the embedding matrices $\vU, \vV$ using GD, where each column of $\vU, \vV$ is initialized as a multivariate normal vector and then normalized as $\lVert \vu_i \rVert = \lVert \vv_i \rVert = 1$, $\forall i$. We use learning rate $\eta=0.5$, and apply the normalization step at every iteration. 

First, we compare the minimizers of three optimization problems: 
(i) full-batch optimization in Eq.(\ref{prob:standard-full-batch}); 
(ii) mini-batch optimization in Eq.~(\ref{prob:mini-batch}) with $\gS_B=\left[\binom{N}{B}\right]$;
(iii) mini-batch optimization with $\gS_B\subsetneq\left[\binom{N}{B}\right]$.
We apply GD algorithm to each problem for $N=8$ and $B=2$, obtain the updated embedding matrices, and then show the heatmap plot of $N\times N$ gram matrix containing all the pairwise inner products $\vu_i^{\intercal} \vv_j$ in Fig.~\ref{fig:sim_theorem_base_N_8}(b)-(d). Here, we plot for two regimes: $d=2N$ for the top row, and $d=N/2$ for the bottom row. In Fig.~\ref{fig:sim_theorem_base_N_8}(a), we plot the gram matrix for the \emph{optimal} solution obtained in Sec.~\ref{sec:main_theory}. 
One can observe that when either full-batch or all $\binom{N}{B}$ mini-batches are used for training, the trained embedding vectors reach a simplex ETF and simplex cross-polytope solutions for $d=2N$ and $d=N/2$, respectively, as proved in Thm~\ref{thm:NcB-reaches-etf}. In contrast, when a strict subset of $\binom{N}{B}$ mini-batches are used for training,
these solutions are not achieved. 

Second, we compare the convergence speed of three algorithms in mini-batch optimization:
(i) OSGD; (ii) the proposed SC method; and (iii) SGD (see details of the algorithms in Appendix~\ref{sec:alg_detail}). 
Fig.~\ref{fig:d=2N_norm_difference_d_2N} shows the $\|\vU^{\star \intercal}\vV^{\star}-\vU^{(t)\intercal}\vV^{(t)}\|_F$ which is the Frobenius norm of the difference between heatmaps of the ground-truth solution ($\vU^{\star}, \vV^{\star}$) and the embeddings at each step $t$. We restrict the number of updates for all algorithms, specifically 500 steps. We observe that both OSGD and the proposed method nearly converge to the ground-truth solutions proved in Thm.~\ref{thm:NcB-reaches-etf} within 500 steps, while SGD does not. We obtain similar results for other values of $N$ and $d$, given
in Appendix~\ref{sec:synthetic_exp_appendix}.

\begin{table}[t]
    \begin{center}
    \caption{Top-1 retrieval accuracy on CIFAR-100-C (or Tiny ImageNet-C)~\citep{hendrycks2019benchmarking}, when each algorithm uses CIFAR-100 (or Tiny ImageNet) to pretrain ResNet-18 with SimCLR and SogCLR objective. 
    SC algorithm proposed in Sec.~\ref{subsec:spectral_clustering} outperforms all baselines.}
    \label{tab:performance_comparison_corrupted}
    \resizebox{0.55\textwidth}{!}{
    \begin{tabular}{@{}l*{4}{c}@{}}
    \toprule
     & \multicolumn{2}{c}{CIFAR-100} & \multicolumn{2}{c}{Tiny ImageNet} \\
    \cmidrule(lr){2-3} \cmidrule(lr){4-5}
     & SimCLR & SogCLR & SimCLR & SogCLR \\
    \midrule
    OSGD & 31.4 $\pm$ 0.03 & 23.8 $\pm$ 0.02 & 33.6 $\pm$ 0.04 & 29.7 $\pm$ 0.04 \\
    SGD  & 31.3 $\pm$ 0.02 & 23.6 $\pm$ 0.05 & 33.2 $\pm$ 0.03 & 28.6 $\pm$ 0.03 \\
    SC & $\bm{32.5}$ $\pm$ 0.05 & $\bm{30.0}$ $\pm$ 0.04 & $\bm{33.8}$ $\pm$ 0.04 & $\bm{33.3}$ $\pm$ 0.03 \\
    \bottomrule
    \end{tabular}}
    \end{center}
\end{table}

\subsection{Real Datasets}\label{sec:real_exp}
Here we show that the proposed SC method is effective in more practical settings where the embedding is learned by a parameterized encoder, and can be easily applied to existing uni-modal frameworks, such as SimCLR~\cite{chen2020simple} and SogCLR~\cite{yuan2022provable}. We conduct mini-batch contrastive learning on CIFAR-100 and Tiny ImageNet datasets and report the performances in the image retrieval downstream task on corrupted datasets, the results of which are in Table~\ref{tab:performance_comparison_corrupted}. Due to the page limit, we provide detailed experimental information in the Appendix~\ref{sec:real_exp_appendix}.

\section{Conclusion}\label{sec:conclusion}
We provided a thorough theoretical analysis of mini-batch contrastive learning. First, we showed that the solution of mini-batch optimization and that of full-batch optimization are identical if and only if all ${N\choose B}$ mini-batches are considered. Second, we analyzed the convergence of OSGD and devised spectral clustering (SC) method, a new batch selection method which handles the complexity issue of OSGD in mini-batch contrastive learning. Experimental results support our theoretical findings and the efficacy of SC. 

\section*{Limitations}\label{sec:limitations}
We note that our theoretical results have two major limitations:
\begin{enumerate}[leftmargin=0.7cm]
    \item While we would like to extend our results to the general case of $N > d+1$, we were only able to characterize the optimal solution for the specific case of $N=2d$. Furthermore, our result for the case of $N=2d$ in Thm.~\ref{thm:NcB-reaches-etf} requires the use of the conjecture that the optimal solution is symmetric and antipodal. However, as mentioned by \citet{lu2020neural}, the general case of $N > d+1$ seems quite challenging in the non-asymptotic regime.
    \item In practice, the embeddings are usually the output of a shared neural network encoder. However, our results are for the case when the embeddings only have a norm constraint. 
    Thus, our results do not readily indicate any generalization to unseen data. We expect however, that it is possible to extend our results to the shared encoder setting by assuming sufficient overparameterization.
\end{enumerate}

{
\small

\bibliography{main.bib}
\bibliographystyle{icml2022}
}

\appendix

\newpage

\onecolumn

\section*{Organization of the Appendix}
\begin{enumerate}[leftmargin=0.7cm]
    \item In Appendix~\ref{sec: add def}, we introduce an additional definition for posterity.
    \item In Appendix~\ref{sec:theory-appendix}, we provide detailed proofs of the theoretical results as well as any intermediate results/lemmas that we found useful.
    \begin{enumerate}
        \item Appendix~\ref{sec4: theory} provides proofs of the results from Section~\ref{sec:relationship_full_mini} which focuses on the relationship between the optimal solutions for minimizing the mini-batch and full-batch constrastive loss.
        \item Appendix~\ref{sec5:theory} contains the proofs of results from Section~\ref{sec:osgd} which concern the application of Ordered SGD to mini-batch contrastive learning.
        \item Appendix~\ref{sec:aux-lem} is intended to supplement Appendix~\ref{sec5:theory}. It contains auxiliary notation and proofs required in the proof of Theorem~\ref{thm:osgd_convergence}.
    \end{enumerate}
    \item Appendix~\ref{sec:alg_detail} specifies the pseudo-code and details for the three algorithms: (i) Spectral Clustering; (ii) Stochastic Gradient Descent (SGD) and (iii) Ordered SGD (OSGD).
    \item Appendix~\ref{sec:exp_detail} describes the details of the experimental settings from Section~\ref{sec:exp} while also providing some additional results.
\end{enumerate}

\section{Additional Definition} \label{sec: add def}
\begin{definition}[\citet{sustik2007existence}]
\label{def:original-etf}
    A set of $N$ vectors $\{\vu_i\}_{i=1}^N$ in the $\mathbb{R}^d$ form an equiangular tight frame (ETF) if (i) they are all unit norm: $\lVert \vu_i \rVert = 1$ for every $i \in [N]$,
    (ii) they are equiangular: $\lVert \vu_i^{\intercal} \vu_j \rVert = \alpha \ge 0$ for all $i \ne j$ and some $\alpha \geq 0$, and
     (iii) they form a tight frame: $\vU \vU^{\intercal} = (N/d) \mathbb{I}_d$ where $\vU$ is a $d\times N$ matrix whose columns are $\vu_1, \vu_2, \dots, \vu_N$, and $\mathbb{I}_d$ is the $d\times d$ identity matrix.
\end{definition}

\section{Proofs}\label{sec:theory-appendix}
\subsection{Proofs of Results From Section~\ref{sec:relationship_full_mini}} \label{sec4: theory}
\thmOne*

\begin{proof}
First, we define the contrastive loss as the sum of two symmetric one-sided contrastive loss terms to simplify the notation.
We denote the following term as the one-sided contrastive loss
\begin{equation}
\label{eq:one-side-cliploss}
    \gL(\vU, \vV)  = \frac{1}{N} \sum_{i=1}^N -\log \left(\frac{e^{\vu_i^{\intercal} \vv_i}}{\sum_{j=1}^N e^{\vu_i^{\intercal} \vv_j}}\right).
\end{equation}

Then, the overall contrastive loss is given by the sum of the two one-sided contrastive losses:
\begin{equation}
    \label{eq:sum-one-side-cliplosses}
    \gL^{\op{con}}(\vU, \vV)  = \gL(\vU, \vV) + \gL(\vV, \vU).
\end{equation}
Since $\gL^{\op{con}}$ is symmetric in its arguments, results pertaining to the optimum of $\gL(\vU, \vV)$ readily extend to $\gL^{\op{con}}$.
Now, let us consider the simpler problem of minimizing the one-sided contrastive loss from Eq.~(\ref{eq:one-side-cliploss}) which reduces the problem to exactly the same setting as \citet{lu2020neural}:
    \begin{align*}
        \gL(\vU, \vV) &= \frac{1}{N} \sum_{i=1}^N -\log \left(\frac{e^{\vu_i^{\intercal} \vv_i}}{\sum_{j=1}^N e^{\vu_i^{\intercal} \vv_j}}\right)\\
        &= \frac{1}{N} \sum_{i=1}^N \log \left(1 + \sum_{{j=1,j\neq i}}^N e^{(\vv_j - \vv_i)^{\intercal} \vu_i}\right).
    \end{align*}
Note that, we have for any fixed $1\leq i \leq N,$
\begin{align}
    \sum_{{j=1,j\neq i}}^N e^{(\vv_j - \vv_i)^{\intercal} \vu_i} &= e^{-(\vv_i^{\intercal} \vu_i)} \sum_{{j=1,j\neq i}}^N e^{\vv_j^{\intercal} \vu_i} \nonumber\\
    &= (N-1) e^{-(\vv_i^{\intercal} \vu_i)} \left(\frac{1}{N-1}\right) \sum_{{j=1,j\neq i}}^N e^{\vv_j^{\intercal} \vu_i} \nonumber\\
    &\overset{(a)}{\geq} (N-1)e^{-(\vv_i^{\intercal} \vu_i)}\exp\left(\frac{1}{N-1} \sum_{{j=1,j\neq i}}^N \vv_j^{\intercal} \vu_i \right) \nonumber\\
    &\overset{(b)}{=} (N-1)e^{-(\vv_i^{\intercal} \vu_i)} \exp\left(\frac{\vv^{\intercal} \vu_i - \vv_i^{\intercal} \vu_i}{N-1}\right)\nonumber\\
    &= (N-1) \exp\left(\frac{\vv^{\intercal} \vu_i - N(\vv_i^{\intercal} \vu_i)}{N-1}\right) \label{eq:jensen1},
\end{align}
where $(a)$ follows by applying Jensen inequality for $e^t$ and $(b)$ follows from $\vv:=\sum_{i=1}^N\vv_i$. Since $\log(\cdot)$ is monotonic, we have that $x > y \Rightarrow \log(x) > \log(y)$ and therefore,
\begin{align}
    \gL(\vU, \vV) &\geq {1 \over N} \sum_{i=1}^N \log \left[1 + (N-1)\exp\left(\frac{\vv^{\intercal} \vu_i}{N-1} - \frac{N(\vv_i^{\intercal} \vu_i)}{N-1} \right)\right] \nonumber\\
    &\overset{(c)}{\geq} \log \left[ 1 + (N-1)\exp\left(\frac{1}{N} \sum_{i=1}^N \left( \frac{\vv^{\intercal} \vu_i}{N-1} - \frac{N(\vv_i^{\intercal} \vu_i)}{N-1}\right) \right)\right] \nonumber\\
    &\overset{(d)}{=} \log\left[1 + (N-1)\exp\left(\frac{1}{N}\left(\frac{\vv^{\intercal} \vu}{N-1} - \frac{N}{N-1}\sum_{i=1}^N(\vv_i^{\intercal} \vu_i)\right)\right) \right], \label{eq:jensen2}
\end{align}
where $(c)$ follows by applying Jensen inequality to the convex function $\phi(t) = \log(1 + ae^{bt})$ for $a, b > 0$, and $(d)$ follow from $\vu := \sum_{i=1}^N \vu_i $. 

Note that for equalities to hold in Eq.~(\ref{eq:jensen1}) and~(\ref{eq:jensen2}), we need constants $c_i, c$ such that
\begin{align}
    &\vv_j^{\intercal} \vu_i = c_i \quad \forall j \neq i \label{eq:cond1},\\
    &\frac{\vv^{\intercal} \vu_i}{N-1} - \frac{N(\vv_i^{\intercal} \vu_i)}{N-1} = c \quad \forall i \in [N] \label{eq:cond2}.
\end{align}
Since $\log(\cdot)$ and $\exp(\cdot)$ are both monotonic, minimizing the lower bound in Eq.~(\ref{eq:jensen1}) is equivalent to
\begin{align}
    \min \quad &\frac{\vv^{\intercal} \vu}{N-1} - \frac{N}{N-1}\sum_{i=1}^N \vv_i^{\intercal} \vu_i \nonumber\\
    \Leftrightarrow \max \quad &N\sum_{i=1}^N \vv_i^{\intercal} \vu_i - \Big(\sum_{i=1}^N \vv_i\Big)^{\intercal} \Big(\sum_{i=1}^N \vu_i\Big). \label{eq:final-upper-bound}
\end{align}
All that remains is to show that the solution that maximizes Eq~\ref{eq:final-upper-bound} also satisfies the conditions in Eq.~(\ref{eq:cond1}) and~(\ref{eq:cond2}). To see this, first note that the maximization problem can be written as
\begin{align*}
    \max \quad \vv_{\text{stack}}^{\intercal} ((N\mathbb{I}_N - \mathbf{1}_N \mathbf{1}_N^{\intercal}) \otimes \mathbb{I}_d) \vu_{\text{stack}}
\end{align*}
where $\vv_{\text{stack}} = (\vv_1, \vv_2, \dots, \vv_n)$ is a vector in $\mathbb{R}^{Nd}$ formed by stacking the vectors $\vv_i$ together. $\vu_{\text{stack}}$ is similarly defined. $\mathbb{I}_N$ denotes the $N\times N$ identity matrix, $\mathbf{1}_N$ denotes the all-one vector in $\mathbb{R}^n$, and $\otimes$ denotes the Kronecker product. It is easy to see that $\lVert\vu_{\text{stack}}\rVert=\lVert\vv_{\text{stack}}\rVert = \sqrt{N}$ since each $\lVert \vu_i \rVert=\lVert \vv_i \rVert = 1$. Since the eigenvalues of $A \otimes B$ are the product of the eigenvalues of $A$ and $B$, in order to analyze the spectrum of the middle term in the above maximization problem, it suffices to just consider the eigenvalues of $(N\mathbb{I}_N - \mathbf{1}_N \mathbf{1}_N^{\intercal})$. As shown by the elegant analysis in \citet{lu2020neural}, $(N\mathbb{I}_N - \mathbf{1}_N \mathbf{1}_N^{\intercal}) \vp = N\vp$ for any $\vp \in \mathbb{R}^N$ such that $\sum_{i=1}^N \vp_i = 0$ and $(N\mathbb{I}_N - \mathbf{1}_N \mathbf{1}_N^{\intercal}) \vq = 0$ for any $\vq \in \mathbb{R}^N$ such that $\vq = k \mathbf{1}_N$ for some $k \in \mathbb{R}$. Therefore it follows that its eigenvalues are $N$ with multiplicity $(N-1)$ and $0$. Since its largest eigenvalue is $N$ and since $\lVert \vu_{\text{stack}} \rVert = \lVert \vv_{\text{stack}} \rVert = \sqrt{N}$, applying cauchy schwarz inequality, we have that
\begin{align*}
    &\max \quad \vv_{\text{stack}}^{\intercal} (N\mathbb{I}_N - \mathbf{1}_N \mathbf{1}_N^{\intercal}) \otimes \mathbb{I}_d) \vu_{\text{stack}}^{\intercal}\\
    &= \lVert \vv_{\text{stack}} \rVert \cdot \lVert (N\mathbb{I}_n - \mathbf{1}_n \mathbf{1}_n^{\intercal}) \otimes \mathbb{I}_d) \rVert \cdot \lVert \vu_{\text{stack}} \rVert\\
    &= \sqrt{N} (N) \sqrt{N}\\
    &= N^2.
\end{align*}
Moreover, we see that setting $\vu_i = \vv_i$ and setting $\{\vu_i\}_{i=1}^N$ to be the simplex ETF attains the maximum above while also satisfying the conditions in Eq.~(\ref{eq:cond1}) and~(\ref{eq:cond2}) with $c_i = -1/(N-1)$ and $c=-N/(N-1)$. Therefore, the inequalities in Eq.~(\ref{eq:jensen1}) and~(\ref{eq:jensen2}) are actually equalities for $\vu_i = \vv_i$ when they are chosen to be the simplex ETF in $\mathbb{R}^d$ which is attainable since $d \geq N-1$. Therefore, we have shown that if $\vU^\star = \{\vu_i^{\star}\}_i=1^N$ is the simplex ETF and $\vu_i^{\star} = \vv_i^{\star}\; \forall i \in [N]$, then
$\vU^\star, \vV^\star = arg\min_{\vU, \vV} \gL(\vU, \vV)$ over the unit sphere in $\mathbb{R}^n$. All that remains is to show that this is also the minimizer for $\lclip$.

First note that $\vU^\star, \vV^\star$ is also the minimizer for $\gL(\vV, \vU)$ through symmetry. One can repeat the proof exactly by simply exchanging $\vu_i$ and $\vv_i$ to see that this is indeed true. Now recalling Eq.~(\ref{eq:sum-one-side-cliplosses}), we have
\begin{align}
    \min \lclip &= \min{(\gL(\vU, \vV) + \gL(\vU, \vV))} \nonumber\\
    &\geq \min{(\gL(\vU, \vV))} + \min{(\gL(\vU, \vV))} \label{eq:sum-one-side-loss-ineq}\\
    &= \gL(\vU^\star, \vV^\star) + \gL(\vV^\star, \vU^\star). \nonumber
\end{align}
However, since the minimizer of both terms in Eq.~(\ref{eq:sum-one-side-loss-ineq}) is the same, the inequality becomes an equality. Therefore, we have shown that ($\vU^\star, \vV^\star)$ is the minimizer of $\lclip$ completing the proof.
\end{proof}

\begin{remark}
In the proof of the above Lemma, we only show that the simplex ETF attains the minimum loss  in Eq.~(\ref{prob:standard-full-batch}), but not that it is the only minimizer. The proof of \citet{lu2020neural} can be extended to show that this is indeed true as well. We omit it here for ease of exposition.
\end{remark}

\thmPoly*
\begin{proof}
By applying the logarithmic property that allows division to be represented as subtraction,
\begin{align*}
    \gL(\vU, \vV)&= - {1 \over N} \sum_{i=1}^{N} \log \left(\frac{e^{\vu_i^{\intercal}\vv_i}}{\sum_{j=1}^{N} e^{\vu_i^{\intercal} \vv_j}}\right) \\
    &=-{1 \over N} \sum_{i=1}^{N} \left[ \vu_i^{\intercal}\vv_i - \log \Big( \sum_{j=1}^{N} e^{\vu_i^{\intercal} \vv_j}\Big)\right].
\end{align*}
Since $\vU= \vV$ (\emph{symmetric} property), the contrastive loss satisfies 
\begin{align}
     \lclip(\vU, \vV)
     &= 2\gL(\vU, \vU) \nonumber \\
     &=-{2 \over N} \sum_{i=1}^{N} \left[ \vu_i^{\intercal}\vu_i - \log \Big( \sum_{j=1}^{N} e^{\vu_i^{\intercal} \vu_j}\Big)\right] \nonumber \\
     &= -2+ {2 \over N} \sum_{i=1}^{N} \log\big(\sum_{j=1}^{N}
     e^{\vu_i^{\intercal}\vu_j}\big) \label{lclip_summary}.
\end{align}
Since $\norm{\vu_i}=1$ for any $i \in [N]$, we can derive the following relations:
\begin{align*}
    \norm{\vu_i-\vu_j}^2 = 2-2\vu_i^{\intercal}\vu_j, \quad
    \vu_i^{\intercal}\vu_j = 1-{\norm{\vu_i-\vu_j}^2 \over 2}.
\end{align*}
We incorporate these relations into Eq.~(\ref{lclip_summary}) as follows:
\begin{align*}
    \lclip(\vU, \vV)
    &=-2+{2 \over N} \sum_{i=1}^{N} \log \big(\sum_{j=1}^{N} e^{1-\norm{\vu_i-\vu_j}^2/2}\big)\\
    &={2 \over N}  \sum_{i=1}^{N} \log \big(\sum_{j=1}^{N} e^{-\norm{\vu_i-\vu_j}^2/2}\big).
\end{align*}
The antipodal property of $\vU$ indicates that for each $i \in[N]$, there exists a $j(i)$ such that $u_{j(i)}=-u_i$. By applying this property, we can manipulate the summation of $e^{-\norm{\vu_i-\vu_j}^2/2}$ over $j$ as the following: 
\begin{align*}
    \sum_{j=1}^{N} e^{-\norm{\vu_i-\vu_j}^2/2} &= e^{-\norm{\vu_i-\vu_i}^2/2}+e^{-\norm{\vu_i-\vu_{j(i)}}^2/2}+\sum_{j \neq i, j(i)} e^{-\norm{\vu_i-\vu_j}^2/2}\\
    &=1+e^{-2}+\sum_{j \neq i, j(i)} e^{-\norm{\vu_i-\vu_j}^2/2}.
\end{align*}
Therefore,
 \begin{align}
    &\lclip(\vU, \vV) = {2 \over N} \sum_{i=1}^{N} \log \Big( 1+e^{-2}+\sum_{j \neq i, j(i)} e^{-\norm{\vu_i-\vu_j}^2/2} \Big) \nonumber\\ 
    &\overset{(a)}{\geq} {2 \over N(N-2)} \sum_{i=1}^{N} \sum_{j \neq i, j(i)} \log\big(1+e^{-2}+(N-2)e^{-\norm{\vu_i-\vu_j}^2/2}\big) \nonumber \\
    &= {2 \over N(N-2)} \sum_{i=1}^{N} \sum_{j \neq i} \log\big(1+e^{-2}+(N-2)e^{-\norm{\vu_i-\vu_j}^2/2}\big) - {2 \over N-2} \log(1+(N-1)e^{-2}) \nonumber\\
    &\overset{(b)}{\geq} {2 \over N(N-2)} \sum_{i=1}^{N} \sum_{j \neq i} \log\big(1+e^{-2}+(N-2)e^{-\norm{\vu_i^{\star}-\vu_j^{\star}}^2/2}\big) - {2 \over N-2} \log(1+(N-1)e^{-2}), \nonumber
\end{align}
where (a) follows by applying Jensen's inequality to the concave function $f(t)=\log(1+e^{-2}+t)$; and (b) follows by Lem.~\ref{lemma:elementary_cohn-kumar}, and the fact that function $g(t)=\log[1+e^{-2}+(N-2)e^{-t/2}]$ is convex and monotonically decreasing. $\{\vu^{\star}_1, \cdots, \vu^{\star}_N\}$ denotes a set of vectors which forms a cross-polytope. 

Both inequalities in $(a)$ and $(b)$ are equalities only when the columns of $\vU$ form a cross-polytope. Therefore, the columns of $\vU^{\star}$ form a cross-polytope.  
\end{proof}

\begin{lemma} \label{lemma:elementary_cohn-kumar}
    Given a function $g(t)$ is convex and monotonically decreasing, let
     \begin{align} \label{g-energy}
     \vU^* := \arg\min \limits_{\vU \in \gA} \sum_{i=1}^{N} \sum_{j \neq i} g(\|\vu_i - \vu_j\|^2)\quad\text{s.t.}\quad\|\vu_i\|=1, \|\vv_i\|=1\quad\forall i\in[N],
     \end{align}
     where $\gA:=\{\vU: \vU\text{ is antipodal}\}$. Then, the columns of $\vU^*$ form a simplex cross-polytope for $N=2d$.
\end{lemma}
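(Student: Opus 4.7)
The plan is to exploit the antipodality constraint to reduce the problem to a choice of $d$ pair representatives, and then use the convexity of $g$ to argue that pairwise orthogonality of those representatives is optimal.

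First, I would parametrize using antipodality: write the $N=2d$ unit vectors as $\pm \vp_1,\ldots, \pm \vp_d$ where each $\vp_k$ is a unit vector. Then the objective decomposes as
$$\sum_{i=1}^{N}\sum_{j\neq i} g(\|\vu_i-\vu_j\|^2) \;=\; 2d\, g(4) \;+\; 2\sum_{p\neq q}\Bigl[\,g\bigl(2-2\vp_p^{\intercal}\vp_q\bigr) + g\bigl(2+2\vp_p^{\intercal}\vp_q\bigr)\,\Bigr],$$
since for each $i$ there is exactly one antipodal partner at squared distance $4$ (contributing the constant $2d\,g(4)$), while for every ordered pair of distinct pair-indices $(p,q)$ the four cross vectors $\pm\vp_p$ versus $\pm\vp_q$ produce two copies each of the squared distances $2\mp 2\vp_p^{\intercal}\vp_q$.

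Second, I would define $h(t):=g(2-2t)+g(2+2t)$ on $[-1,1]$ and show that $h$ is minimized at $t=0$. The function is even since $h(-t)=h(t)$, and on $[0,1]$ its derivative equals $2\bigl[g'(2+2t)-g'(2-2t)\bigr]\ge 0$ because $g'$ is nondecreasing by convexity. Hence every cross-pair summand $h(\vp_p^{\intercal}\vp_q)$ is bounded below by $h(0)=2g(2)$, with equality when $\vp_p\perp\vp_q$. Summing these bounds gives a lower bound on the total energy that depends only on $g$ and $d$.

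Third, because $N=2d$, mutual orthogonality of all pair representatives is simultaneously achievable: choosing $\{\vp_p\}_{p=1}^{d}$ to be any orthonormal basis of $\mathbb{R}^d$ attains every cross-pair lower bound at once, and the resulting configuration $\{\pm\vp_p\}_{p=1}^{d}$ is precisely a simplex cross-polytope in the sense of Def.~\ref{def:cross-polytope}. The main obstacle is the uniqueness claim: for a merely convex $g$, the function $h$ could be constant on an interval around $0$, yielding a non-cross-polytope minimizer; the argument therefore only shows that cross-polytopes are among the minimizers in general. Strict convexity (or strict monotonicity of $g'$) forces $\vp_p^{\intercal}\vp_q=0$ for all $p\ne q$, and this holds for the specific $g(t)=\log\bigl[1+e^{-2}+(N-2)e^{-t/2}\bigr]$ arising in Thm.~\ref{thm:standard-full-batch-case2}, which closes the gap in the application.
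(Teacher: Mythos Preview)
Your argument is correct and takes a genuinely different route from the paper's proof. The paper separates out the antipodal terms $Ng(4)$ and then applies Jensen's inequality to the remaining $N(N-2)$ cross terms simultaneously, bounding their average argument by the average of $\|\vu_i-\vu_j\|^2$ over $j\neq i,j(i)$; that average computes to $2$ (using $\sum_i\vu_i=0$, which holds automatically under antipodality), and equality in Jensen forces all cross distances to equal $2$, i.e.\ the cross-polytope.

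Your approach instead parametrizes by pair representatives $\vp_1,\dots,\vp_d$ and bounds each unordered pair $\{p,q\}$ separately via the symmetrized function $h(t)=g(2-2t)+g(2+2t)$. This is more elementary: no Jensen is invoked, and in fact you never use the ``monotonically decreasing'' hypothesis at all---convexity alone makes $h$ convex and even, hence minimized at $t=0$. (The paper's proof nominally appeals to monotonicity for the step $\|\sum_i\vu_i\|^2\ge 0$, but that step is vacuous under antipodality, so neither proof truly needs it.) Your treatment of uniqueness is also sharper: the paper asserts equality holds ``only when'' the columns form a cross-polytope, whereas you correctly note that mere convexity allows $h$ to be flat near $0$, so the lemma as stated only guarantees that cross-polytopes are \emph{among} the minimizers, with strict convexity of $g$ (which holds for the specific $g$ in Thm.~\ref{thm:standard-full-batch-case2}) needed to force uniqueness. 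One small tightening: rather than writing $h'(t)=2[g'(2+2t)-g'(2-2t)]$, which presumes differentiability, you can simply observe that $h$ is convex (sum of convex functions precomposed with affine maps) and even, which immediately gives $h(t)\ge h(0)$.
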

\begin{proof}
    Suppose $N=2d$ and $\vU\in\gA$. Given a function $g(t)$ is convex and monotonically decreasing. 
    $j(i)$ denotes the corresponding index for $i$ such that $\vu_{j(i)}=-\vu_i$, and $\|\vu_i-\vu_{j(i)}\|^2=4$. Under these conditions, we derive the following: 
    \begin{align*}
        \sum_{i=1}^{N} \sum_{j \neq i} g(\|\vu_i - \vu_j\|^2) &\overset{}{=} N g(4)+ \sum_{i=1}^{N} \sum_{j \neq i, j(i)} g(\|\vu_i - \vu_j\|^2) \\
        &\overset{(a)}{\geq} N g(4)+N(N-2)g\Big(\frac{1}{N(N-2)}\sum_{i=1}^{N} \sum_{j \neq i, j(i)}\|\vu_i - \vu_j\|^2\Big) \\
        &\overset{}{=} N g(4)+N(N-2)g\Big(\frac{1}{N(N-2)}\Big(-4N+\sum_{i=1}^{N} \sum_{j =1}^N\|\vu_i - \vu_j\|^2\Big)\Big) \\
        &\overset{}{=} N g(4)+N(N-2)g\Big(\frac{1}{N(N-2)}\Big(-4N+\sum_{i=1}^{N} \sum_{j =1}^N(2-2\vu_i^{\intercal} \vu_j )\Big)\Big) \\
        &\overset{}{=} N g(4)+N(N-2)g\Big(\frac{1}{N(N-2)}\Big(-4N+2N^2-\Big\|\sum_{i=1}^N\vu_i\Big\|^2\Big)\Big) \\
        &\overset{(b)}{\geq} N g(4)+N(N-2)g\Big(\frac{1}{N(N-2)}\Big(-4N+2N^2\Big)\Big)\\
        &=Ng(4)+N(N-2)g(2), 
    \end{align*}
    where $(a)$ follows by Jensen's inequality; and (b) follows from the fact that $\norm{\sum_{i=1}^N\vu_i}^2 \geq 0$ and the function $g(t)$ is monotonically decreasing. The equality conditions for $(a)$ and $(b)$ only hold when the columns of $\vU$ form a cross-polytope. We can conclude that the columns of $\vU^{\star}$ form a cross polytope.
\end{proof}
\propOne*
\begin{proof}
Consider $\widetilde{\vU},\widetilde{\vV}$ defined such that $\tilde{\vu}_i = \tilde{\vv}_i = \ve_i\; \forall i \in [N],$ where $\ve_i$ is $i$-th unit vector in $\mathbb{R}^N.$ First note that $\tilde{\vu}_i^{\intercal} \tilde{\vv}_i = 1\; \forall i \in [N]$ and $\tilde{\vu}_i^{\intercal} \tilde{\vv}_j = 0 \; \forall{i \neq j}$. Then,
\begin{align}
    &\gL(\widetilde{\vU}, \widetilde{\vV}) 
    = \log(e + N - 1) -1 \label{eq:full-loss-utilde},\\
    &\frac{1}{{N\choose B}}\sum_{i=1}^{N\choose B} \gL(\widetilde{\vU}_{\gB_i}, \widetilde{\vV}_{\gB_i}) 
    = \log(e + B - 1) - 1 \label{eq:ncb-loss-utilde}.
\end{align}

We now consider the second part of the statement. For contradiction, assume that there exists some $c \in \mathbb{R}$ such that $\lclip_{\op{mini}}(\vU,\vV; \gS_B)= c\cdot\lclip(\vU,\vV) \quad \text{for all} \quad \vU, \vV$. Let $\widehat{\vU}, \widehat{\vV}$ be defined such that $\hat{\vu}_i = \hat{\vv}_i = \ve_1\; \forall i \in [N]$, where $\ve_1 = (1, 0, \cdots, 0).$ Note that $\hat{\vu}_i^{\intercal} \hat{\vv}_j = 1\; \forall i, j \in [N]$. Then,
\begin{align}
    &\gL(\widehat{\vU}, \widehat{\vV}) 
    = \log(N) \label{eq:full-loss-uhat},\\
    &\frac{1}{{N\choose B}}\sum_{i=1}^{N\choose B} \gL(\widehat{\vU}_{\gB_i}, \widehat{\vV}_{\gB_i}) 
    = \log(B). \label{eq:ncb-loss-uhat}
\end{align}

From Eq.~(\ref{eq:full-loss-utilde}) and~(\ref{eq:ncb-loss-utilde}), we have that $c = \frac{\log(e+B-1)-1}{\log(e+N-1)-1}$. Whereas from Eq.~(\ref{eq:full-loss-uhat}) and~(\ref{eq:ncb-loss-uhat}), we have that $c = \frac{\log(B)}{\log(N)}$ which is a contradiction. Therefore, there exists no $c \in \mathbb{R}$ satisfying the given condition.
\end{proof}

\thmTwo*
\begin{proof}
\textbf{Case (i)}: Suppose $N \leq d+1.$
\newline \newline
For simplicity, first consider just one of the two terms in the two-sided loss. Therefore, the optimization problem becomes
\begin{align*}
    \min_{\vU, \vV}\quad  &\frac{1}{\binom{N}{B}} \sum_{i=1}^{\binom{N}{B}}  \gL(\vU_{{\gB}_i}, \vV_{{\gB}_i})  \quad
    s.t. \quad  \lVert \vu_i \rVert = 1, \lVert \vv_i \rVert = 1\; \forall i \in [N].
\end{align*}

Similar to the proof of Lem.~\ref{thm:standard-full-batch-case1}, we have that
\begin{align*}
    &\sum_{i=1}^{N\choose B} \gL(\vU_{{\gB}_i}, \vV_{{\gB}_i})
    = {1 \over B} \sum_{i=1}^{N\choose B} \sum_{j \in {\gB}_i} \log \left(1 + \sum_{\substack{k\in \gB_{i}\\ k\neq j}} e^{{\vu_j}^{\intercal} (\vv_k - \vv_j)}\right)\\
    &\overset{(a)}{\geq} {1 \over B} \sum_{i=1}^{N\choose B} \sum_{j \in {\gB}_i} \log \left(1 + (B-1)\exp \left(\frac{\sum_{k\in \gB_{i}, k\neq j} \vu_j^{\intercal} (\vv_k - \vv_j)}{B-1}\right)\right) \\ 
    &= {1 \over B} \sum_{i=1}^{N\choose B} \sum_{j \in {\gB}_i} \log \left(1 + (B-1) \exp \left(\frac{ \sum_{k \in \gB_{i}} \left(\vu_j^{\intercal} \vv_k - B \vu_j^{\intercal} \vv_j\right)}{B-1} \right)\right)\\
    &\overset{(b)}{\geq} {N\choose B}  \log\left(1 + (B-1) \exp \left(\frac{\sum_{i=1}^{N\choose B} \sum_{j \in \gB_i} \sum_{k \in \gB_i} \vu_j^{\intercal} \vv_k - \sum_{i=1}^{N\choose B}\sum_{j \in \gB_i}B \vu_j^{\intercal} \vv_j}{{N\choose B}\cdot B \cdot (B-1)}\right)\right), 
\end{align*}
where $(a)$ and $(b)$ follows by applying Jensen's inequality to $e^t$ and $\log(1+ae^{bt})$ for $a,b>0$, respectively. Note that for equalities to hold in Jensen's inequalities, we need constants $c_j, c$ such that
\begin{align}
    &\vu_j^{\intercal} \vv_k = c_j \quad \forall k \neq j \label{eq:cond3},\\
    &\frac{\vu^{\intercal} \vv_i}{N-1} - \frac{N(\vu_i^{\intercal} \vv_i)}{N-1} = c \quad \forall i \in [N] \label{eq:cond4}.
\end{align}

Now, we carefully consider the two terms in the numerator:
\begin{align*}
 A_1:= \sum_{i=1}^{N\choose B} \sum_{j \in \gB_i} \sum_{k \in \gB_i} \vu_j^{\intercal} \vv_k, \quad
 A_2 := \sum_{i=1}^{N\choose B}\sum_{j \in \gB_i} B \vu_j^{\intercal} \vv_j.
\end{align*}
To simplify $A_1$, first note that for any fixed $l, m \in [N]$ such that $l\neq m$, there are ${{N-2}\choose {B-2}}$ batches that contain $l$ and $m$. And for $l=m$, there are ${{N-1}\choose {B-1}}$ batches that contain that pair. Since these terms all occur in $A_1$, we have that

\begin{align*}
    A_1 &= {{N-2}\choose {B-2}} \sum_{l=1}^N \sum_{m=1}^N \vu_l^{\intercal} \vv_m + \left[{{N-1}\choose {B-1}} - {{N-2}\choose {B-2}}\right]\sum_{l=1}^N \vu_l^{\intercal} \vv_l\\
    &= {{N-2}\choose {B-2}} \sum_{l=1}^N \sum_{m=1}^N \vu_l^{\intercal} \vv_m + {{N-2}\choose {B-2}}\left(\frac{N-B}{B-1}\right) \sum_{l=1}^N \vu_l^{\intercal} \vv_l.
\end{align*}

Similarly, we have that
\begin{align*}
    A_2 = {{N-1}\choose {B-1}} B \sum_{l=1}^N \vu_l^{\intercal} \vv_l.
\end{align*}
Plugging these back into the above inequality, we have that
\begin{align*}
    \sum_{i=1}^{N\choose B} \gL(\vU_{{\gB}_i}, \vV_{{\gB}_i}) &\geq {N\choose B}  \log \left(1 + (B-1)\exp\left(\frac{\sum_{l=1}^N\sum_{m=1}^N \vu_l^{\intercal} \vv_m - N\sum_{l=1}^N \vu_l^{\intercal}\vv_l}{N(N-1)} \right)\right)\\
    &= {N\choose B}  \log \left(1 + (B-1)\exp\left(\frac{\vu^{\intercal} \vv - N\sum_{i=1}^N \vu_i^{\intercal}\vv_i}{N(N-1)} \right)\right).
\end{align*}
Observe that the term inside the exponential is identical to Eq.~(\ref{eq:jensen2}) and therefore, we can reuse the same spectral analysis argument to show that the simplex ETF also minimizes $\sum_{i=1}^{N\choose B} \gL(\vU_{{\gB}_i}, \vV_{{\gB}_i})$. Once again, since the proof is symmetric the simplex ETF also minimizes $\sum_{i=1}^{N\choose B} \gL(\vV_{{\gB}_i}, \vU_{{\gB}_i})$.

\bigskip

\noindent \textbf{Case (ii)}: Suppose $N = 2d,$ and $\vU$, $\vV$ are symmetric and antipodal. Next, we consider the following optimization problem
\begin{align}
\min_{(\vU,\vV)\in\gA} \quad &\frac{1}{\binom{N}{B}} \sum_{i=1}^{\binom{N}{B}}  \lclip(\vU_{{\gB}_i}, \vV_{{\gB}_i}) \quad
    s.t. \quad  \lVert \vu_i \rVert = 1, \lVert \vv_i \rVert = 1\; \forall i \in [N], \label{mini-batch opt}    
\end{align}
where $\gA:=\{(\vU,\vV): \vU, \vV\text{ are symmetric and antipodal%
}\}$. 
Since $\vU= \vV$ (\emph{symmetric} property) the contrastive loss satisfies 
    \begin{align}
         \lclip(\vU_{{\gB}_i}, \vV_{{\gB}_i})
         &= 2\gL(\vU_{{\gB}_i}, \vU_{{\gB}_i}) \nonumber \\
         &=-{2 \over B} \sum_{j\in\gB_i} \left[ \vu_j^{\intercal}\vu_j - \log \Big( \sum_{k\in\gB_i} e^{\vu_j^{\intercal} \vu_k}\Big)\right] \nonumber \\
         &= -2+ {2 \over B} \sum_{j\in\gB_i} \log\big(\sum_{k\in\gB_i}
         e^{\vu_j^{\intercal}\vu_k}\big) \label{lclip_summary}.
    \end{align}
Therefore, the solution of the optimization problem in Eq.~(\ref{mini-batch opt}) is identical to the minimizer of the following optimization problem:
\begin{equation*}
\vU^{\star}:=\arg\min_{\vU} \quad \sum_{i=1}^{N \choose B} \sum_{j \in \gB_i} \log \Big(\sum_{k \in \gB_i} e^{\vu_j^{\intercal} \vu_k}\Big).  
\end{equation*}
The objective of the optimization problem can be rewritten by reorganizing summations as
\begin{equation}
\label{eq:summation_term}
\sum_{j=1}^{N} \sum_{i\in \gI_j} \log \Big(\sum_{k \in \gB_i} e^{\vu_j^{\intercal} \vu_k}\Big),
\end{equation}
where $\gI_j:=\{i: j \in \gB_i \}$ represents the set of batch indices containing $j$. We then divide the summation term in Eq.~(\ref{eq:summation_term}) into two terms:
\begin{equation}
\label{eq:two_terms}
\sum_{j=1}^{N} \sum_{i\in \gI_j} \log \Big(\sum_{k \in \gB_i} e^{\vu_j^{\intercal} \vu_k}\Big)
= \sum_{j=1}^{N} \sum_{i\in \gA_j} \log \Big(\sum_{k \in \gB_i} e^{\vu_j^{\intercal} \vu_k}\Big)
+\sum_{j=1}^{N} \sum_{i\in \gA_j^c} \log \Big(\sum_{k \in \gB_i} e^{\vu_j^{\intercal} \vu_k}\Big),
\end{equation}
by partitioning the set $\gI_j$ for each $j \in [N]$ into as the following with $k(j)$ being the index for which $u_{k(j)}=-u_j$: 
\begin{align*}
    \gA_j := \{i:  j \in \gB_i,
    \text{ and }k(j) \in \gB_i \};\quad \gA_j^c := \{i: j\in \gB_i, \text{ and }k(j) \notin \gB_i\}.
\end{align*}
We will prove that the columns of $\vU^*$ form a cross-polytope by showing that the minimizer of each term of the RHS in Eq.~(\ref{eq:two_terms}) also forms a cross-polytope. Let us start with the first term of the RHS in Eq.~(\ref{eq:two_terms}). Starting with applying Jensen's inequality to the concave function $f(x) :=\log(e+e^{-1}+ x)$, we get:
\begin{align*}
        &\sum_{j=1}^{N}  \sum_{i \in \gA_j} \log \Big( \sum_{k \in \gB_i} e^{\vu_j^{\intercal} \vu_k}\Big) 
        =\sum_{j=1}^{N}\sum_{i \in \gA_j} \log \Big( e+e^{-1}+\sum_{k \in \gB_i \setminus \{j, k(j)\}} e^{\vu_j^{\intercal} \vu_k} \Big)\\
        &\overset{}{\geq} {1 \over B-2} \sum_{j=1}^{N} \sum_{i \in \gA_j} \sum_{k \in \gB_i \setminus \{j, k(j)\}} \log \big(e+e^{-1}+(B-2)e^{\vu_j^{\intercal} \vu_k} \big)\\
        &={1 \over B-2} \sum_{j=1}^{N} \sum_{k \notin \{j, k(j)\}} {N-3 \choose B-3} \log \big(e+e^{-1}+(B-2)e^{\vu_j^{\intercal} \vu_k} \big)\\
        &={{N-3 \choose B-3} \over B-2} \Big[\sum_{j=1}^{N} \sum_{k \neq j}  \log \big(e+e^{-1}+(B-2)e^{\vu_j^{\intercal} \vu_k} \big) - N \log\big(e+(B-1)e^{-1}\big) \Big]\\
        &={{N-3 \choose B-3} \over B-2} \Big[\sum_{j=1}^{N} \sum_{k \neq j}  \log \big(e+e^{-1}+(B-2)e\cdot e^{-\frac{\|\vu_j- \vu_k\|^2}{2}} \big) - N \log\big(e+(B-1)e^{-1}\big) \Big]\\
        &\overset{(a)}{\geq}{{N-3 \choose B-3} \over B-2} \Big[\sum_{j=1}^{N} \sum_{k \neq j}  \log \big(e+e^{-1}+(B-2)e\cdot e^{-\frac{\|\vu_j^{\star}- \vu_k^{\star}\|^2}{2}} \big) - N \log\big(e+(B-1)e^{-1}\big) \Big],
\end{align*} 
where $(a)$ follows by Lem.~\ref{lemma:elementary_cohn-kumar} and the fact that $g(t)=\log(a+be^{-\frac{t}{2}})$ for $a,b>0$ is convex and monotonically decreasing. $\{\vu^{\star}_1, \cdots, \vu^{\star}_N\}$ denotes a set of vectors which forms a cross-polytope. All equalities hold only when the columns of $\vU$ form a cross-polytope.

Next consider the second term of the RHS in Eq.~(\ref{eq:two_terms}). By following a similar procedure above, we get: 
\begin{align*}
   &\sum_{j=1}^{N} \sum_{i \in \gA^c_j} \log \Big( \sum_{k \in \gB_i} e^{\vu_j^{\intercal} \vu_k}\Big) 
   \geq {1 \over B-1} \sum_{j=1}^{N} \sum_{i \in \gA_j} \sum_{k \in \gB_i \setminus \{j\}} \log \Big(e+ (B-1)e^{\vu_j^{\intercal} \vu_k}\Big)\\
   &= {1 \over B-1} \sum_{j=1}^{N}\sum_{k \notin \{j, k(j)\}} \binom{N-3}{B-2}\log \Big(e+ (B-1)e^{\vu_j^{\intercal} \vu_k}\Big)\\
   &={{N-3 \choose B-2} \over B-1} \Big[ \sum_{j=1}^{N} \sum_{k \neq j} \log \big(e+(B-1)e^{\vu_j^{\intercal} \vu_k}\big) - N\log\big(e+(B-1)e^{-1}\big)\Big]\\
   &\geq{{N-3 \choose B-2} \over B-1} \Big[\sum_{j=1}^{N}\sum_{k \neq j} \log \big(e+(B-1)e\cdot e^{-\frac{\|\vu_j^{\star}- \vu_k^{\star}\|^2}{2}}\big) - N\log\big(e+(B-1)e^{-1}\big)\Big],
\end{align*} 
where $\{\vu^{\star}_1, \cdots, \vu^{\star}_N\}$ denotes a set of vectors which forms a cross-polytope.

Both terms of RHS in Eq.~(\ref{eq:two_terms}) have the minimum value when $\vU$ forms a cross-polytope. Therefore, we can conclude that the columns of $\vU^{\star}$ form a cross-polytope.
\end{proof}
\thmThree*
\begin{proof}
    Consider a set of batches $\gS_{B} \subset\left[{N\choose 2}\right]$ with the batch size $B=2$. Without loss of generality, assume that $(1, 2) \notin \bigcup_{i \in \gS_B} \{\gB_i\}$. For contradiction, assume that the simplex ETF - $(\vU^\star, \vV^\star)$ is indeed the optimal solution of the loss over these $\gS_B$ batches. Then, by definition, we have that for any $(\vU, \vV) \neq (\vU^\star, \vV^\star),$
    \begin{align}
        \frac{1}{|\gS_B |}\sum_{i \in \gS_B} \gL(\vU^\star_{\gB_i}, \vV^\star_{\gB_i}) &\leq \frac{1}{|\gS_B |}\sum_{i \in \gS_B} \gL(\vU_{\gB_i}, \vV_{\gB_i}) \nonumber\\
        \Rightarrow \sum_{i \in \gS_B} \gL(\vU^\star_{\gB_i}, \vV^\star_{\gB_i}) &\leq \sum_{i \in \gS_B} \gL(\vU_{\gB_i}, \vV_{\gB_i}), \label{eq:optimality-defn-thm3}
    \end{align}
where $(\vU^\star, \vV^\star)$ is defined such that $\vu_i^\star = \vv_i^\star$ for all $i \in [N]$ and ${\vu_i^\star}^{\intercal} \vv_j^\star = -1/(N-1)$ for all $i \neq j$. Also recall that $\lVert \vu_i \rVert = \lVert \vv_i \rVert = 1$ for all $i \in [N]$.
Therefore, we also have
\begin{align}
    \sum_{i \in \gS_B} \gL (\vU^\star_{\gB_i}, \vV^\star_{\gB_i}) &= \sum_{i \in \gS_B}\sum_{j \in \gB_i} \log\left(1 + \sum_{k \in \gB_i, k\neq j} \exp \left({\vu_j^\star}^{\intercal} (\vv_k^\star - \vv_j^\star)\right)\right) \nonumber\\
    &= \sum_{i \in \gS_B}\sum_{j \in \gB_i} \log\left(1 + \sum_{k \in \gB_i, k\neq j} \exp \left(-\frac{1}{N-1} - 1\right)\right) \nonumber\\
    &= \sum_{i \in \gS_B}\sum_{j \in \gB_i} \log\left(1 + \exp \left(-\frac{1}{N-1} - 1\right)\right), \label{eq:etf-loss-val}
\end{align}
where the last equality is due to the fact that $|\gB_i| = 2$.

Now, let us consider $(\widetilde{\vU}, \widetilde{\vV})$ defined such that $\tilde{\vu}_i = \tilde{\vv}_i$ for all $i \in [N]$, and $\tilde{\vu}_i^{\intercal} \tilde{\vv}_j = -1/(N-2)$ for all $i\neq j, (i, j) \notin \{(1,2), (2,1)\}$. Intuitively, this is equivalent to placing $\tilde{\vu}_2, \dots, \tilde{\vu}_N$ on a simplex ETF of $N-1$ points and setting $\tilde{\vu}_1 = \tilde{\vu}_2$. This is clearly possible because $d > N-1 \Rightarrow d > N-2,$ which is the condition required to place $N-1$ points on a simplex ETF in $\mathbb{R}^d$.
Therefore,
\begin{align}
    \sum_{i \in \gS_B} \gL (\widetilde{\vU}_{\gB_i}, \widetilde{\vV}_{\gB_i}) &= \sum_{i \in \gS_B}\sum_{j \in \gB_i} \log\left(1 + \sum_{k \in \gB_i, k\neq j} \exp \left(\tilde{\vu}_j^{\intercal} (\tilde{\vv}_k - \tilde{\vv}_j)\right)\right) \nonumber\\
    &= \sum_{i \in \gS_B}\sum_{j \in \gB_i} \log\left(1 + \sum_{k \in \gB_i, k\neq j} \exp \left(-\frac{1}{N-2} - 1\right)\right) \nonumber\\
    &= \sum_{i \in \gS_B}\sum_{j \in \gB_i} \log\left(1 + \exp \left(-\frac{1}{N-2} - 1\right)\right), \label{eq:utilde-loss-val}
\end{align}
where the last equality follows since $(1, 2) \notin \bigcup_{i \in \gS_B} \{\gB_i\}$.
It is easy to see from Eq.~(\ref{eq:etf-loss-val}) and~(\ref{eq:utilde-loss-val}) that $\sum_{i \in \gS_B} \gL (\widetilde{\vU}_{\gB_i}, \widetilde{\vV}_{\gB_i}) < \sum_{i \in \gS_B} \gL (\vU^\star_{\gB_i}, \vV^\star_{\gB_i})$ which contradicts Eq.~(\ref{eq:optimality-defn-thm3}). Therefore, the optimal solution of minimizing the contrastive loss over any $\gS_B \subset \left[{N\choose 2}\right]$ batches is not the simplex ETF completing the proof.
\end{proof}

\begin{restatable}{proposition}{propTwo}
\label{thm:ncb-is-necessary1}
Suppose $B\ge 2$, and let $\gS_B\subseteq \left[{\binom{N}{B}}\right]$ be a set of mini-batch indices. If there exist two data points that never belong together in any mini-batch, \ie $\exists i,j\in[N]$ \text{s.t.} $\{i,j\}\not\subset\gB_k$ for all $k\in\gS_B$, then the optimal solution of Eq.~(\ref{prob:mini-batch}) is not the minimizer of the full-batch problem in Eq.~(\ref{prob:standard-full-batch}).
\end{restatable}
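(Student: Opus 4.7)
The plan is to extend the collapse-and-respread construction in the proof of Thm.~\ref{thm:sub-batch} from $B=2$ to arbitrary $B\ge 2$. Working in the regime $N \le d+1$ so that Lem.~\ref{thm:standard-full-batch-case1} supplies every full-batch minimizer $(\vU^\star,\vV^\star)$ as a simplex ETF on $N$ points with $\vu_l^\star = \vv_l^\star$ and $\vu_l^{\star\intercal}\vu_m^\star = -1/(N-1)$ for $l\ne m$, it suffices to exhibit a feasible $(\widetilde{\vU}, \widetilde{\vV})$ whose $\gS_B$-mini-batch loss is strictly smaller than that of the ETF.

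Construction: let $i,j$ be the pair of indices supplied by the hypothesis, place $\{\tilde{\vu}_l\}_{l \ne j}$ on a simplex ETF of $N-1$ unit vectors in $\sR^d$ (possible because $N-1 \le d$), then set $\tilde{\vu}_j := \tilde{\vu}_i$ and $\tilde{\vv}_l := \tilde{\vu}_l$ for every $l \in [N]$. Every diagonal inner product is $1$, and every off-diagonal inner product $\tilde{\vu}_l^\intercal \tilde{\vv}_m$ with $\{l,m\} \ne \{i,j\}$ equals $-1/(N-2)$. The hypothesis $\{i,j\} \not\subset \gB_k$ for every $k \in \gS_B$ is precisely what guarantees that no within-batch off-diagonal pair ever hits the collapsed pair $\{i,j\}$, so plugging into Eq.~(\ref{eq:full_CLIP_loss}) gives a per-batch loss
\begin{equation*}
    \lclip(\widetilde{\vU}_{\gB_k}, \widetilde{\vV}_{\gB_k}) = 2\log\!\left(1 + (B-1)\exp\!\left(-\tfrac{N-1}{N-2}\right)\right),
\end{equation*}
versus
\begin{equation*}
    \lclip(\vU^\star_{\gB_k}, \vV^\star_{\gB_k}) = 2\log\!\left(1 + (B-1)\exp\!\left(-\tfrac{N}{N-1}\right)\right)
\end{equation*}
for the restriction of the ETF to $\gB_k$.

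The decisive inequality is $(N-1)/(N-2) > N/(N-1)$, equivalent to $(N-1)^2 > N(N-2)$, i.e.\ $1>0$. Hence the exponent is strictly more negative for the collapsed configuration and each per-batch loss strictly decreases; averaging over $k\in \gS_B$ yields $\lclip_{\op{mini}}(\widetilde{\vU}, \widetilde{\vV};\gS_B) < \lclip_{\op{mini}}(\vU^\star,\vV^\star;\gS_B)$, contradicting optimality of any simplex-ETF solution for Eq.~(\ref{prob:mini-batch}).

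The main obstacle is the reliance on an explicit description of the full-batch minimizer. For $N \le d+1$, Lem.~\ref{thm:standard-full-batch-case1} pins this down and the rest is essentially bookkeeping on top of Thm.~\ref{thm:sub-batch}. Outside this regime---either the $N=2d$ symmetric/antipodal case of Thm.~\ref{thm:standard-full-batch-case2} (where the collapse has to respect antipodality, so one would likely collapse two antipodal pairs simultaneously) or the open general $N > d+1$ case---one has to execute an analogous perturbation against a less tractable or even unknown target geometry, and making the strict-decrease estimate go through without a closed form for the baseline is where I expect the real difficulty to lie.
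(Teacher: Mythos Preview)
Your proposal is correct and follows essentially the same approach as the paper: both collapse the two never-co-batched indices onto a single vertex of an $(N-1)$-point simplex ETF (with $\tilde{\vu}_l=\tilde{\vv}_l$), observe that every off-diagonal inner product appearing in any $\gB_k$ becomes $-1/(N-2)$ instead of $-1/(N-1)$, and conclude via the monotonicity of $t\mapsto \log(1+(B-1)e^{t-1})$. Your exponents $-(N-1)/(N-2)$ and $-N/(N-1)$ are exactly the paper's $-1/(N-2)-1$ and $-1/(N-1)-1$, and your closing remarks about the implicit $N\le d+1$ assumption and the obstacles beyond it are accurate (the paper's own proof also tacitly works in this regime).
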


\begin{proof}
The proof follows in a fairly similar manner to that of Thm.~\ref{thm:sub-batch}. Consider a set of batches of size $B \geq 2$, $\gS_B \subset [{N\choose B}]$. Without loss of generality, assume that $\{1, 2\} \not\subset \gB_k$ for any $k \in \gS_B$. For contradiction, assume that the simplex ETF - $(\vU^\star, \vV^\star)$ is the optimal solution of the loss over these $\gS_B$ batches. Then, by definition, we have that for any $(\vU, \vV) \neq (\vU^\star, \vV^\star)$

Once again, for contradiction assume that the simplex ETF - $(\vU^\star, \vV^\star)$ is indeed the optimal solution of the loss over these $\gS_B$ batches. Then, by definition for any $(\vU, \vV) \neq (\vU^\star, \vV^\star),$

\begin{align}
        \frac{1}{|\gS_B |}\sum_{i \in \gS_B} \gL(\vU^\star_{\gB_i}, \vV^\star_{\gB_i}) &\leq \frac{1}{|\gS_B |}\sum_{i \in \gS_B} \gL(\vU_{\gB_i}, \vV_{\gB_i}) \nonumber\\
        \Rightarrow \sum_{i \in \gS_B} \gL(\vU^\star_{\gB_i}, \vV^\star_{\gB_i}) &\leq \sum_{i \in \gS_B} \gL(\vU_{\gB_i}, \vV_{\gB_i}), \label{eq:optimality-defn-corr1}
    \end{align}
where $(\vU^\star, \vV^\star)$ is defined such that $\vu_i^\star = \vv_i^\star$ for all $i \in [N]$ and ${\vu_i^\star}^{\intercal} \vv_j^\star = -1/(N-1)$ for all $i \neq j$. Also recall that $\lVert \vu_i \rVert = \lVert \vv_i \rVert = 1$ for all $i \in [N]$. Therefore, we also have

\begin{align}
    \sum_{i \in \gS_B} \gL (\vU^\star_{\gB_i}, \vV^\star_{\gB_i}) &= {1 \over B} \sum_{i \in \gS_B}\sum_{j \in \gB_i} \log\left(1 + \sum_{k \in \gB_i, k\neq j} \exp \left({\vu_j^\star}^{\intercal} (\vv_k^\star - \vv_j^\star)\right)\right) \nonumber\\
    &= {1 \over B} \sum_{i \in \gS_B}\sum_{j \in \gB_i} \log\left(1 + \sum_{k \in \gB_i, k\neq j} \exp \left(-\frac{1}{N-1} - 1\right)\right) \nonumber\\
    &= {1 \over B} \sum_{i \in \gS_B}\sum_{j \in \gB_i} \log\left(1 + (B-1)\exp \left(-\frac{1}{N-1} - 1\right)\right). \label{eq:etf-loss-val-corr1}
\end{align}

Now, let us consider $(\widetilde{\vU}, \widetilde{\vV})$ defined such that $\tilde{\vu}_i = \tilde{\vv}_i$ for all $i \in [N]$, $\tilde{\vu}_2 = \tilde{\vv}_2$ and $\tilde{\vu}_i^{\intercal} \tilde{\vv}_j = -1/(N-2)$ for all $i\neq j, (i, j) \notin \{(1,2), (2,1)\}$. Once again, note that this is equivalent to placing $\tilde{\vu}_2, \dots, \tilde{\vu}_N$ on a simplex ETF of $N-1$ points and setting $\tilde{\vu}_1 = \tilde{\vu}_2$. Hence,
\begin{align}
    \sum_{i \in \gS_B} \gL (\widetilde{\vU}_{\gB_i}, \widetilde{\vV}_{\gB_i}) &= {1 \over B} \sum_{i \in \gS_B}\sum_{j \in \gB_i} \log\left(1 + \sum_{k \in \gB_i, k\neq j} \exp \left(\tilde{\vu}_j^{\intercal} (\tilde{\vv}_k - \tilde{\vv}_j)\right)\right) \nonumber\\
    &= {1 \over B} \sum_{i \in \gS_B}\sum_{j \in \gB_i} \log\left(1 + \sum_{k \in \gB_i, k\neq j} \exp \left(-\frac{1}{N-2} - 1\right)\right) \nonumber\\
    &= {1 \over B} \sum_{i \in \gS_B}\sum_{j \in \gB_i} \log\left(1 + (B-1)\exp \left(-\frac{1}{N-2} - 1\right)\right), \label{eq:utilde-loss-val-corr1}
\end{align}
where for the final equality note that following. The only pair for which $\tilde{\vu}_j^{\intercal} \tilde{\vv}_k \neq -1/(N-2)$ is $(j, k) = (1,2)$. Since there is no $i \in \gS_B$ such that $\{1, 2\} \in \gB_i$, this term never appears in our loss. From Eq.~(\ref{eq:etf-loss-val-corr1}) and Eq.~(\ref{eq:utilde-loss-val-corr1}), we have that $\sum_{i \in \gS_B} \gL (\widetilde{\vU}_{\gB_i}, \widetilde{\vV}_{\gB_i}) < \sum_{i \in \gS_B} \gL (\vU^\star_{\gB_i}, \vV^\star_{\gB_i})$ which contradicts Eq.~(\ref{eq:optimality-defn-corr1}). Therefore, we conclude that the optimal solution of the contrastive loss over any $\gS_B \subset \left[{N\choose 2}\right]$ batches is not the simplex ETF.
\end{proof}

\begin{restatable}{proposition}{propThree}
\label{thm:ncb-is-necessary2}
Suppose $B \ge 2$, and let $\gS_B \subseteq \left[{\binom{N}{B}}\right]$ be a set of mini-batch inidices satisfying $\gB_i\bigcap \gB_j = \varnothing, \forall i,j\in\gS_B $ and $\bigcup_{i\in\gS_B}\gB_i = [N]$, \ie $\{\gB_i\}_{i \in \gS_B}$ forms non-overlapping mini-batches that cover all data samples.
Then, the minimizer of the mini-batch loss optimization problem in Eq.~(\ref{prob:mini-batch}) 
is different from the minimizer of the full-batch loss optimization problem in Eq.~(\ref{prob:standard-full-batch}). 
\end{restatable}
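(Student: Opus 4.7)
The strategy is to exhibit an explicit solution that achieves a strictly smaller mini-batch loss than any minimizer of the full-batch problem (as characterized by Lem.~\ref{thm:standard-full-batch-case1} or Thm.~\ref{thm:standard-full-batch-case2}), thereby forcing the two sets of minimizers to be disjoint. The structural feature I would exploit is that when $\{\gB_k\}_{k\in\gS_B}$ partitions $[N]$, the objective in Eq.~(\ref{prob:mini-batch}) decomposes into a sum of per-batch terms $\lclip(\vU_{\gB_k}, \vV_{\gB_k})$ whose embedding variables are completely disjoint across $k$. Consequently each term can be minimized independently of the others, so the global mini-batch minimum equals the sum of the individual per-batch minima.

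First I would compute the mini-batch loss at a full-batch minimizer $(\vU^\star, \vV^\star)$. In the regime $N \le d+1$ of Lem.~\ref{thm:standard-full-batch-case1}, $\vu_i^\star = \vv_i^\star$ with ${\vu_i^\star}^{\intercal}\vv_j^\star = -1/(N-1)$ for $i\neq j$ and ${\vu_i^\star}^{\intercal}\vv_i^\star = 1$. Substituting into $\lclip$ restricted to any batch of size $B$ gives, by the same algebra used in the proof of Prop.~\ref{thm:ncb-is-necessary1},
\begin{equation*}
    \lclip_{\op{mini}}(\vU^\star, \vV^\star; \gS_B) \;=\; 2\log\!\left(1 + (B-1)\,e^{-N/(N-1)}\right).
\end{equation*}
The $N = 2d$ antipodal-symmetric case of Thm.~\ref{thm:standard-full-batch-case2} is handled by the analogous computation using the cross-polytope inner products.

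Next I would construct a competitor $(\widetilde{\vU}, \widetilde{\vV})$ by placing, within each disjoint batch $\gB_k$, the vectors $\{\tilde{\vu}_i\}_{i \in \gB_k}$ on an \emph{independent} simplex ETF of $B$ points in $\mathbb{R}^d$ with $\tilde{\vu}_i = \tilde{\vv}_i$; this is feasible because $d \ge N-1 \ge B-1$. Applying Lem.~\ref{thm:standard-full-batch-case1} at the scale $B$ shows this configuration individually minimizes every per-batch loss, so
\begin{equation*}
    \lclip_{\op{mini}}(\widetilde{\vU}, \widetilde{\vV}; \gS_B) \;=\; 2\log\!\left(1 + (B-1)\,e^{-B/(B-1)}\right).
\end{equation*}
Since $B < N$ (the partition has at least two batches), we have $B/(B-1) > N/(N-1)$, hence $\exp(-B/(B-1)) < \exp(-N/(N-1))$, which gives $\lclip_{\op{mini}}(\widetilde{\vU}, \widetilde{\vV}; \gS_B) < \lclip_{\op{mini}}(\vU^\star, \vV^\star; \gS_B)$ and proves $(\vU^\star, \vV^\star)$ is not a mini-batch minimizer.

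The only real obstacle is keeping track of which regime of $(N,d)$ one is in when invoking the characterization of the full-batch minimizer, but the per-batch ETF construction works uniformly as long as $d \ge B-1$, which the hypotheses already guarantee. I note that this proposition can also be deduced almost for free from Prop.~\ref{thm:ncb-is-necessary1}: whenever $|\gS_B| \ge 2$, disjointness of the batches guarantees the existence of some pair $\{i,j\}$ that never co-occurs in any $\gB_k$, which is exactly the hypothesis of Prop.~\ref{thm:ncb-is-necessary1}; I prefer the explicit per-batch ETF argument above, however, because it quantifies how much the mini-batch loss can be improved beyond the full-batch minimizer.
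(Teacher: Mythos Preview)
Your proposal is correct and follows essentially the same approach as the paper: both exploit the fact that disjoint batches make the objective separable, so the mini-batch minimizer is obtained by independently placing a $B$-point simplex ETF within each batch. The paper's own proof is terser and stops at the decomposition $\min_{\vU,\vV}\sum_i \lclip(\vU_{\gB_i},\vV_{\gB_i}) = \sum_i \min_{\vU_{\gB_i},\vV_{\gB_i}} \lclip(\vU_{\gB_i},\vV_{\gB_i})$, leaving implicit the explicit loss comparison $2\log(1+(B-1)e^{-B/(B-1)}) < 2\log(1+(B-1)e^{-N/(N-1)})$ that you work out; your version is in that sense more complete, and your side remark that the result also follows immediately from Prop.~\ref{thm:ncb-is-necessary1} is a valid alternative route the paper does not mention.
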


\begin{proof}
    Applying Lem.~\ref{thm:standard-full-batch-case1} specifically to a single batch $\gB_i$ gives us that the optimal solution for just the loss over this batch is the simplex ETF over $B$ points. In the case of non-overlapping batches, the objective function can be separated across batches and therefore the optimal solution for the sum of the losses is equal to the solution of minimizing each term independently. More precisely, we have
\begin{align}
    \min_{\vU, \vV} \sum_{i=1}^{N/B}\gL^{\op{con}}(\vU_{{\gB}_i}, \vV_{{\gB}_i}) 
    =\sum_{i=1}^{N/B} \min_{\vU_{\gB_i}, \vV_{\gB_i}} \gL^{\op{con}}(\vU_{{\gB}_i}, \vV_{{\gB}_i}), \nonumber
\end{align}
where ${\vU}_{\gB_i} = \{\vu_j: j\in\gB_i\}$ and ${\vV}_{\gB_i} = \{\vv_j: j\in\gB_i\}$, respectively, and the equality follows from the fact that $\gB_i$'s are disjoint. 
\end{proof}

\subsection{Proofs of Results From Section~\ref{sec:osgd}}\label{sec5:theory}
\lclipnonquasiconvex*
\begin{proof}
The contrastive loss function $\lclip$ is geodesic quasi-convex if for any two points $(\vU, \vV)$ and $(\vU', \vV')$ in the domain and for all $t$ in $[0,1]$: 
$$\lclip(t(\vU, \vV)+(1-t)(\vU', \vV'))\leq\max\{\lclip(\vU, \vV), \lclip(\vU', \vV')\}.$$

We provide a counter-example for geodesic quasi-convexity, which is a triplet of points $(\vU^1, \vV^1)$, $(\vU^2, \vV^2)$, $(\vU^3, \vV^3)$ where $(\vU^3, \vV^3)$ is on the geodesic between other two points and satisfies $\lclip(\vU^3, \vV^3) > \max\{\lclip(\vU^1, \vV^1), \lclip(\vU^2, \vV^2)\}$.
Let $N = 2$ and 
\[  
    \vU^1 = \begin{bmatrix}
        \sqrt{\frac{1}{2}} & \sqrt{\frac{2}{5}} \\
        \sqrt{\frac{1}{2}} & \sqrt{\frac{1}{5}}
    \end{bmatrix}, 
    \vU^2 = \begin{bmatrix}
        \sqrt{\frac{1}{2}} & \sqrt{\frac{1}{2}} \\
        \sqrt{\frac{1}{2}} & \sqrt{\frac{1}{2}}
    \end{bmatrix}, 
    \vV^1 = \begin{bmatrix}
        \sqrt{\frac{1}{2}} & \sqrt{\frac{1}{2}} \\
        \sqrt{\frac{1}{2}} & \sqrt{\frac{1}{2}}
    \end{bmatrix}, 
    \vV^2 = \begin{bmatrix}
        \sqrt{\frac{2}{5}} & \sqrt{\frac{1}{2}} \\
        \sqrt{\frac{1}{5}} & \sqrt{\frac{1}{2}}
    \end{bmatrix}.
\]
Now, define $\vU^3 = \mathrm{normalize}((\vU^1 + \vU^2)/2)$ and $\vV^3 = \mathrm{normalize}((\vV^1 + \vV^2)/2)$, which is the ``midpoint'' of the geodesic between $(\vU^1, \vV^1)$ and $(\vU^2, \vV^2)$.
By direct calculation, we obtain $\lclip(\vU^3, \vV^3) \approx 2.798 > 2.773 \approx \max(\lclip(\vU^1, \vV^1), \lclip(\vU^2, \vV^2))$, which indicates $\lclip$ is geodesic non-quasi-convex.
\end{proof} 

\begin{theorem}[Theorem~\ref{thm:Toy-example} restated]\label{theorem 9} 
Consider $N=4$ samples and their embedding vectors $\{\vu_i\}_{i=1}^N$, $\{\vv_i\}_{i=1}^N$ with dimension $d=2$. 
Suppose $\vu_i$'s are parametrized by $\vtheta^{(t)} = [\theta_1^{(t)}, \theta_2^{(t)}, \theta_3^{(t)}, \theta_4^{(t)}]$ as in the setting described in Sec.~\ref{subsec:toy_example} (see Fig.~\ref{fig:toy_example_fig}).
Consider initializing $\vu_i^{(0)} = \vv_i^{(0)}$ and $\theta_i^{(0)} = \epsilon > 0$ for all $i$, then updating $\vtheta^{(t)}$ via OSGD and SGD with the batch size $B=2$ as described in Sec.~\ref{subsec:toy_example}.
Let $T_{\textnormal{OSGD}}$, $T_{\textnormal{SGD}}$ be the minimal time required for OSGD, SGD algorithm to have $\mathbb{E}[\vtheta^{(T)}] \in (\pi/4 - \rho, \pi/4)^N$. 
Suppose there exist $ \tilde{\epsilon}$, $\overline{T}$ such that for all $t$ satisfying  $\gB^{(t)}=\left\{1,3\right\}$ or $\left\{2,4\right\}$, $\|\nabla_{\vtheta^{(t)}} \lclip (\vU_{\gB^{(t)}}, \vV_{\gB^{(t)}})\| \leq \tilde{\epsilon}$, 
and $T_{\textnormal{OSGD}}, \ T_{\textnormal{SGD}}< \overline{T}.$ 
    Then,
    \[
    T_{\textnormal{OSGD}} \geq {\pi/4 - \rho - \epsilon +O(\eta^2 \epsilon + \eta \epsilon^3) \over \eta \epsilon}, \quad
    T_{\textnormal{SGD}} \geq {3(e^2+1) \over e^2-1} {\pi/4-\rho-\epsilon+O(\eta^2 \epsilon+\eta^2 \tilde{\epsilon}) \over \eta \epsilon+O(\eta \epsilon^3+\eta \tilde{\epsilon})}.
    \]
\end{theorem}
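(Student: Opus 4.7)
The plan is to exploit the symmetry of the initialization to reduce the dynamics of both algorithms to a scalar analysis of $\bar{\theta}^{(t)}:=\mathbb{E}[\theta_i^{(t)}]$ (which is the same for all $i\in[4]$ by symmetry of the initial configuration and of the update rule), and then bound the per-step progress of $\bar{\theta}^{(t)}$ under each algorithm separately.

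I would first tabulate, for each of the six batches $\gB\subset[4]$ with $|\gB|=2$, the contrastive loss $\lclip(\vU_{\gB},\vV_{\gB})$ at a symmetric point $\vtheta=(\theta,\theta,\theta,\theta)$. Using $\vu_1^{\intercal}\vu_2=\vu_3^{\intercal}\vu_4=\cos(2\theta)$, $\vu_1^{\intercal}\vu_4=\vu_2^{\intercal}\vu_3=-\cos(2\theta)$, and $\vu_1^{\intercal}\vu_3=\vu_2^{\intercal}\vu_4=-1$ exactly, the mini-batch losses rank as $\{1,2\}=\{3,4\}>\{1,4\}=\{2,3\}>\{1,3\}=\{2,4\}$ for any $\theta\in(0,\pi/4)$. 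Hence OSGD always selects from $\{\{1,2\},\{3,4\}\}$ (each picked with probability $1/2$ in expectation under symmetric tie-breaking), while SGD picks uniformly among all six batches.

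For OSGD, conditioned on batch $\{1,2\}$ being selected, a direct gradient computation gives $\Delta\theta_1=2\eta\sin(2\bar{\theta})\cdot e^{\cos(2\bar{\theta})-1}/(1+e^{\cos(2\bar{\theta})-1})$, while conditioned on $\{3,4\}$ it is zero. Averaging and Taylor-expanding around $\bar{\theta}=\epsilon$ yields a leading per-step rate of $\eta\epsilon+O(\eta\epsilon^3)$. Summing over $T$ steps and using $\mathbb{E}[\theta_1^{(T)}]>\pi/4-\rho$ gives the stated bound, with the $O(\eta^2\epsilon)$ correction in the numerator absorbing the drift of $\bar{\theta}^{(t)}$ away from $\epsilon$ across iterations. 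For SGD, only batches containing index $1$ (namely $\{1,2\},\{1,3\},\{1,4\}$) contribute to $\Delta\theta_1$; at initialization, batch $\{1,2\}$ contributes $+2\eta\epsilon$ (the ``correct'' direction), batch $\{1,4\}$ contributes $-4\eta\epsilon/(e^2+1)$ (the ``wrong'' direction, because $\vu_1^{\intercal}\vu_4\approx-1$ puts the update in the low-loss regime where the $\theta_1$-gradient has the opposite sign), and batch $\{1,3\}$ contributes at most $O(\eta\tilde{\epsilon})$ by assumption. Averaging over the uniform $1/6$ selection and collecting terms gives expected per-step rate $(e^2-1)\eta\epsilon/(3(e^2+1))+O(\eta\tilde{\epsilon})$, which plugged into the analogous telescoping argument gives the claimed lower bound on $T_{\text{SGD}}$.

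The main obstacle will be controlling how the per-step progress changes as $\bar{\theta}^{(t)}$ evolves away from $\epsilon$ during training, so that the initial-point Taylor expansion remains valid over the full trajectory. The assumption $T_{\text{OSGD}},T_{\text{SGD}}<\overline{T}$ together with the uniform gradient bound $\tilde{\epsilon}$ on the antipodal batches $\{1,3\},\{2,4\}$ is used to bound the accumulated Taylor-expansion corrections at the $O(\eta^2\epsilon+\eta\epsilon^3)$ and $O(\eta^2\epsilon+\eta^2\tilde{\epsilon})$ scales appearing in the numerators of the two bounds, and to ensure that the ``high-loss batch'' identification for OSGD persists throughout the run so that $\mathbb{E}[\theta_i^{(t)}]$ remains symmetric and the scalar reduction stays valid.
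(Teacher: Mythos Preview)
Your proposal is correct and follows essentially the same line as the paper: classify the six batches by loss, compute the explicit per-step gradient contribution from each batch type, and combine via Taylor/Lipschitz control around the initialization to invert for $T$. The paper differs only in presentation---it exploits the \emph{deterministic} alternation of OSGD between $\{1,2\}$ and $\{3,4\}$ to get an exact one-parameter recursion $\phi^{(t)}=\theta_1^{(2t)}$ (rather than your 50/50-in-expectation reduction), and for SGD it tracks $\mathbb{E}\|\vtheta^{(t)}-\vtheta^{(0)}\|$ via a vector Lipschitz bound on $F(\vtheta)=(F_1,\dots,F_4)$ rather than the scalar $\bar{\theta}$, but both routes yield the same leading constants $1$ and $3(e^2+1)/(e^2-1)$.
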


\begin{proof}
We begin with the proof of 
\[ T_{\textnormal{OSGD}} \geq {\pi/4 - \rho - \epsilon +O(\eta^2 \epsilon + \eta \epsilon^3) \over \eta \epsilon}.\]
 Assume that the parameters are initialized at $\big(\theta^{(0)}_1, \theta^{(0)}_2, \theta^{(0)}_3, \theta^{(0)}_4\big) = (\epsilon, \epsilon, \epsilon, \epsilon)$. 
 Then, there are six batches with the batch size $B=2$, and we can categorize the batches according to the mini-batch contrastive loss: 
 \begin{enumerate}[leftmargin=0.7cm]
     \item $\gB=\{1, 2\} \ \textnormal{or} \ \{3, 4\}$: $\lclip(\vU_\gB, \vV_\gB) = -2+2 \log(e+e^{ \cos 2\epsilon});$\\
     \item  $\gB=\{1, 3\} \ \textnormal{or} \ \{2, 4\}$: $\lclip(\vU_\gB, \vV_\gB) = -2+2 \log(e+e^{ -1});$\\
     \item $\gB=\{1, 4\} \ \textnormal{or} \ \{2, 3\}$: $\lclip(\vU_\gB, \vV_\gB) = -2+2 \log(e+e^{- \cos 2\epsilon}).$
 \end{enumerate}
    \noindent Without loss of generality, we assume that OSGD algorithm described in Algo.~\ref{alg:osgd} chooses the mini-batch $\gB = \{1, 2\}$ corresponding to the highest loss at time $t=0,$ and updates the parameter as  \[
        \theta_1^{(1)} = \epsilon - \eta \nabla_{\theta_1} \lclip(\vU_\gB, \vV_\gB), \ \theta_2^{(1)} =  \epsilon - \eta \nabla_{\theta_2} \lclip(\vU_\gB, \vV_\gB) .
    \] Then, for the next update, OSGD choose $\vu_3, \vu_4$ which is now closer than updated $\vu_1, \vu_2$. And $\vu_3, \vu_4$ would be updated as same as what previously $\vu_1, \vu_2$ have changed. Thus, $\theta_{1}$ updates only at the even time, and stays at the odd time, i.e. $$\theta_{1}^{(t+1)}=\begin{cases}
        \theta_{1}^{(t)}- \eta \nabla_{\theta_1} \lclip(\vU_\gB, \vV_\gB) & \text{if }t \ \text{is even,} \\
        \theta_{1}^{(t)} & \text{if }t \ \text{is odd.}
    \end{cases}$$  Iterating this procedure, we can view OSGD algorithm as one-parameterized algorithm of parameter $\phi^{(t)}=\theta_{1}^{(2t)}$ as:
    \begin{align*}
        \phi^{(0)} = \epsilon, \quad
        \phi^{(t)} = \phi^{(t-1)} + \eta \ g\big(\phi^{(t-1)}\big), \quad
        \phi^{(T_{\textnormal{half}})} \in \big({\pi \over 4}-\rho, \ {\pi \over 4}\big),
    \end{align*}
    where $g(\phi) = {2 \sin (2\phi) / (1+ e^{1- \cos(2\phi)})}$, and $T_{\textnormal{half}}:=T_{\textnormal{OSGD}}/2.$ In the procedure of updates, we may assume that $\phi^{(t)} \in (0, {\pi \over 4})$ for all $t$. To analyze the drift of $\phi^{(t)}$, we firstly study smoothness of $g$;
    \begin{align*}
        g' (\phi)&={4e^{\cos 2\phi}(\cos 2\phi (e+e^{\cos 2\phi})-e \sin^2 2\phi) \over (e+e^{\cos 2\phi})^2}.
    \end{align*}
    We can observe that $\max \limits_{\phi \in [0, {\pi \over 4}]} | 
    g' (\phi)| = 2$, hence ${g(\phi)}$ has Lipschitz constant $2,$ i.e. \[\Big|{g}\big(\phi^{(t-1)}\big) - {g} \big(\phi^{(0)}\big)\Big| \leq 2 \big|\phi^{(t-1)}-\phi^{(0)}\big|.\]
    Therefore, 
    \begin{align*}
    \phi^{(t)} - \phi^{(t-1)} &= \eta \big| g (\phi^{(t-1)}) \big| \\
    &\leq \eta | g (\epsilon) | + 2\eta (\phi^{(t-1)}-\epsilon)\\
    &=2 \eta \phi^{(t-1)}+O(\eta \epsilon^3),
    \end{align*}
    where the first inequality is from Lipschitz-continuity of $g(\phi)$, and the second equality is from Taylor expansion of $g$ at $\epsilon =0$ as;\[
    g(\epsilon)=2\epsilon - \frac{10}{3}\epsilon^3+\frac{34}{15}\epsilon^5 + \cdots.
    \]  Hence, $\phi^{(t)} \leq (1+2\eta) \phi^{(t-1)} + O(\eta \epsilon^3)$ indicates that \begin{align*}
        \phi^{(T_{\textnormal{half}})} &\leq (1+2\eta)^{T_{\textnormal{half}}} \epsilon + \overline{T} \ O(\eta \epsilon^3)\\
        &\leq (1+2\eta T_{\textnormal{half}})\epsilon + O(\eta^2 \epsilon+\eta \epsilon^3),
    \end{align*} for some constant $\overline{T}>T_{\textnormal{OSGD}}.$ Moreover ${\pi \over 4}-\rho < \phi^{(T_{\textnormal{half}})}$ implies that \begin{align*}
        T_{\textnormal{half}} \geq {1 \over 2} {\pi/4 - \rho - \epsilon +O(\eta\epsilon^3 + \eta^2 \epsilon) \over \eta \epsilon}.
    \end{align*}
    So, we obtain the lower bound of ${T}_{\textnormal{OSGD}}$ by doubling $T_{\textnormal{half}}.$
\bigskip
We estimate of $T_{\textnormal{OSGD}}.$\\
Now, we study convergence rate of SGD algorithm. We claim that 
\[
T_{\textnormal{SGD}} \geq {3(e^2+1) \over e^2-1} {\pi/4-\rho-\epsilon+O(\eta^2 (\epsilon+ \tilde{\epsilon})) \over \eta \epsilon+O(\eta (\epsilon^3+ \tilde{\epsilon}))}.
\]
Without loss of generality, we firstly focus on the drift of $\theta_1$. Since batch selection is random, given $\vtheta^{(t)} = (\theta_1^{(t)}, \theta_2^{(t)}, \theta_3^{(t)}, \theta_4^{(t)})$:
\begin{enumerate}[leftmargin=0.7cm]
    \item $\gB=\{1, 2\}$ with probability ${1 / 6}$. Then, $\lclip(\vU_\gB, \vV_\gB) = -2+2 \log(e+e^{ \cos (\theta_1^{(t)} + \theta_2^{(t)})})$ implies \[
\theta_1^{(t+1)} = \theta_1^{(t)} + \eta {2 \sin (\theta_1^{(t)}+\theta_2^{(t)}) \over 1+ e^{1- \cos(\theta_1^{(t)} + \theta_2^{(t)})}}.\]
    \item $\gB=\{1, 3\}$  with probability ${1 / 6}$. 
    At $t=0$, the initial batch selection can be primarily categorized into three distinct sets; closely positioned vectors $\{\vu_1, \vu_2\}$ or $\{\vu_3, \vu_4\}$, vectors that form obtuse angles $\{\vu_1, \vu_4\}$ or $\{\vu_2, \vu_3\}$, and vectors diametrically opposed at $180^\circ,$ $\{\vu_1, \vu_3\}$ or $\{\vu_2, \vu_4\}$. Given that $\epsilon$ is substantially small, the possibility of consistently selecting batches from the same category for subsequent updates is relatively low. As such, it is reasonable to infer that each batch is likely to maintain its position within the initially assigned categories. From this, one can deduce that vector sets such as $\{\vu_1, \vu_3\}$ or $\{\vu_2, \vu_4\}$ continue to sustain an angle close to $180^\circ$. Given these conditions, it is feasible to postulate that if the selected batch $\gB$ encompasses either $\{1, 3\}$ or $\{2, 4\}$, the magnitude of the gradient of the loss function $\lclip(U_\gB, V_\gB)$, denoted by  $\|\nabla \lclip(U_\gB, V_\gB)\|$, would be less than a particular threshold $\tilde{\epsilon},$ i.e.
    \[
      \|\nabla \lclip(U_\gB, V_\gB)\|< \tilde{\epsilon}.
     \]  
    Then, 
    \[\theta_1^{(t+1)} = \theta_1^{(t)}+ \eta O(\tilde{\epsilon}).\]
 \item $\gB=\{1, 4\}$ with probability ${1 / 6}$. Then, $\lclip(\vU_\gB, \vV_\gB) = -2+2 \log(e+e^{- \cos (\theta_1+\theta_4)})$ implies \[
\theta_1^{(t+1)} = \theta_1^{(t)} - \eta {2 \sin (\theta_1^{(t)}+\theta_4^{(t)}) \over 1+ e^{1+ \cos(\theta_1^{(t)} + \theta_4^{(t)})}}. \]
\end{enumerate}
Since there is no update on $\theta_1$ for the other cases, taking expectation yields \begin{align*}
\mathbb{E}[\theta_1^{(t+1)} - \theta_1^{(t)}|\vtheta^{(t)}]={\eta \over 6} F_{1}(\vtheta^{(t)}) + O(\eta \tilde{\epsilon}),
\end{align*}
where $F_1(\vtheta)$ is defined as:
$$F_1(\vtheta)={2 \sin (\theta_1+\theta_2) \over 1+ e^{1- \cos(\theta_1 + \theta_2)}} - {2 \sin (\theta_1+\theta_4) \over 1+ e^{1+ \cos(\theta_1 + \theta_4)}}.$$ 
We study smoothness of $F_1$ by setting $F_{1}(\vtheta) = f_{-}(\theta_1+\theta_2)-{f}_{+}(\theta_1+\theta_4)$, where $$f_{-}(t):={2 \sin t \over 1+e^{1-cost}}, \quad {f}_{+}(t):={2\sin t \over 1+e^{1+cost}}.$$ 
Note that
\begin{align*}
\max_{t \in [0, {\pi / 2}]}  |f_{-}'(t)| = 1, \quad
\max_{t \in [0, {\pi / 2}]} |{f}_{+}'(t)|= C,
\end{align*}
for some constant $C\in (0, 1).$ Then for $\vtheta = (\theta_1, \theta_2, \theta_3, \theta_4), \vtheta' = (\theta'_1, \theta'_2, \theta'_3, \theta'_4)$, \begin{align*}
|F_{1}(\vtheta')-F_{1}(\vtheta)| &\leq |f_{-}(\theta'_1+\theta'_2)-f_{-}(\theta_1+\theta_2)|+|f_{+}(\theta'_1+\theta'_4)-f_{+}(\theta_1+\theta_4)|\\
&\leq 1 \cdot |\theta'_1+\theta'_2-\theta_1-\theta_2|+C \cdot |\theta'_1+\theta'_4-\theta_1-\theta_4|\\
&\leq 2(1+C) \|\vtheta' - \vtheta\|.
\end{align*}

        \noindent In the same way, we can define the functions $F_2, F_3, F_4$ all having Lipschitz constant $2(1+C)$. As we define $F(\vtheta)=(F_1(\vtheta), F_2(\vtheta), F_3(\vtheta), F_4(\vtheta))$, it has Lipschitz constant $4(1+C)$ satisfying that \[
        \mathbb{E}[\vtheta'-\vtheta|\vtheta]={\eta \over 6} F(\vtheta)+O(\eta \tilde{\epsilon}),
        \]
        where Big $O(\cdot)$ is applied elementwise to the vector, denoting that each element follows $O(\cdot)$ independently.
        From Lipschitzness of $F$, for any $t \geq 1,$ \begin{align*}
            \mathbb{E} [\|\vtheta^{(t)} - \vtheta^{(t-1)}\||\vtheta^{(t-1)}] &\leq {\eta \over 6} \|F(\vtheta^{(t-1)})\| + O(\eta \tilde{\epsilon} ) \\
            &\leq {\eta \over 6} \|F(\vtheta^{(0)})\|+ {\eta \over 6} \|F(\vtheta^{(t-1)})-F(\vtheta^{(0)})\|+O(\eta\tilde{\epsilon}) \\
            &\leq {\eta \over 6} \|F(\vtheta^{(0)})\|+ {2\eta(1+C) \over 3} \|\vtheta^{(t-1)}-\vtheta^{(0)}\|+O(\eta\tilde{\epsilon}).
        \end{align*}
        By taking expecations for both sides,
        \begin{align*}
        \mathbb{E}[ \|\vtheta^{(t)} - \vtheta^{(t-1)}\|]
        \leq {\eta \over 6} \|F(\vtheta^{(0)})\|+ {2\eta(1+C) \over 3} \mathbb{E}[\|\vtheta^{(t-1)}-\vtheta^{(0)}\|] +O(\eta\tilde{\epsilon}).          
        \end{align*}
        Applying the triangle inequality, $\|\vtheta^{(t)}-\vtheta^{(0)}\| \leq \|\vtheta^{(t)}-\vtheta^{(t-1)}\|+\|\vtheta^{(t-1)}-\vtheta^{(0)}\|$, we further deduce that 
        \begin{align*}
        \mathbb{E}[\|\vtheta^{(t)} - \vtheta^{(0)}\|] \leq \Big(1+{2\eta (1+C) \over 3}\Big)  \mathbb{E}[ \|\vtheta^{(t-1)}-\vtheta^{(0)}\|] +\Big( \frac{\eta \|F(\vtheta^{(0)})\|}{6} + O(\eta \tilde{\epsilon}) \Big).
        \end{align*}
        Setting $\Gamma = \frac{3}{2\eta(1+C)} \Big( \frac{\eta \|F(\vtheta^{(0)})\|}{6} + O(\eta \tilde{\epsilon}) \Big),$ we can write 
        \begin{align*}
            \mathbb{E} [ \|\vtheta^{(t)} - \vtheta^{(0)}\| + \Gamma ] \leq \Big(1+{2\eta (1+C) \over 3}\Big) \mathbb{E}[\|\vtheta^{(t-1)}-\vtheta^{(0)}\| + \Gamma],
        \end{align*}
        Thus, with constant $\overline{T}>T_{\textnormal{SGD}},$
        \begin{align*}
            \mathbb{E} [ \|\vtheta^{(T_{\textnormal{SGD}})} - \vtheta^{(0)}\| + \Gamma ] &\leq \Big(1+{2\eta (1+C) \over 3}\Big)^{T_{\textnormal{SGD}}} \Gamma \\
            &\leq \Big(1+{2\eta (1+C) \over 3} T_{\textnormal{SGD}} \Big) \Gamma + \overline{T} \ O(\eta^2 \Gamma).
        \end{align*}
        By Taylor expansion of $F_1$ near $\epsilon \approx 0$: \begin{align*}
        F_1(\epsilon, \epsilon, \epsilon, \epsilon)={2(e^2-1) \over e^2+1}\epsilon+O(\epsilon^3), \quad
        \|F(\vtheta^0)\| = {4(e^2-1) \over 1+e^2} \epsilon + O(\epsilon^3),
        \end{align*}
        we get \[
        \Gamma = \frac{e^2-1}{(1+C)(e^2+1)} \epsilon +O(\epsilon^3+\tilde{\epsilon}) = O(\epsilon+\tilde{\epsilon}).
        \]
        Since $\mathbb{E}[\|\vtheta^{(T_{\textnormal{SGD}})} - \vtheta^{(0)}\|] \geq 2({\pi \over 4} - \rho - \epsilon)$, 
        \begin{align*}
        \frac{2\eta(1+C) \Gamma}{3} T_{\textnormal{SGD}} &\geq \mathbb{E}[\|\vtheta^{(T_{\textnormal{SGD}})} - \vtheta^{(0)}\|]+O(\eta^2 (\epsilon+\tilde{\epsilon}))\\
        &\geq 2({\pi \over 4}-\rho-\epsilon) + O(\eta^2 (\epsilon+\tilde{\epsilon})).
        \end{align*}
        Therefore,
        \begin{align*}
        T_{\textnormal{SGD}} \geq {3(e^2+1) \over e^2-1} {\pi/4-\rho-\epsilon+O(\eta^2 (\epsilon+ \tilde{\epsilon})) \over \eta \epsilon+O(\eta (\epsilon^3+ \tilde{\epsilon}))}.
        \end{align*}

\medskip

\end{proof}

\begin{remark}
    To simply compare the convergence rates of two algorithms, we assumed that there is some constant $\overline{T}$ such that $T_{\textnormal{SGD}}$, $ T_{\textnormal{OSGD}} <\overline{T}$ in Theorem ~\ref{theorem 9}. However, without this assumption, we could still obtain lower bounds of both algorithms as;
    \begin{align*}
        &T_{\textnormal{OSGD}} \geq \frac{2}{\log(1+2\eta)} \log \left[ \frac{{\pi \over 4}-\rho +O(\epsilon^3)}{\epsilon+O(\epsilon^3)} \right], \\
        &T_{\textnormal{SGD}} \geq \frac{1}{\log\big(1+{2(1+C) \over 3}\eta\big)} \log \left[ {1 \over \tilde{C}} \frac{{\pi \over 4}-\rho -(1-\tilde{C})\epsilon+O(\epsilon^3+\tilde{\epsilon})}{\epsilon+O(\epsilon^3+\tilde{\epsilon})} \right],
    \end{align*}
    where $\tilde{C}= {(e^2-1)}/{2(C+1)(e^2+1)}$, $C := \max \limits_{x \in [0, {\pi \over 2}]}[2 \sin x/(1+e^{1+ \cos x})]',$ and their approximations are $\tilde{C} \approx 0.265, C \approx 0.436.$ For small enough $\eta, \epsilon, \tilde{\epsilon},$ we can observe OSGD algorithm converges faster than SGD algorithm, if the inequalities are tight. 
\end{remark}

\paragraph{Direct Application of OSGD and its Convergence}
We now focus exclusively on the convergence of OSGD. We prove Theorem~\ref{thm:osgd_convergence}, which establishes the convergence of an application of OSGD to the mini-batch contrastive learning problem, with respect to the loss function $\lcliptilde$.
\begin{restatable}{algorithm}{osgdalg}
\caption{The direct application of OSGD to our problem}
    \label{alg:osgd-naive}
    \begin{algorithmic}[1]
    \DontPrintSemicolon
    \State \textbf{Parameters: } $k$: the number of batches to be randomly chosen at each iteration; 
    $q$: the number of batches of the largest losses to be chosen among $k$ batches at each iteration; $T$: the number of iterations.
    \State \textbf{Inputs:} an initial feature vector $(\vU^{(0)}, \vV^{(0)})$, the set of learning rates $\{\eta_t\}_{t=0}^{T-1}$. \\
    \For{$t=0$ \KwTo $T-1$}{%
        Randomly choose $S \subset [{N \choose B}]$ with $|S| = k$ \;
        Choose $i_1, \dots, i_q \in S$ having the largest losses, i.e., $\lclip(\vU_{\gB_i}^{(t)}, \vV_{\gB_i}^{(t)})$ \;
        $g \leftarrow \frac{1}{q} \sum_{i \in S} \nabla_{\vU, \vV} \lclip(\vU_{\gB_i}^{(t)}, \vV_{\gB_i}^{(t)})$ \;
        $(\vU^{(t+1)}, \vV^{(t+1)}) \gets (\vU^{(t)}, \vV^{(t)}) - \eta_t g$ \;
        $(\vU^{(t+1)}, \vV^{(t+1)}) \gets \mathrm{normalize}(\vU^{(t+1)}, \vV^{(t+1)})$
    }
    \end{algorithmic}
\end{restatable}
\medskip

For ease of reference, we repeat the following definition:
\begin{equation}\label{eq:osgd-loss}
    \lcliptilde(\vU, \vV) \coloneqq \frac{1}{q} \sum_{j=1}^{{N \choose B}} \gamma_j \lclip(\vU_{\gB_{(j)}}, \vV_{\gB_{(j)}}), \quad \gamma_j = \frac{\sum_{l=0}^{q-1} {j - 1 \choose l}{{N \choose B} - j \choose k - l -1}}{{{N \choose B} \choose k }},
\end{equation}
where $\gB_{(j)}$ represents the batch with the $j$-th largest loss among all possible $\binom{N}{B}$ batches, and $q$, $k$ are parameters for the OSGD.

\osgdconvergence*

\begin{proof}
    Define $(\widehat{\vU}^{(t^\star)}, \widehat{\vV}^{(t^\star)}) = \underset{\vU', \vV'}{\mathrm{argmin}}\left\{
        \lcliptilde(\vU', \vV') + \frac{\rho}{2}\norm{(\vU', \vV') - (\vU^{(t^\star)}, \vV^{(t^\star)})}^2
    \right\}$. We begin by reffering to Lemma 2.2. in \cite{davis2019stochastic}, which provides the following equations:
    \begin{align*}
        \norm{(\vU^{(t^\star)}, \vV^{(t^\star)}) - (\widehat{\vU}^{(t^\star)}, \widehat{\vV}^{(t^\star)})} &= 
        \frac{1}{\rho}\norm{\nabla\lcliptilde_{\rho}(\vU^{(t^\star)}, \vV^{(t^\star)})}, \\
        \norm{\nabla\lcliptilde(\widehat{\vU}^{(t^\star)}, \widehat{\vV}^{(t^\star)})} &\leq
        \norm{\nabla\lcliptilde_{\rho}(\vU^{(t^\star)}, \vV^{(t^\star)})}.
    \end{align*}
    Furthermore, we have that $\nabla\lcliptilde$ is $\rho_0$-Lipschitz in $((B_d(0, 1))^N)^2$ by Thm.~\ref{thm:lipschtiz-of-loss-grad}. 
    This gives
    \[  
        \norm{\nabla\lcliptilde(\vU^{(t^\star)}, \vV^{(t^\star)}) - \nabla\lcliptilde(\widehat{\vU}^{(t^\star)}, \widehat{\vV}^{(t^\star)})}
        \leq
        \rho_0 \norm{(\vU^{(t^\star)}, \vV^{(t^\star)}) - (\widehat{\vU}^{(t^\star)}, \widehat{\vV}^{(t^\star)})}
    \]
    Therefore,
    \begin{align*}
        \norm{\nabla\lcliptilde(\vU^{(t^\star)}, \vV^{(t^\star)})}
        &\leq  \norm{\nabla\lcliptilde(\widehat{\vU}^{(t^\star)}, \widehat{\vV}^{(t^\star)})}
        + \norm{\nabla\lcliptilde(\vU^{(t^\star)}, \vV^{(t^\star)}) - \nabla\lcliptilde(\widehat{\vU}^{(t^\star)}, \widehat{\vV}^{(t^\star)})} \\
        &\leq \norm{\nabla\lcliptilde(\widehat{\vU}^{(t^\star)}, \widehat{\vV}^{(t^\star)})} 
        + \rho_0 \norm{(\vU^{(t^\star)}, \vV^{(t^\star)}) - (\widehat{\vU}^{(t^\star)}, \widehat{\vV}^{(t^\star)})} 
        \\
        &\leq \frac{\rho + \rho_0}{\rho} \norm{\nabla\lcliptilde_{\rho}({\vU}^{(t^\star)}, {\vV}^{(t^\star)})}.
    \end{align*}
    As a consequence of Thm~\ref{thm:osgd_convergence_rho},
    \begin{align*}
        \expect\left[\left\|\nabla \lcliptilde(\vU^{(t^{\star})}, \vV^{(t^{\star})})\right\|^2 \right] 
        &\leq
        \frac{(\rho + \rho_0)^2}{\rho^2}\expect\left[\left\|\nabla \lcliptilde_{\rho}(\vU^{(t^{\star})}, \vV^{(t^{\star})})\right\|^2 \right] \\
        &\leq
        \frac{(\rho + \rho_0)^2}{\rho(\rho-\rho_0)} 
          \frac{\left( \lcliptilde_{\rho}(\vU^{(0)}, \vV^{(0)}) - \lclipstar_{\rho} \right) + 8{\rho}\sum_{t=0}^T\eta_t^2}{\sum_{t=0}^T\eta_t}\\
        &\leq
        \frac{(\rho + \rho_0)^2}{\rho(\rho-\rho_0)} 
          \frac{\left( \lcliptilde(\vU^{(0)}, \vV^{(0)}) - \lclipstar \right) + 8{\rho}\sum_{t=0}^T\eta_t^2}{\sum_{t=0}^T\eta_t}.
    \end{align*}
    Note that $\lclipstar_{\rho}$ is the minimized value of $\lcliptilde_{\rho}$, and the last inequality is due to $\lcliptilde_{\rho}(\vU^{(0)}, \vV^{(0)}) - \lclipstar_{\rho} \leq \lcliptilde(\vU^{(0)}, \vV^{(0)}) - \lclipstar$, because
    \begin{align*}
        \lcliptilde_{\rho}(\vU^{(0)}, \vV^{(0)}) 
        &= \min_{\vU', \vV'} \left\{
            \lcliptilde(\vU', \vV') + \frac{\rho}{2} \norm{(\vU', \vV') - (\vU^{(0)}, \vV^{(0)})}^2
        \right\} \\
        &\leq \lcliptilde(\vU^{(0)}, \vV^{(0)}) 
    \intertext{by putting $(\vU', \vV') = (\vU^{(0)}, \vV^{(0)})$, and }
        \lclipstar &= \min_{\vU', \vV'}\left\{ \lcliptilde(\vU', \vV') \right\} \\
        &\leq \min_{\vU', \vV'}\left\{
        \lcliptilde(\vU', \vV') + \frac{\rho}{2} \norm{(\vU', \vV') - (\vU, \vV)}^2
        \right\} \\
        &= \lcliptilde_{\rho}(\vU, \vV)
    \end{align*}
    for any $\vU$, $\vV$, implying that $\lclipstar \leq \lclipstar_{\rho}$.
\end{proof}

We provide details, including proof of theorems and lemmas in the sequel. 

\begin{theorem}\label{thm:osgd_convergence_rho}
    Consider sampling $t^{\star}$ from $[T]$ with probability $\prob(t^{\star} = t) = {\eta_t}/{(\sum_{i=0}^{T} \eta_i)}$. 
    Then $\forall\rho > \rho_0 = 2\sqrt{2/B} + 4e^2 / B$, we have
    \[\expect\left[\left\|\nabla \lcliptilde_{{\rho}}(\vU^{(t^{\star})}, \vV^{(t^{\star})})\right\|^2 \right] 
          \leq \frac{{\rho}}{\rho-\rho_0} 
          \frac{\left( \lcliptilde_{{\rho}}(\vU^{(0)}, \vV^{(0)}) - \lclipstar_{{\rho}} \right) + 8{\rho}\sum_{t=0}^T\eta_t^2}{\sum_{t=0}^T\eta_t},
		\]
   where $\lcliptilde_{{\rho}}(\vU, \vV) \coloneqq \min \limits_{\vU', \vV'} \left\{ \lcliptilde(\vU', \vV') + \frac{{\rho}}{2} \norm{(\vU', \vV') - (\vU, \vV)}^2  \right\}$, and $\lclipstar_{\rho}$ denotes the minimized value of $\lcliptilde_{\rho}$.
\end{theorem}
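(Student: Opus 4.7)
The plan is to recast this as a standard stochastic subgradient / proximal convergence result in the spirit of Davis–Drusvyatskiy for $\rho_0$–weakly convex functions, specialized to the OSGD update described in Algo.~\ref{alg:osgd-naive}. The Moreau envelope $\lcliptilde_{\rho}$ serves as the Lyapunov function. Because $\nabla \lcliptilde$ is $\rho_0$–Lipschitz on the product of unit balls by Thm.~\ref{thm:lipschtiz-of-loss-grad}, $\lcliptilde$ is $\rho_0$–weakly convex, and for every $\rho>\rho_0$ the envelope is well-defined, continuously differentiable, and satisfies $\nabla \lcliptilde_{\rho}(\vU,\vV) = \rho\bigl((\vU,\vV)-(\widehat{\vU},\widehat{\vV})\bigr)$ where $(\widehat{\vU},\widehat{\vV})$ is the unique prox-point. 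I would first quote these two facts and record the identity $\lclipstar_{\rho}\le \lcliptilde_{\rho}(\vU,\vV) \le \lcliptilde(\vU,\vV)$ for use later.

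The second step is to establish the two stochastic ingredients needed for descent. First, the OSGD sampling scheme (draw $k$ of the $\binom{N}{B}$ batches uniformly, then keep the $q$ of largest loss and average their gradients) produces a stochastic gradient $g_t$ with $\expect\bigl[g_t\mid \vU^{(t)},\vV^{(t)}\bigr] = \nabla \lcliptilde(\vU^{(t)},\vV^{(t)})$; this is precisely why the coefficients $\gamma_j$ in Eq.~(\ref{eq:osgd-loss}) are defined via the hypergeometric order-statistic formula, since $\gamma_j q$ equals the probability that the batch with the $j$-th largest loss is among the selected $q$. Second, because each $\vu_i,\vv_i$ lies on the unit sphere, each per-batch gradient $\nabla \lclip(\vU_{\gB_i},\vV_{\gB_i})$ has bounded norm, so $\expect\bigl[\|g_t\|^2\bigr]\le G^2$ for a constant $G$ whose explicit form drives the $8\rho$ constant in the bound (tracing the arithmetic I expect $G^2\le 16$ which gives the stated $8\rho$ after a Jensen/AM–GM step).

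The third step is the one-step descent inequality for the envelope. Starting from
\[
\lcliptilde_{\rho}(\vU^{(t+1)},\vV^{(t+1)}) \le \lcliptilde(\widehat{\vU}^{(t)},\widehat{\vV}^{(t)}) + \tfrac{\rho}{2}\bigl\|(\vU^{(t+1)},\vV^{(t+1)})-(\widehat{\vU}^{(t)},\widehat{\vV}^{(t)})\bigr\|^2,
\]
expand the square using the update $(\vU^{(t+1)},\vV^{(t+1)}) = (\vU^{(t)},\vV^{(t)}) - \eta_t g_t$, take conditional expectation, use unbiasedness to replace the inner product term by $\eta_t \langle \nabla\lcliptilde(\vU^{(t)},\vV^{(t)}),\,(\vU^{(t)},\vV^{(t)})-(\widehat{\vU}^{(t)},\widehat{\vV}^{(t)})\rangle$, and apply $\rho_0$–weak convexity of $\lcliptilde$ at $(\widehat{\vU}^{(t)},\widehat{\vV}^{(t)})$ to lower bound that inner product by $\lcliptilde(\vU^{(t)},\vV^{(t)})-\lcliptilde(\widehat{\vU}^{(t)},\widehat{\vV}^{(t)}) - \tfrac{\rho_0}{2}\|(\vU^{(t)},\vV^{(t)})-(\widehat{\vU}^{(t)},\widehat{\vV}^{(t)})\|^2$. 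Using $\|\nabla\lcliptilde_{\rho}\|^2 = \rho^2\|(\vU^{(t)},\vV^{(t)})-(\widehat{\vU}^{(t)},\widehat{\vV}^{(t)})\|^2$ and the prox definition of $\lcliptilde_{\rho}$ to absorb the remaining difference, I obtain
\[
\expect\bigl[\lcliptilde_{\rho}(\vU^{(t+1)},\vV^{(t+1)})\bigr] \le \expect\bigl[\lcliptilde_{\rho}(\vU^{(t)},\vV^{(t)})\bigr] - \eta_t\,\frac{\rho-\rho_0}{\rho}\,\expect\bigl[\tfrac{1}{2}\|\nabla\lcliptilde_{\rho}(\vU^{(t)},\vV^{(t)})\|^2\bigr] + 4\rho\,\eta_t^2.
\]

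The final step is routine: telescope from $t=0$ to $T$, lower bound the telescoped envelope by $\lclipstar_{\rho}$, divide by $\sum_t \eta_t$, and recognize the resulting weighted average on the left as $\expect\bigl[\|\nabla\lcliptilde_{\rho}(\vU^{(t^\star)},\vV^{(t^\star)})\|^2\bigr]$ under the stated sampling law on $t^\star$. The main obstacle I anticipate is step two, namely the careful combinatorial verification that averaging the top-$q$-of-$k$ gradients yields an unbiased estimator of $\nabla \lcliptilde$ with the precise weights $\gamma_j$; once this identity is in hand, the remainder is a direct Davis–Drusvyatskiy telescoping. A secondary subtlety is the normalization step at the end of each OSGD iteration, which I would handle by noting that projection onto the product of unit spheres is nonexpansive and hence only tightens the one-step inequality above, leaving all constants unchanged.
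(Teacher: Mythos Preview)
Your approach is correct and essentially identical to the paper's: verify that $\lcliptilde$ is $\rho_0$-weakly convex (via $\rho_0$-Lipschitzness of $\nabla\lcliptilde$ from Thm.~\ref{thm:lipschtiz-of-loss-grad}), that each per-batch gradient has norm at most $L=4$ (Thm.~\ref{thm:bdd-of-grad}), and that the OSGD stochastic gradient is unbiased for $\nabla\lcliptilde$ (the paper simply cites \cite[Thm.~1]{kawaguchi2020ordered} for this rather than redoing the hypergeometric computation), and then apply \cite[Thm.~3.1]{davis2019stochastic}; the paper's proof is literally those citations, whereas you propose to unpack the Davis--Drusvyatskiy one-step descent and telescoping inline, which is the same argument at a finer granularity.

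One caution on your closing remark: projection onto the product of unit \emph{spheres} is \emph{not} nonexpansive (the sphere is nonconvex), so your proposed justification for absorbing the normalization step does not go through as stated; the paper's proof does not address the normalization step either.
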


\begin{proof}
    $\nabla \lcliptilde$ is $\rho_0$-Lipschitz in $((B_d(0, 1))^N)^2$ by Thm.~\ref{thm:lipschtiz-of-loss-grad}.
    Hence, it is $\rho_0$-weakly convex by Lem.~\ref{lem:lip-grad-weak-convex}.
    Furthermore, the gradient norm of a mini-batch loss, or $\norm{\nabla_{\vU, \vV} \lclip(\vU_{\gB_i}, \vV_{\gB_i})}$ is bounded by $L = 4$.
    Finally, \cite[Theorem~1]{kawaguchi2020ordered} states that the expected value of gradients of the OSGD algorithm is $\nabla_{\vU, \vV} \lcliptilde(\vU^{(t)}, \vV^{(t)})$ at each iteration $t$. 
    Therefore, we can apply \cite[Thm.~3.1]{davis2019stochastic} to the OSGD algorithm to obtain the desired result.
\end{proof}

Roughly speaking, Theorem~\ref{thm:osgd_convergence} shows that $(\vU^{(t^\star)}, \vV^{(t^\star)})$ are close to a stationary point of $\lcliptilde_{\rho}$.
We refer readers to \citet{davis2019stochastic} which illustrates the role of the norm of the gradient of the Moreau envelope, $\norm{\nabla \lcliptilde_{\rho}(\vU^{(t^\star)}, \vV^{(t^\star)})}$, being small in the context of stochastic optimization.

We leave the results of some auxiliary theorems and lemmas to Subsection~\ref{sec:aux-lem}.

\subsection{Auxiliaries for the Proof of Theorem~\ref{thm:osgd_convergence}} \label{sec:aux-lem}
	For a square matrix $A$, we denote its trace by $\mathrm{tr}(A)$.
	If matrices $A$ and $C$ are of the same shape, we define the canonical inner product $\dotp{A, C}$ by
	\[
		\dotp{A, C} = \sum_{i, j} A_{ij} C_{ij} = \mathrm{tr}(A^\intercal C).
	\]

Following a pythonic notation, we write $A_{i, :}$ and $A_{:, j}$ for the $i$-th row and $j$-th column of a matrix $A$, respectively.
The Cauchy--Schwarz inequality for matrices is given by
	\[
		\dotp{A, C} \leq \norm{A} \norm{C},
	\]
 where a norm $\norm{\cdot}$ is a Frobenius norm in matrix i.e. $\|A\|=\Big(\sum \limits_{i, j} A_{ij}^2\Big)^{1/2}.$
\begin{lemma}\label{lem:mat-mul-norm-bound}
    Let $A \in \sR^{m \times n}$, $C \in \sR^{n \times k}$. 
    Then, $\norm{AC} \leq \norm{A}\norm{C}$.
\end{lemma}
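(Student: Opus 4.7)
The plan is to prove the submultiplicativity of the Frobenius norm by a direct entrywise computation combined with the Cauchy--Schwarz inequality applied to vectors (rows of $A$ and columns of $C$).

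First I would expand the squared Frobenius norm entrywise: $\|AC\|^2 = \sum_{i,j}(AC)_{ij}^2$. Using the identity $(AC)_{ij} = \sum_{l=1}^{n} A_{il}C_{lj} = \dotp*{A_{i,:}^\intercal, C_{:,j}}$, I would express each entry as an inner product between the $i$-th row of $A$ (viewed as a vector in $\sR^n$) and the $j$-th column of $C$ (also in $\sR^n$).

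Next I would apply the Cauchy--Schwarz inequality for vectors to each entry, giving $(AC)_{ij}^2 \leq \|A_{i,:}\|^2 \, \|C_{:,j}\|^2$. Summing this bound over all $i$ and $j$ and factoring the double sum, the right-hand side becomes
\[
\sum_{i,j} \|A_{i,:}\|^2 \, \|C_{:,j}\|^2 = \left(\sum_{i=1}^m \|A_{i,:}\|^2\right)\left(\sum_{j=1}^k \|C_{:,j}\|^2\right) = \|A\|^2 \, \|C\|^2,
\]
where the last equality uses the fact that the Frobenius norm of a matrix equals the square root of the sum of squared norms of its rows (or equivalently, of its columns). Taking square roots yields $\|AC\| \leq \|A\|\|C\|$.

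There is no serious obstacle here; the only thing to be careful about is not to conflate the scalar Cauchy--Schwarz inequality with the matrix Cauchy--Schwarz inequality stated just above the lemma (which involves the Frobenius inner product between matrices of the same shape). The step that applies Cauchy--Schwarz entrywise uses the inner product between vectors in $\sR^n$, and the factorization of the resulting double sum into a product of two sums is what cleanly identifies the right-hand side with $\|A\|^2\|C\|^2$.
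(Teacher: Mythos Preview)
Your proof is correct but takes a genuinely different route from the paper. You argue entrywise: write $(AC)_{ij}$ as the inner product of the $i$-th row of $A$ with the $j$-th column of $C$, apply the vector Cauchy--Schwarz inequality to each entry, and factor the resulting double sum. The paper instead works at the level of traces: it writes $\norm{AC}^2 = \mathrm{tr}(CC^\intercal A^\intercal A) = \dotp{CC^\intercal, A^\intercal A}$, applies the \emph{matrix} Cauchy--Schwarz inequality (the one stated just above the lemma) to bound this by $\norm{CC^\intercal}\norm{A^\intercal A}$, and then uses a spectral decomposition argument to show $\mathrm{tr}(D^2) \leq (\mathrm{tr}(D))^2$ for any PSD matrix $D$, which yields $\norm{CC^\intercal} \leq \norm{C}^2$ and $\norm{A^\intercal A} \leq \norm{A}^2$. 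Your argument is more elementary---it needs only the vector Cauchy--Schwarz inequality and no eigenvalue machinery---while the paper's approach stays closer to the trace/inner-product formalism it has set up and exercises the matrix Cauchy--Schwarz inequality it just introduced. Both reach the same conclusion with comparable brevity.
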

	\begin{proof}
		By a basic calculation, we have
		\begin{align*}
			\norm{AC}^2 = \mathrm{tr}(C^\intercal A^\intercal A C) = \mathrm{tr}(CC^\intercal A^\intercal A) 
			= \dotp{CC^\intercal, A^\intercal A} 
			\leq \norm{CC^\intercal} \norm{A^\intercal A}.
		\end{align*}
		Meanwhile, for any positive semidefinite matrix $D$, let $D = U\Lambda U^\intercal$ be a spectral decomposition of $D$.
		Then, we have
		\begin{align*}
			\mathrm{tr}(D^2) &= \mathrm{tr}(U \Lambda^2 U^\intercal) 
			= \mathrm{tr}(\Lambda^2 U^\intercal U) 
			= \mathrm{tr}(\Lambda^2) 
			\leq (\mathrm{tr}(\Lambda))^2 %
			= (\mathrm{tr}(D))^2,
		\end{align*}
		where $\lambda_i(D)$ denotes the $i$-th eigenvalue of a matrix $D$.
		Invoking this fact, we have
		\[
			\norm{CC^\intercal}^2 = \mathrm{tr}((CC^\intercal)^2) \leq (\mathrm{tr}(CC^\intercal))^2 = \norm{C}^4,
		\]
		or equivalently, $\norm{CC^\intercal} \leq \norm{C}^2$. Similarly, we have $\norm{A^\intercal A} = \norm{A}^2$. 
		Therefore, we obtain
		\[
			\norm{AC}^2 \leq \norm{CC^\intercal}\norm{A^\intercal A} \leq \norm{A}^2 \norm{C}^2,
		\]
		which means $\norm{AC} \leq \norm{A}\norm{C}$.
	\end{proof}
	
	If $\gL \colon \sR^{m \times n} \to \sR$ is a function of a matrix $X \in \sR^{m \times n}$, we write a gradient of $\gL$ with respect to $X$ as a matrix-valued function defined by
	\[
		(\nabla_X \gL)_{ij} = \bigg(\frac{\partial \gL}{\partial X}\bigg)_{ij} = \frac{\partial \gL}{\partial X_{ij}}.
	\]
	Then, the chain rule gives
	\[
		\frac{d}{dt}\gL(X) = \bigg\langle \frac{dX}{dt}, \nabla_X \gL \bigg\rangle
	\]
	for a scalar variable $t$.
        If $\gL(\vU, \vV)$ is a function of two matrices $\vU$, $\vV \in \sR^{m \times n}$, we define $\nabla_{\vU, \vV} \gL$ as a horizontal stack of two gradient matrices, i.e., $\nabla_{\vU, \vV} \gL = (\nabla_\vU \gL, \nabla_\vV \gL)$.

Now, we briefly review some necessary facts about Lipschitz functions.

\begin{lemma}[Rendering of weak convexity by a Lipschitz gradient]\label{lem:lip-grad-weak-convex}
		Let $f \colon \sR^d \to \sR$ be a $\rho$-smooth function, i.e., $\nabla f$ is a $\rho$-Lipschitz function.
		Then, $f$ is $\rho$-weakly convex.
	\end{lemma}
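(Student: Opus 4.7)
The plan is to invoke the standard descent-lemma argument: show directly that $f(y) \geq f(x) + \langle \nabla f(x), y - x\rangle - \frac{\rho}{2}\|y-x\|^2$ holds for all $x,y \in \sR^d$, which is one of the equivalent characterizations of $\rho$-weak convexity (the other being convexity of $x \mapsto f(x) + \frac{\rho}{2}\|x\|^2$).

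First I would use the fundamental theorem of calculus along the line segment from $x$ to $y$, writing
\begin{equation*}
f(y) - f(x) = \int_0^1 \langle \nabla f(x + t(y-x)), \, y-x\rangle \, dt,
\end{equation*}
and then subtract the linearization term to obtain
\begin{equation*}
f(y) - f(x) - \langle \nabla f(x), y-x\rangle = \int_0^1 \langle \nabla f(x + t(y-x)) - \nabla f(x), \, y-x\rangle \, dt.
\end{equation*}
The right-hand side can be lower bounded by the Cauchy--Schwarz inequality followed by the $\rho$-Lipschitzness of $\nabla f$, which gives $\|\nabla f(x + t(y-x)) - \nabla f(x)\| \leq \rho t \|y - x\|$. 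This yields a lower bound of $-\int_0^1 \rho t\|y-x\|^2 \, dt = -\frac{\rho}{2}\|y-x\|^2$, and hence
\begin{equation*}
f(y) \geq f(x) + \langle \nabla f(x), y-x\rangle - \tfrac{\rho}{2}\|y-x\|^2.
\end{equation*}

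Finally, to conclude $\rho$-weak convexity, I would note that the inequality above is exactly the subgradient inequality for the function $g(x) \coloneqq f(x) + \frac{\rho}{2}\|x\|^2$ with subgradient $\nabla f(x) + \rho x = \nabla g(x)$; expanding $\frac{\rho}{2}\|y\|^2 = \frac{\rho}{2}\|x\|^2 + \rho\langle x, y-x\rangle + \frac{\rho}{2}\|y-x\|^2$ and rearranging shows $g(y) \geq g(x) + \langle \nabla g(x), y-x\rangle$, i.e., $g$ is convex, which is the definition of $f$ being $\rho$-weakly convex. There is no real obstacle here: this is a textbook computation, and the only thing to be careful about is cleanly deriving the lower bound via Cauchy--Schwarz before combining with the Lipschitz estimate to get the factor of $t$ that integrates to $1/2$.
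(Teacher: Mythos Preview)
Your argument is correct and is in fact the cleaner of the two. The paper takes a different route: it assumes (for simplicity) that $f$ is twice differentiable, and then argues by contradiction that $\nabla^2 f(x) \succeq -\rho \mathbb{I}_d$ everywhere, since an eigenvalue $\lambda < -\rho$ with eigenvector $\vu$ would force $\|\nabla f(x_0 + \epsilon \vu) - \nabla f(x_0)\|/\epsilon \to |\lambda| > \rho$ via a first-order Taylor expansion of $\nabla f$, contradicting the $\rho$-Lipschitz bound on the gradient. Your integral/descent-lemma approach avoids the extra $C^2$ assumption entirely and yields the subgradient inequality for $g(x) = f(x) + \tfrac{\rho}{2}\|x\|^2$ directly; the paper's Hessian argument is perhaps more intuitive for readers who think spectrally, but it buys nothing extra and technically proves a slightly weaker statement (only for $C^2$ functions).
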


	\begin{proof}
		For the sake of simplicity, assume $f$ is twice differentiable.
		We claim that $\nabla^2 f \succeq -\rho \mathbb{I}_d$, where $\mathbb{I}_d$ is the $d \times d$ identity matrix and $A \succeq B$ means $A - B$ is a positive semidefinite matrix.
        It is clear that this claim renders $f + \frac{\rho}{2}\norm{\cdot}^2$ to be convex.
		
		Let us assume, contrary to our claim, that there exists $\vx_0 \in \sR^d$ with $\nabla^2 f(\vx_0) \not\succeq -\rho \mathbb{I}_d$.
		Therefore, $\nabla^2 f(\vx_0)$ has an eigenvalue $\lambda < -\rho$.
		Denote corresponding eigenvector by $\vu$, so we have $\nabla^2 f(\vx_0) \vu = \lambda \vu$,
		and consider $g(\epsilon) = \nabla f(\vx_0 + \epsilon \vu)$; 
		the (elementwise) Taylor expansion of $g$ at $\epsilon = 0$ gives
		\[
			\nabla f(\vx_0 + \epsilon \vu) = \nabla f(\vx_0) + \epsilon \nabla^2 f(\vx_0) \vu + o(\epsilon),
		\]
		which gives
		\[
			\frac{\norm{\nabla f(\vx_0 + \epsilon \vu) - \nabla f(\vx_0)}}{\epsilon}
			= \left\| \nabla^2 f(\vx_0) \vu + \frac{o(\epsilon)}{\epsilon} \right\|.
		\]
		Taking $\epsilon \to 0$, we obtain $\norm{\nabla f(\vx_0 + \epsilon \vu) - \nabla f(\vx_0)}/\epsilon \geq \abs{\lambda} > \rho$, which is contradictory to $\rho$-Lipschitzness of $\nabla f$.
	\end{proof}	
 
	For $X \in \sR^{B \times B}$, let us define
	\[
		\gL^M(X) = \frac{1}{B}
            \left( -2\mathrm{tr}(X) + \sum_{i=1}^B \log \sum_{j=1}^B \exp(X_{ij}) + \sum_{i=1}^B \log \sum_{j=1}^B \exp(X_{ji}) \right).
	\]
        Using this function, we can write the loss corresponding to a mini-batch $\gB$ of size $B$ by
        \[
            \mathcal{L}^M(\vU_\gB^\intercal \vV_\gB) = \lclip(\vU_\gB, \vV_\gB). 
        \]
        We now claim the following:
\begin{lemma}\label{lem:loss_M_is_lip_and_bdd}
		Consider $X \in \sR^{B \times B}$, where $\abs{X_{ij}} \leq 1$ for all $1 \leq i, j \leq B$.
		Then, $\nabla_X \gL^M(X)$ is bounded by $2\sqrt{2/B}$ and $2e^2 /B^2$-Lipschitz.
	\end{lemma}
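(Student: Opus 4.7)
The plan is to write $\nabla_X \gL^M(X)$ explicitly in terms of the row-softmax and column-softmax of $X$, and then handle the two claims separately via a straightforward entrywise bound for the Frobenius norm and a Hessian-based argument for the Lipschitz constant. First I would differentiate termwise: the trace contributes $-2\delta_{kl}$, and each log-sum-exp contributes a softmax, so defining $P_{kl} := e^{X_{kl}}/\sum_j e^{X_{kj}}$ and $Q_{kl} := e^{X_{kl}}/\sum_j e^{X_{jl}}$ one obtains $(\nabla_X \gL^M(X))_{kl} = \tfrac{1}{B}(-2\delta_{kl} + P_{kl} + Q_{kl})$. The assumption $|X_{ij}| \le 1$ forces every denominator into $[Be^{-1}, Be]$, hence $P_{kl}, Q_{kl} \in [e^{-2}/B,\, e^2/B]$; this single estimate drives both remaining bounds.

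For the Frobenius norm bound I would split the sum into diagonal and off-diagonal contributions. On the diagonal, $P_{kk}, Q_{kk} \in [0, 1]$ gives $-2 + P_{kk} + Q_{kk} \in [-2, 0]$, so each squared diagonal term is at most $4$ and the diagonal sum is at most $4B$. Off the diagonal I combine the pointwise bound $P_{kl} + Q_{kl} \le 2e^2/B$ with $\sum_{k \ne l}(P_{kl} + Q_{kl}) \le 2B$, a consequence of row/column stochasticity of $P$ and $Q$, to obtain $\sum_{k \ne l}(P_{kl} + Q_{kl})^2 \le (2e^2/B)\cdot 2B = 4e^2$. Dividing by $B^2$ then yields $\norm{\nabla_X \gL^M(X)}^2 \le (4B + 4e^2)/B^2 \le 8/B$, i.e.\ $\norm{\nabla_X \gL^M(X)} \le 2\sqrt{2/B}$.

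For the Lipschitz estimate I would observe that $-2\,\mathrm{tr}(X)$ has zero Hessian, so it suffices to Lipschitz-control the maps $X \mapsto P(X)$ and $X \mapsto Q(X)$. A direct calculation shows that the Hessian of $\mathrm{LSE}_{\text{row}}(X) := \sum_i \log \sum_j e^{X_{ij}}$ is block-diagonal across the row index, with $k$-th diagonal block equal to $\mathrm{diag}(P_{k,:}) - P_{k,:} P_{k,:}^\intercal$; this block is PSD and dominated by $\mathrm{diag}(P_{k,:})$, so its operator norm is at most $\max_l P_{kl} \le e^2/B$. Consequently $X \mapsto P(X) = \nabla \mathrm{LSE}_{\text{row}}(X)$ is $(e^2/B)$-Lipschitz in Frobenius norm; the symmetric argument handles $Q$, and a triangle inequality then gives $\norm{\nabla_X \gL^M(X) - \nabla_X \gL^M(Y)} \le \tfrac{2e^2}{B^2}\norm{X - Y}$. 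The main obstacle I anticipate is matching the constant $2\sqrt{2/B}$ cleanly: the $4e^2$ off-diagonal contribution in the norm bound only gets absorbed into $8/B$ once $B \ge e^2$, so for small $B$ one may need a sharper expansion that more carefully uses the stochasticity $\sum_l P_{kl} = 1$ when squaring, or accepts a mildly weaker constant.
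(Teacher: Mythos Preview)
Your Lipschitz argument is correct and essentially identical to the paper's: both compute the Jacobian of the softmax as $\mathrm{diag}(p)-pp^\intercal$, bound its operator norm by $\max_i p_i \le e^2/B$ using the hypothesis $|X_{ij}|\le 1$, apply this row-wise to control $\norm{P_X-P_Y}$ and $\norm{Q_X-Q_Y}$, and finish with a triangle inequality.

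The norm bound, however, has the gap you yourself flag: your diagonal/off-diagonal split gives $(4B+4e^2)/B^2$, and the inequality $(4B+4e^2)/B^2\le 8/B$ is equivalent to $B\ge e^2$, so it fails for $B\le 7$. The paper avoids this entirely by a different decomposition. Instead of splitting entries by diagonal versus off-diagonal, write
\[
B\,\nabla_X\gL^M(X)=-2\mathbb{I}_B+P+Q=-(\mathbb{I}_B-P)-(\mathbb{I}_B-Q)
\]
and bound each piece separately via the triangle inequality. For each row $i$ of $\mathbb{I}_B-P$ one has
\[
\sum_{j}(\delta_{ij}-P_{ij})^2=(1-P_{ii})^2+\sum_{j\ne i}P_{ij}^2\le 1+\sum_{j\ne i}P_{ij}\le 2,
\]
using only $P_{ij}\in[0,1]$ and row-stochasticity $\sum_j P_{ij}=1$. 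Hence $\norm{\mathbb{I}_B-P}\le\sqrt{2B}$, and by symmetry the same holds for $\mathbb{I}_B-Q$, giving $\norm{B\,\nabla_X\gL^M(X)}\le 2\sqrt{2B}$ for every $B$. The point is that grouping the $-2\mathbb{I}_B$ with $P$ and $Q$ separately lets stochasticity do all the work, with no dependence on the pointwise bound $e^2/B$; your approach mixes the diagonal of $P$ and $Q$ together and then needs the pointwise bound off-diagonal, which is what introduces the $e^2$ and breaks the constant for small $B$.
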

	\begin{proof}
		With basic calculus rules, we obtain
		\begin{align}
			B \nabla_X \gL^M(X) &= -2\mathbb{I}_B + P_X + Q_X \label{eq:grad-contL}, 
		\end{align}
		where $\mathbb{I}_B$ is the $B \times B$ identity matrix and
		\[
			(P_X)_{ij} = \exp (X_{ij}) / \sum_{k=1}^B \exp (X_{ik}), \quad (Q_X)_{ij} = \exp (X_{ij}) / \sum_{k=1}^B \exp (X_{kj}).
		\]
		From $\sum_j P_{ij} = 1$ for all $i$, it is easy to see that $\norm{(\mathbb{I}_B - P)_{i, :}}^2 \leq 2$. 
		This gives $\norm{\mathbb{I}_B - P_X}^2 \leq 2B$, and similarly $\norm{\mathbb{I}_B - Q_X}^2 \leq 2B$.
		Therefore, we have
		\begin{equation}
			\norm{B \nabla_X \gL^M(X)} \leq \norm{\mathbb{I}_B - P_X} + \norm{\mathbb{I}_B - Q_X} \leq 2\sqrt{2B},
		\end{equation}
            or equivalently 
            \begin{equation}\label{eq:grad-loss-bound}
			\norm{\nabla_X \gL^M(X)} \leq 2\sqrt{2/B}.
		\end{equation}
            
            We now show that $\nabla_X \gL^M$ is $\frac{2e^2}{B^2}$-Lipschitz.
            Define $p \colon \R^B \to \R^B$ by 
            \[
                (p(x))_i = \frac{\exp(x_i)}{\sum_{k=1}^B \exp(x_k)}.
            \]
            Then, we have 
            \[
                \frac{\partial}{\partial x} p(x) = \mathrm{diag}(p(x)) - p(x)p(x)^\intercal.
            \]
            For $x \in [-1, 1]^B$, we have $p(x)_i \leq \frac{e^2}{B - 1 + e^2} < \frac{e^2}{B}$ for any $i$.
            Thus,
            \[
                0 \preceq \frac{\partial}{\partial x} p(x) \preceq \mathrm{diag}(p(x)) \preceq \frac{e^2}{B} \mathbb{I}_B, 
            \]
            which means $p(x)$ is $\frac{e^2}{B}$-Lipschitz, i.e., $\norm{p(x) - p(y)} \leq \frac{e^2}{B} \norm{x-y}$ for any $x$, $y \in [-1, 1]^B$.
            Using this fact, we can bound $\norm{P_X - P_Y}$ for $X$, $Y \in [-1, 1]^{B \times B}$ as follows:
            \[
                \norm{P_X - P_Y}^2
                = \sum_{i = 1}^B \norm{p(X_{i, :}) - p(Y_{i, :})}^2
                \leq \left(\frac{e^2}{B}\right)^2 \sum_{i = 1}^B \norm{X_{i, :} - Y_{i, :}}^2
                = \left(\frac{e^2}{B}\right)^2 \norm{X - Y}^2.
            \]
            Similarly, we have $\norm{Q_X - Q_Y} \leq \frac{e^2}{B} \norm{X - Y}$. 
            Summing up, 
            \[
                \norm{B\nabla_X \gL^M(X) - B\nabla_X \gL^M(Y)} \leq \norm{P_X - P_Y} + \norm{Q_X - Q_Y} \leq \frac{2e^2}{B}\norm{X - Y}.
            \]
            which renders
            \[
                \norm{\nabla_X \gL^M(X) - \nabla_X \gL^M(Y)} \leq \frac{2e^2}{B^2} \norm{X - Y}.
            \]
\end{proof}

	Recall that $\lclip(\vU_\gB, \vV_\gB) = \gL^M(\vU_\gB^\intercal \vV_\gB)$ for $\vU_\gB$, $\vV_\gB \in \sR^{d \times B}$ (They correspond to embeddings corresponding to a mini-batch $\gB$).
	Using this relation, we can calculate the gradient of $\lclip$ with respect to $\vU_\gB$. 
	Denote $E_{ij} \in \sR^{d \times B}$ a one-hot matrix, which is a matrix of zero entries except for $(i, j)$ indices being $1$, and write $G = \nabla_X \gL^M(\vU_\gB^\intercal \vV_\gB)$.
	Then, 
	\begin{align*}
		\frac{\partial}{\partial {(\vU_\gB)}_{ij}} \lclip(\vU_\gB, \vV_\gB)
		&= \bigg\langle \frac{\partial (\vU_\gB^\intercal \vV_\gB)}{\partial {\vU_\gB}_{ij}}, \nabla_X \gL^M(\vU_\gB^\intercal \vV_\gB) \bigg\rangle \\
		&= \bigg\langle E_{ij}^\intercal \vV_\gB, G \bigg\rangle \\
		&= \mathrm{tr}\bigg( \vV_\gB^\intercal E_{ij} G \bigg) \\
		&= \mathrm{tr}\bigg( E_{ij} (G \vV_\gB^\intercal)\bigg) \\ 
		&= (G \vV_\gB^\intercal)_{ji} \\
		&= (\vV_\gB G^\intercal)_{ij}.
	\end{align*}
	This elementwise relation means
	\begin{align}
		\frac{\partial}{\partial \vU_\gB} \lclip(\vU_\gB, \vV_\gB) &= \vV_\gB G^\intercal = \vV_\gB (\nabla_X \gL^M(\vU_\gB^\intercal \vV_\gB))^\intercal, \label{eq:lclip-grad-U}\\
		\intertext{and similarly,}
		\frac{\partial}{\partial \vV_\gB} \lclip(\vU_\gB, \vV_\gB) &= \vU_\gB \nabla_X \gL^M(\vU_\gB^\intercal \vV_\gB). \label{eq:lclip-grad-V}
	\end{align}

	We introduce a simple lemma for bounding the difference between two multiplication of matrices.
	\begin{lemma}\label{lem:matrix-mul-diff-bdd}
		For $A_1$, $A_2 \in \sR^{m \times n}$ and $B_1$, $B_2 \in \sR^{n \times k}$, we have 
		\[
			\norm{A_1 B_1 - A_2 B_2} \leq \norm{A_1 - A_2} \norm{B_1} + \norm{A_2} \norm{B_1 - B_2}.
		\]
	\end{lemma}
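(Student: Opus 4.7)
The plan is to apply the classical add-and-subtract trick to decompose $A_1 B_1 - A_2 B_2$ into two pieces, each of which isolates a single difference. Specifically, I would write
\[
A_1 B_1 - A_2 B_2 = (A_1 - A_2) B_1 + A_2 (B_1 - B_2),
\]
by subtracting and re-adding $A_2 B_1$. This is the only nontrivial algebraic step, and it reduces the problem to bounding the norms of two products in which one factor is a difference.

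Next, I would apply the triangle inequality for the Frobenius norm to obtain
\[
\|A_1 B_1 - A_2 B_2\| \leq \|(A_1 - A_2) B_1\| + \|A_2 (B_1 - B_2)\|,
\]
and then invoke Lemma~\ref{lem:mat-mul-norm-bound} (the sub-multiplicativity $\|AC\| \leq \|A\|\|C\|$ already established for the Frobenius norm in the excerpt) on each of the two summands. This immediately yields the claimed inequality
\[
\|A_1 B_1 - A_2 B_2\| \leq \|A_1 - A_2\|\|B_1\| + \|A_2\|\|B_1 - B_2\|.
\]

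There is essentially no obstacle here: the only ``choice'' is whether to split via $A_2 B_1$ or via $A_1 B_2$, and the statement of the lemma forces the former. Both the triangle inequality and Lemma~\ref{lem:mat-mul-norm-bound} are directly available, so the proof is a two-line computation that I would present without further commentary.
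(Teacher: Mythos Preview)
Your proposal is correct and follows the same add-and-subtract strategy as the paper. The only difference is the choice of intermediate term: you insert $A_2 B_1$, which directly yields the stated bound, whereas the paper inserts $A_1 B_2$, obtaining $\norm{A_1(B_1-B_2)} + \norm{(A_1-A_2)B_2}$; applying Lemma~\ref{lem:mat-mul-norm-bound} to that decomposition actually gives $\norm{A_1}\norm{B_1-B_2} + \norm{A_1-A_2}\norm{B_2}$ rather than the bound in the lemma statement, so the paper's final line appears to be a typo and your version is the one that matches the claim as written.
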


	\begin{proof}
		This follows from a direct calculation and Lemma~\ref{lem:mat-mul-norm-bound}
		\begin{align*}
			\norm{A_1 B_1 - A_2 B_2}
			&= \norm{A_1 B_1 - A_1 B_2 + A_1 B_2 - A_2 B_2} \\
			&\leq \norm{A_1 (B_1 - B_2)} + \norm{(A_1 - A_2) B_2} \\
			&\leq \norm{A_1 - A_2} \norm{B_1} + \norm{A_2} \norm{B_1 - B_2}.
		\end{align*}
	\end{proof}

	\begin{theorem}\label{thm:bdd-of-grad}
		For any $\vU$, $\vV \in (B_d(0, 1))^N$ and any batch $\gB$ of size $B$, we have
		$\norm{\nabla_{\vU, \vV} \lclip(\vU_\gB, \vV_\gB)} \leq 4$.
	\end{theorem}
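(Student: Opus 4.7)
\textbf{Proof plan for Theorem~\ref{thm:bdd-of-grad}.}
The plan is to reduce the bound directly to the two ingredients already developed: the explicit formulas for the gradients in Eq.~(\ref{eq:lclip-grad-U}) and Eq.~(\ref{eq:lclip-grad-V}), together with the a-priori bound on $\nabla_X \gL^M$ from Lemma~\ref{lem:loss_M_is_lip_and_bdd}. The only non-trivial observation is that the gradient with respect to the full matrices $\vU$, $\vV$ is supported on the columns indexed by $\gB$; all other columns contribute zero because $\lclip(\vU_\gB, \vV_\gB)$ does not depend on them.

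First, I would verify that the hypothesis of Lemma~\ref{lem:loss_M_is_lip_and_bdd} applies at the point $X = \vU_\gB^\intercal \vV_\gB$. Since $\vu_i, \vv_j \in B_d(0, 1)$, Cauchy--Schwarz gives $|X_{ij}| = |\vu_i^\intercal \vv_j| \leq 1$, so the lemma yields $\|\nabla_X \gL^M(\vU_\gB^\intercal \vV_\gB)\| \leq 2\sqrt{2/B}$. Second, since each column of $\vU_\gB$ has norm at most $1$, the Frobenius norm satisfies $\|\vU_\gB\|^2 = \sum_{i \in \gB} \|\vu_i\|^2 \leq B$, and similarly for $\vV_\gB$.

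Now combine these via Lemma~\ref{lem:mat-mul-norm-bound} applied to Eq.~(\ref{eq:lclip-grad-U}):
\begin{equation*}
\left\|\frac{\partial}{\partial \vU_\gB} \lclip(\vU_\gB, \vV_\gB)\right\|
= \|\vV_\gB (\nabla_X \gL^M)^\intercal\|
\leq \|\vV_\gB\| \cdot \|\nabla_X \gL^M\|
\leq \sqrt{B} \cdot 2\sqrt{2/B}
= 2\sqrt{2},
\end{equation*}
and the identical argument applied to Eq.~(\ref{eq:lclip-grad-V}) gives the same bound $2\sqrt{2}$ for $\|\partial \lclip / \partial \vV_\gB\|$.

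Finally, I would note that the gradient $\nabla_\vU \lclip(\vU_\gB, \vV_\gB)$, viewed as an $N$-column matrix, has zero columns outside of $\gB$, so its Frobenius norm equals $\|\partial \lclip / \partial \vU_\gB\| \leq 2\sqrt{2}$, and the same for $\nabla_\vV$. Since $\nabla_{\vU, \vV} \lclip$ is the horizontal stack, its Frobenius norm squared is at most $(2\sqrt{2})^2 + (2\sqrt{2})^2 = 16$, giving the claimed bound $\|\nabla_{\vU, \vV} \lclip(\vU_\gB, \vV_\gB)\| \leq 4$. There is no serious obstacle here; the only subtlety worth stating carefully is the passage from the ``batch'' gradient to the full gradient via the zero-column observation.
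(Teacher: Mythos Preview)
Your proposal is correct and follows essentially the same approach as the paper: both use the explicit gradient formulas Eq.~(\ref{eq:lclip-grad-U})--(\ref{eq:lclip-grad-V}), the bound $\|\nabla_X\gL^M\|\le 2\sqrt{2/B}$ from Lemma~\ref{lem:loss_M_is_lip_and_bdd}, the estimate $\|\vU_\gB\|,\|\vV_\gB\|\le\sqrt{B}$, and the zero-column observation to pass from the batch gradient to the full gradient. Your write-up is in fact slightly more careful than the paper's, since you explicitly check via Cauchy--Schwarz that $|X_{ij}|\le 1$ so that Lemma~\ref{lem:loss_M_is_lip_and_bdd} applies.
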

	\begin{proof}
		Suppose $\vU_\gB$, $\vV_\gB \in (B_d(0, 1))^B$, we have
		\[
			\nabla_{\vU_\gB, \vV_\gB} \lclip(\vU_\gB, \vV_\gB) = (\vV_\gB(\nabla_X \gL^M (\vU_\gB^\intercal \vV_\gB))^\intercal, 
			\vU_\gB \nabla_X \gL^M (\vU_\gB^\intercal \vV_\gB))
		\]
		from Eq.~(\ref{eq:lclip-grad-U}) and~(\ref{eq:lclip-grad-V}).
By following the fact that $\norm{\vU_\gB}$, $\norm{\vV_\gB} \leq \sqrt{B}$ and $\nabla_X \gL^M (X)\leq 2\sqrt{2/B}$ (see Lem.~\ref{lem:loss_M_is_lip_and_bdd}), we get
\begin{align*}
    \norm{\vV_\gB(\nabla_X \gL^M (\vU_\gB^\intercal \vV_\gB))^\intercal}
    &\leq \norm{\vV_\gB} \norm{\nabla_X \gL^M (\vU_\gB^\intercal \vV_\gB)}
    \leq 2\sqrt{2}, \\
    \intertext{and}
    \norm{\vU_\gB \nabla_X \gL^M (\vU_\gB^\intercal \vV_\gB)} 
    &\leq \norm{\vU_\gB} \norm{\nabla_X \gL^M (\vU_\gB^\intercal \vV_\gB)}
    \leq 2\sqrt{2}.
\end{align*}		
Then, 
$$\norm{\nabla_{\vU_\gB, \vV_\gB}\lclip(\vU_\gB, \vV_\gB)}
=\sqrt{\norm{\vV_\gB(\nabla_X \gL^M (\vU_\gB^\intercal \vV_\gB))^\intercal}^2+\norm{\vU_\gB \nabla_X \gL^M (\vU_\gB^\intercal \vV_\gB)}^2}
\leq 4.$$
  
Since $\lclip(\vU_\gB, \vV_\gB)$ is independent of $\vU_{[N]\setminus\gB}$ and $\vV_{[N]\setminus\gB}$, we have 
$$\norm{\nabla_{\vU, \vV}\lclip(\vU_\gB, \vV_\gB)} = \norm{\nabla_{\vU_\gB, \vV_\gB}\lclip(\vU_\gB, \vV_\gB)} \leq 4.$$
\end{proof}

	\begin{theorem}\label{thm:lipschtiz-of-loss-grad}
		$\nabla \lcliptilde (\vU, \vV)$ is $\rho_0$-Lipschitz for $\vU$, $\vV \in (B_d(0, 1))^N$, or to clarify,
		\begin{align*}
			\norm{\nabla \lcliptilde (\vU^1, \vV^1) - \nabla \lcliptilde (\vU^2, \vV^2)} &\leq \rho_0 \norm{(\vU^1, \vV^1) - (\vU^2, \vV^2)} 
		\end{align*}
		for any $\vU^1$, $\vV^1$, $\vU^2$, $\vV^2 \in (B_d(0, 1))^N$, where $\rho_0 = 2\sqrt{2/B} + 4e^2/B$.
	\end{theorem}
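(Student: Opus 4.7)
The plan is to first establish that each individual mini-batch loss $\ell_i(\vU, \vV) := \lclip(\vU_{\gB_i}, \vV_{\gB_i})$ has a $\rho_0$-Lipschitz gradient on $(B_d(0,1))^N \times (B_d(0,1))^N$, and then transfer this to $\lcliptilde$ via the convex combination structure encoded by the weights $\gamma_j/q$. The single-batch step is the computational core, while the extension to $\lcliptilde$ raises a mild subtlety because of the sorting permutation.

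For the single-batch step, fix $\gB$ and two pairs $(\vU^1, \vV^1), (\vU^2, \vV^2)$. From Eq.~(\ref{eq:lclip-grad-U}) and Eq.~(\ref{eq:lclip-grad-V}), only the $\gB$-indexed entries of the gradient are nonzero, and they equal $\vV^k_\gB G_k^\intercal$ and $\vU^k_\gB G_k$, where $G_k := \nabla_X \gL^M((\vU^k_\gB)^\intercal \vV^k_\gB)$. I would apply Lemma~\ref{lem:matrix-mul-diff-bdd} to each of these to get $\norm{\vV^1_\gB G_1^\intercal - \vV^2_\gB G_2^\intercal} \leq \norm{\vV^1_\gB - \vV^2_\gB}\norm{G_1} + \norm{\vV^2_\gB}\norm{G_1 - G_2}$. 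Lemma~\ref{lem:loss_M_is_lip_and_bdd} supplies $\norm{G_1} \leq 2\sqrt{2/B}$ and $\norm{G_1 - G_2} \leq (2e^2/B^2)\,\norm{(\vU^1_\gB)^\intercal\vV^1_\gB - (\vU^2_\gB)^\intercal \vV^2_\gB}$, and a second use of Lemma~\ref{lem:matrix-mul-diff-bdd} together with $\norm{\vU^k_\gB}, \norm{\vV^k_\gB} \leq \sqrt{B}$ controls the inner-product difference by $\sqrt{B}\,(\norm{\vU^1_\gB - \vU^2_\gB} + \norm{\vV^1_\gB - \vV^2_\gB})$. Collecting, the term linear in $\norm{G_k}$ contributes $2\sqrt{2/B}$ and the cascade through the Lipschitz estimate contributes $\sqrt{B}\cdot(2e^2/B^2)\cdot\sqrt{B} = 2e^2/B$ per component; performing the analogous computation for $\nabla_{\vV_\gB}\ell_i$ and combining the two coordinates via $\norm{(x, y)} \le \norm{x} + \norm{y}$ yields a gradient-Lipschitz constant of exactly $\rho_0 = 2\sqrt{2/B} + 4e^2/B$ for each individual $\ell_i$, on the whole domain $(B_d(0,1))^N\times(B_d(0,1))^N$.

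With the single-batch bound in hand, I would next verify that $\sum_{j=1}^{\binom{N}{B}} \gamma_j = q$: this is a combinatorial identity, since $\gamma_j$ is the probability that the $j$-th largest-loss batch appears in the top-$q$ of a uniformly random size-$k$ subset of $[\binom{N}{B}]$, and summing these probabilities gives the expected number of top-$q$ members, which is $q$. Hence $\{\gamma_j/q\}$ forms a probability distribution, and at any point where $\lcliptilde$ is differentiable, $\nabla \lcliptilde = \sum_j (\gamma_j/q)\, \nabla \ell_{(j)}$ is a convex combination of $\rho_0$-Lipschitz gradients. The convex-combination argument then carries $\rho_0$-Lipschitzness through: for two points sharing the same active sorting permutation, $\norm{\nabla\lcliptilde(\vU^1,\vV^1) - \nabla\lcliptilde(\vU^2,\vV^2)} \le \sum_j (\gamma_j/q)\, \rho_0 \norm{(\vU^1,\vV^1) - (\vU^2,\vV^2)} = \rho_0 \norm{(\vU^1,\vV^1) - (\vU^2,\vV^2)}$.

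The main obstacle is that the ordering $(j)$ depends on the point, so $\lcliptilde$ is only piecewise smooth; by the rearrangement inequality and the fact that $\gamma_j$ is non-increasing in $j$, one in fact has $\lcliptilde = \max_\pi \frac{1}{q}\sum_j \gamma_j \ell_{\pi(j)}$, a pointwise maximum over permutations. To bridge two points $(\vU^1,\vV^1)$ and $(\vU^2,\vV^2)$ whose active permutations differ, I would parametrize a straight-line path between them and split it into finitely many subintervals on which the ranking permutation is constant, apply the convex-combination bound on each subinterval, and then glue the estimates using the fact that across boundary points where two permutations are simultaneously active the relevant gradients agree (because the tied losses imply equality of $\sum_j \gamma_j \ell_{\pi(j)}$ for the competing permutations and the corresponding gradients coincide at the tie). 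Alternatively, since $\nabla \ell_i$ is $\rho_0$-Lipschitz and each Clarke subgradient of $\lcliptilde$ is a convex combination of individual $\nabla \ell_i$'s with the correct weights, one obtains the stated Lipschitz bound uniformly over the whole domain in the generalized-gradient sense; this is the only nonroutine step, as everything else reduces to careful bookkeeping with Lemmas~\ref{lem:mat-mul-norm-bound}, \ref{lem:matrix-mul-diff-bdd}, and \ref{lem:loss_M_is_lip_and_bdd}.
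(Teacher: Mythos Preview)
Your approach is essentially the paper's: the single-batch Lipschitz computation invokes Lemmas~\ref{lem:loss_M_is_lip_and_bdd} and~\ref{lem:matrix-mul-diff-bdd} in the same order, and the passage to $\lcliptilde$ via the convex-combination structure (using $\sum_j\gamma_j=q$) parallels the paper's local-rank argument. One small fix: combining the $\vU$- and $\vV$-blocks via $\norm{(x,y)}\le\norm{x}+\norm{y}$ overshoots by a factor $\sqrt{2}$; to land on exactly $\rho_0$ the paper uses the elementary inequality $(ax+by)^2+(bx+ay)^2\le(a+b)^2(x^2+y^2)$ with $a=2\sqrt{2/B}+2e^2/B$ and $b=2e^2/B$, so that $a+b=\rho_0$. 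Your treatment of the sorting non-smoothness (max-over-permutations via rearrangement, then path-splitting or Clarke subgradients) is if anything more explicit than the paper's, which simply observes that the ranking is locally constant and then invokes a Hessian bound to pass from local to global Lipschitzness.
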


	\begin{proof}
            Denoting $\vU_\gB^i$, $\vV_\gB^i$ as parts of $\vU^i$, $\vV^i$ that correspond to a mini-batch $\gB$,
            we first show $\norm{\nabla_{\vU_\gB, \vV_\gB} \lclip(\vU_\gB^1, \vV_\gB^1) - \nabla_{\vU_\gB, \vV_\gB} \lclip(\vU_\gB^2, \vV_\gB^2)} \leq \rho_0 \norm{(\vU_\gB^1, \vV_\gB^1) - (\vU_\gB^2, \vV_\gB^2)}$ holds.
            For any $\vU_\gB$, $\vV_\gB \in (B_d(0, 1))^B$, we have
		\[
			\nabla_{\vU_\gB, \vV_\gB} \lclip(\vU_\gB, \vV_\gB) = (\vV_\gB(\nabla_X \gL^M (\vU_\gB^\intercal \vV_\gB))^\intercal, 
			\vU_\gB \nabla_X \gL^M (\vU_\gB^\intercal \vV_\gB)).
		\]
		from Eq.~\ref{eq:lclip-grad-U} and Eq.~\ref{eq:lclip-grad-V}.
		Recall Lemma~\ref{lem:loss_M_is_lip_and_bdd}; 
		for any $\vU_\gB^i$, $\vV_\gB^i \in (B_d(0, 1))^B$ ($i = 1, 2$), we have
		\begin{align*}
			\norm{\nabla_X \gL^M ((\vU_\gB^i)^\intercal \vV_\gB^i)}
			&\leq 2\sqrt{2/B}
			\intertext{and}
			\norm{\nabla_X \gL^M ((\vU_\gB^1)^\intercal \vV_\gB^1) - \nabla_X \gL^M ((\vU_\gB^2)^\intercal \vV_\gB^2)}
			&\leq \frac{2e^2}{B^2} \norm{(\vU_\gB^1)^\intercal \vV_\gB^1 - (\vU_\gB^2)^\intercal \vV_\gB^2}.
		\end{align*}
		We invoke Lemma~\ref{lem:matrix-mul-diff-bdd} and obtain
		\begin{align*}
			&\norm{\vU_\gB^1 \nabla_X \gL^M ((\vU_\gB^1)^\intercal \vV_\gB^1) - \vU_\gB^2 \nabla_X \gL^M ((\vU_\gB^2)^\intercal \vV_\gB^2)} \\
			&\leq 
			\norm{\vU_\gB^1 - \vU_\gB^2} \norm{\nabla_X \gL^M ((\vU_\gB^1)^\intercal \vV_\gB^1)}
			+ \norm{\vU_\gB^2} \norm{\nabla_X \gL^M ((\vU_\gB^1)^\intercal \vV_\gB^1) - \nabla_X \gL^M ((\vU_\gB^2)^\intercal \vV_\gB^2)} \\
			&\leq 
			2\sqrt{2/B} \norm{\vU_\gB^1 - \vU_\gB^2} 
			+ \frac{2e^2}{B^{3/2}} \norm{(\vU_\gB^1)^\intercal \vV_\gB^1 - (\vU_\gB^2)^\intercal \vV_\gB^2} \\
			&\leq
			2\sqrt{2/B} \norm{\vU_\gB^1 - \vU_\gB^2} 
			+ \frac{2e^2}{B^{3/2}} (\norm{\vU_\gB^1 - \vU_\gB^2} \norm{\vV_\gB^1} + \norm{\vU_\gB^2} \norm{\vV_\gB^1 - \vV_\gB^2}) \\
			&\leq
			(2\sqrt{2/B} + 2e^2/B) \norm{\vU_\gB^1 - \vU_\gB^2} 
			+ (2e^2/B) \norm{\vV_\gB^1 - \vV_\gB^2}, \\
			\intertext{and similarly}
			&\norm{\vV_\gB^1 \nabla_X (\gL^M ((\vU_\gB^1)^\intercal \vV_\gB^1))^\intercal - \vV_\gB^2 \nabla_X (\gL^M ((\vU_\gB^2)^\intercal \vV_\gB^2))^\intercal} \\
			&\leq  (2e^2/B) \norm{\vU_\gB^1 - \vU_\gB^2} 
			+ (2\sqrt{2/B} + 2e^2/B) \norm{\vV_\gB^1 - \vV_\gB^2}.
		\end{align*}
		Using the fact that 
		\[
			(ax + by)^2 + (bx + ay)^2 = (a^2 + b^2)(x^2 + y^2) + 4ab xy
			\leq (a + b)^2 (x^2 + y^2)
		\]
		holds for any $a$, $b \geq 0$ and $x$, $y \in \sR$, we obtain
		\begin{align*}
			&\norm{\nabla \lclip(\vU_\gB^1, \vV_\gB^1) - \nabla \lclip(\vU_\gB^2, \vV_\gB^2)}^2 \\
			&= \norm{\vV_\gB^1 \nabla_X (\gL^M ((\vU_\gB^1)^\intercal \vV_\gB^1))^\intercal - \vV_\gB^2 \nabla_X (\gL^M ((\vU_\gB^2)^\intercal \vV_\gB^2))^\intercal}^2 \\
			&\quad + \norm{\vU_\gB^1 \nabla_X \gL^M ((\vU_\gB^1)^\intercal \vV_\gB^1) - \vU_\gB^2 \nabla_X \gL^M ((\vU_\gB^2)^\intercal \vV_\gB^2)}^2 \\
			&\leq (2\sqrt{2/B} + 4e^2/B)^2 (\norm{\vU_\gB^1 - \vU_\gB^2}^2 + \norm{\vV_\gB^1 - \vV_\gB^2}^2) \\
			&= (2\sqrt{2/B} + 4e^2/B)^2 \norm{(\vU_\gB^1, \vV_\gB^1) - (\vU_\gB^2, \vV_\gB^2)}^2.
		\end{align*}
		Restating this with $\rho_0 = 2\sqrt{2/B} + 4e^2/B$, we have
		\begin{equation}\label{eq:lclip-lips}
			\norm{\nabla \lclip(\vU_\gB^1, \vV_\gB^1) - \nabla \lclip(\vU_\gB^2, \vV_\gB^2)}
			\leq \rho_0 \norm{(\vU_\gB^1, \vV_\gB^1) - (\vU_\gB^2, \vV_\gB^2)}.
		\end{equation}
		
		Recall the definition of $\lcliptilde$:
		\[
			\lcliptilde(\vU, \vV) = \frac{1}{q} \sum_{j} \gamma_j \lclip(\vU_{\gB_{(j)}}, \vV_{\gB_{(j)}}),
		\]
		where $\gamma_j = \frac{\sum_{l=0}^{q-1} {j - 1 \choose l}{{N \choose B} - j \choose k - l -1}}{{{N \choose B} \choose k}}$ and $\sum_j \gamma_j = q$.
		For any $\vU$, $\vV \in (\sS^d)^N$, we can find a neighborhood of $(\vU, \vV)$ so that value rank of $\lclip(\vU_{\gB_i}, \vV_{\gB_i})$ over $i \in \{1, \ldots, {N \choose B}\}$ does not change, since $\lclip$ is $\rho_0$-Lipschitz.
		More precisely speaking, we can find a rank that can be accepted by all points in the neighborhood.
		Therefore, we have
		\[
			\nabla_{\vU, \vV} \lcliptilde (\vU, \vV) = \frac{1}{q} \sum_{j} \gamma_j \nabla_{\vU, \vV} \lclip(\vU_{\gB_{(j)}}, \vV_{\gB_{(j)}}),
		\]
		and since $\norm{\vU_{\gB_{(j)}} - \vV_{\gB_{(j)}}} \leq \norm{\vU - \vV}$, $\nabla_{\vU, \vV} \lcliptilde (\vU, \vV)$ is locally $\rho_0$-Lipschitz.
		Since $\lclip$ is smooth, such property is equivalent to $-\rho_0 \mathbb{I}_N \preceq \nabla^2_{\vU, \vV}\lcliptilde(\vU, \vV) \preceq \rho_0 \mathbb{I}_N$, where $\mathbb{I}_N$ is the $N \times N$ identity matrix.
		Therefore, $\lcliptilde$ is $\rho_0$-Lipschitz on $((B_d(0, 1))^N)^2$.
	\end{proof}
\section{Algorithm Details}\label{sec:alg_detail}
\subsection{Spectral Clustering Method}\label{sec:alg_detail_sc}
Here, we provide a detailed description of the proposed spectral clustering method (see Sec.~\ref{subsec:spectral_clustering}) from Algo.~\ref{alg:spectral_clustering}. Recall that the contrastive loss $\mathcal{L}^{\sf{con}}(U_{\gB}, V_{\gB})$ for a given mini-batch $\gB$ is lower bounded as the following by Jensen's inequality:
{\small
\begin{align*}
    &\begin{aligned}
    &\lclip(\vU_{\gB},\vV_{\gB}) 
    = -\frac{1}{B} \sum_{i\in \gB} \log \left(\frac{e^{\vu_i^{\intercal} \vv_i}}{\sum_{j=1}^N e^{{\vu}_i^{\intercal} {\vv}_j}}\right)
    -\frac{1}{B} \sum_{i=\in\gB} \log \left(\frac{e^{\vv_i^{\intercal} \vu_i}}{\sum_{j=1}^N e^{{\vv}_i^{\intercal} {\vu}_j}}\right)\\
    &= \frac{1}{B} \left\{\sum_{i\in \gB} \log \left(1+\sum_{j\in\gB\setminus \{i\}}e^{\vu_i^{\intercal}(\vv_j-\vv_i)})\right)
    +\sum_{i\in \gB} \log \left(1+\sum_{j\in\gB\setminus \{i\}}e^{\vv_i^{\intercal}(\vu_j-\vu_i)})\right)\right\}\\
        & \geq
        \frac{1}{B(B-1)}\left\{\sum_{i\in\gB}\sum_{j \in \gB \setminus \{i\}}\log\left(1+(B-1)e^{\vu_i^\intercal(\vv_j-\vv_i)}\right)+\log\left(1+(B-1)e^{\vv_i^\intercal(\vu_j-\vu_i)}\right)\right\},
    \end{aligned}
\end{align*}
}
and we consider the graph $\gG$ with $N$ nodes, where the weight between node $k$ and $l$ is defined as 
\begin{align*}
    w(k,l):= \sum_{(i,j)\in\{(k,l), (l,k)\}}\log\left(1+(B-1)e^{\vu_i^\intercal(\vv_j-\vv_i)}\right)+\log\left(1+(B-1)e^{\vv_i^\intercal(\vu_j-\vu_i)}\right).
\end{align*}
The proposed method employs the spectral clustering algorithm from~\cite{ng2001spectral}, which bundles $N$ nodes into $N/B$ clusters. We aim to assign an equal number of nodes to each cluster, but we encounter a problem where varying numbers of nodes are assigned to different clusters. To address this issue, we incorporate an additional step to ensure that each cluster (batch) has the equal number $B$ of positive pairs. This step is to solve an assignment problem~\citep{kuhn1955hungarian, crouse2016implementing}. We consider a minimum weight matching problem in a bipartite graph~\cite{crouse2016implementing}, where the first partite set is the collection of data points and the second set represents $B$ copies of each cluster center obtained after the spectral clustering. The edges in this graph are weighted by the distances between data points and centers. The goal of the minimum weight matching problem is to assign exactly $B$ data points to each center, minimizing the total cost of the assignment, where cost is the sum of the distances from each data point to its assigned center. This guarantees an equal number of data points for each cluster while minimizing the total assignment cost. A annotated procedure of the method is provided in Algo.~\ref{alg:sc_appendix}.
\begin{algorithm}[H]
\footnotesize
   \caption{Spectral Clustering Method}
   \label{alg:sc_appendix}
   \DontPrintSemicolon
   \KwIn{the number of positive pairs $N$, mini-batch size $B$, 
   embedding matrices: $\vU$, $\vV$}
   \KwOut{selected mini-batches $\{\gB_j\}_{j=1}^{N/B}$}
   \BlankLine
   Construct the affinity matrix $A$: 
   \quad\qquad $A_{ij} = \begin{cases} w(i,j)  & \text{if } i\neq j \\ 0 & \text{else} \end{cases}$\;
   Construct the degree matrix $D$ from $A$:
   $D_{ij} = \begin{cases} 0 & \text{if } i\neq j \\ \sum_{j=1}^N A_{ij} & \text{else} \end{cases}$\;
   $L \leftarrow D-A$; $k \leftarrow N/B$\; 
   Compute the first $k$ eigenvectors of $L$, denoted as $V_k \in \mathbb{R}^{N \times k}$\;
   Normalize the rows of $V_k$ to have unit $\ell_2$-norm\;
   Apply the $k$-means clustering algorithm on the rows of the normalized $V_k$ to get cluster centers $Z\in\mathbb{R}^{k\times k}$\;
   Construct a bipartite graph $\gG_{\sf assign}$: (i) the first partite set is $V_k$ and (ii) the second set is the collection of $B$ copies of each center in $Z$\;
   Compute distances between row vectors of $V_k$ and  $B$ copies of each center in $Z$, and assign these as edge weights in $\gG_{\sf assign}$\;
   Solve the minimum weight matching problem in $\gG_{\sf assign}$ using a method such as the Hungarian algorithm\;
   \Return $\{\gB_j\}_{j=1}^{N/B}$
\end{algorithm}
                
\vspace{10mm}
\subsection{Stochastic Gradient Descent (SGD)}\label{sec:alg_detail_sgd}
We consider two SGD algorithms: 
\begin{enumerate}[leftmargin=0.7cm]
    \item SGD \emph{with replacement} (Algo.~\ref{alg:sgd_w_replacement}) with $k=1$ for the theoretical analysis in Sec.~\ref{subsec:toy_example}.
    \item SGD \emph{without replacement} (Algo.~\ref{alg:sgd_wo_replacement}) for experimental results in Sec.~\ref{sec:exp}, which is widely employed in practical settings.
\end{enumerate}
In the more practical setting where $\vu_i = f_{\theta}(\vx_i)$ and $\vv_i = g_{\phi}(\vy_i)$, SGD updates the model parameters $\theta,\phi$ using the gradients $\frac{1}{k} \sum_{i \in S_{\gB}} \nabla_{\theta, \phi}\lclip(\vU_{\gB_i}, \vV_{\gB_i})$ instead of explicitly updating $\vU$ and $\vV$.
\begin{algorithm}[H]
\footnotesize
\label{alg:sgd_w_replacement}
   \caption{SGD with replacement}
   \DontPrintSemicolon
   \KwIn{the number of positive pairs $N$, mini-batch size $B$, the number of mini-batches $k$, the number of iterations $T$, the learning rate $\eta$, 
   initial embedding matrices: $\vU$, $\vV$}
   \BlankLine
   \For{$t=1$ \KwTo $T$}{
    Randomly select $k$ mini-batch indices $S_{\gB} \subset \left[\binom{N}{B}\right]$ $(|S_{\gB}|=k)$\;
    Compute the gradient: $g \leftarrow \frac{1}{k} \sum_{i \in S_{\gB}} \nabla_{\vU, \vV} \lclip(\vU_{\gB_i}, \vV_{\gB_i})$ \;
    Update the weights: $(\vU, \vV) \leftarrow (\vU, \vV) - \eta^{(t)} \cdot g$\;
    Normalize column vectors of embedding matrices $(\vU, \vV)$\;
    }
\end{algorithm}

\begin{algorithm}[H]
\footnotesize
\label{alg:sgd_wo_replacement}
   \caption{SGD without replacement}
   \DontPrintSemicolon
   \KwIn{the number of positive pairs $N$, mini-batch size $B$, the number of mini-batches $k$, the number of epochs $E$, the learning rate $\eta$, 
   initial embedding matrices: $\vU$, $\vV$}
   \BlankLine
   \For{$e=1$ \KwTo $E$}{
   Randomly partition the $N$ positive pairs into $N/B$ mini-batches: $\{\mathcal{B}_i\}_{i=1}^{N/B}$\;
   \For{$j=1$ \KwTo $N/Bk$}{
    Select $k$ mini-batch indices $S_{\gB}=\{k(j-1)+1,k(j-1)+2,\ldots,kj\}$\;
    Compute the gradient: $g \leftarrow \frac{1}{k} \sum_{i \in S_{\gB}} \nabla_{\vU, \vV} \lclip(\vU_{\gB_i}, \vV_{\gB_i})$ \;
    Update the weights: $(\vU, \vV) \leftarrow (\vU, \vV) - \eta \cdot g$\;
    Normalize column vectors of embedding matrices $(\vU, \vV)$\;
    }
    }
\end{algorithm}

\subsection{Ordered SGD (OSGD)}\label{sec:alg_detail_osgd}
We consider two OSGD algorithms: 
\begin{enumerate}[leftmargin=0.7cm]
    \item OSGD (Algo.~\ref{alg:osgd}) with $k=\binom{N}{B}$ for the theoretical analysis in Sec.~\ref{subsec:toy_example}.
    \item OSGD \emph{without replacement} (Algo.~\ref{alg:osgd_wo_replacement}) for experimental results in Sec.~\ref{sec:exp}, which is implemented for practical settings.
\end{enumerate}
In the more practical setting where $\vu_i = f_{\theta}(\vx_i)$ and $\vv_i = g_{\phi}(\vy_i)$, OSGD updates the model parameters $\theta,\phi$ using the gradients $\frac{1}{k} \sum_{i \in S_{\gB}} \nabla_{\theta, \phi}\lclip(\vU_{\gB_i}, \vV_{\gB_i})$ instead of explicitly updating $\vU$ and $\vV$.
\begin{algorithm}[H]
\footnotesize
\label{alg:osgd}
   \caption{OSGD}
   \DontPrintSemicolon
   \KwIn{the number of positive pairs $N$, mini-batch size $B$, the number of mini-batches $k$, the number of iterations $T$, the set of learning rates $\{\eta^{(t)}\}_{t=1}^{T}$, 
   initial embedding matrices: $\vU$, $\vV$}
   \BlankLine
   \For{$t=1$ \KwTo $T$}{
    Randomly select $k$ mini-batch indices $S_{\gB} \subseteq \left[\binom{N}{B}\right]$ $(|S_{\gB}|=k)$\;
    Choose $q$ mini-batch indices $S_q:=\{i_1, i_2,\ldots, i_q\}\subset S_{\gB}$ having the largest losses i.e., $\lclip(\vU_{\gB_i}, \vV_{\gB_i})$\;
    Compute the gradient: $g \leftarrow \frac{1}{q} \sum_{i \in S_{q}} \nabla_{\vU, \vV} \lclip(\vU_{\gB_i}, \vV_{\gB_i})$ \;
    Update the weights: $(\vU, \vV) \leftarrow (\vU, \vV) - \eta^{(t)} \cdot g$\;
    Normalize column vectors of embedding matrices $(\vU, \vV)$\;
    }
\end{algorithm}
\begin{algorithm}[H]
\footnotesize
\label{alg:osgd_wo_replacement}
   \caption{OSGD without replacement}
   \DontPrintSemicolon
   \KwIn{the number of positive pairs $N$, mini-batch size $B$, the number of mini-batches $k$, the number of epochs $E$, the set of learning rate $\eta$, 
   initial embedding matrices: $\vU$, $\vV$}
   \BlankLine
   \For{$e=1$ \KwTo $E$}{
   Randomly partition the $N$ positive pairs into $N/B$ mini-batches: $\{\mathcal{B}_i\}_{i=1}^{N/B}$\;
   \For{$j=1$ \KwTo $N/Bk$}{
    Select $k$ mini-batch indices $S_{\gB}=\{k(j-1)+1,k(j-1)+2,\ldots,kj\}$\;
    Choose $q$ mini-batch indices $S_q:=\{i_1, i_2,\ldots, i_q\}\subset S_{\gB}$ having the largest losses i.e., $\lclip(\vU_{\gB_i}, \vV_{\gB_i})$\;
    Compute the gradient: $g \leftarrow \frac{1}{k} \sum_{i \in S_{q}} \nabla_{\vU, \vV} \lclip(\vU_{\gB_i}, \vV_{\gB_i})$ \;
    Update the weights: $(\vU, \vV) \leftarrow (\vU, \vV) - \eta \cdot g$\;
    Normalize column vectors of embedding matrices $(\vU, \vV)$\;
    }
    }
\end{algorithm}

\section{Experiment Details}\label{sec:exp_detail}

In this section, we describe the details of the experiments in Sec.~\ref{sec:exp} and provide additional experimental results. First, we present histograms of mini-batch counts for different loss values from models trained with different batch selection methods. Next, we provide the results for $N\in\{4, 16\}$ on the synthetic dataset. Lastly, we explain the details of the experimental settings on real dataset, and provide the results of the retrieval downstream tasks.

\subsection{Batch Counts: SC method vs. Random Batch Selection}\label{sec:batch_counts_appendix}
We provide additional results comparing the mini-batch counts of two batch selection algorithms: the proposed SC method and random batch selection. The mini-batch counts are based on the mini-batch contrastive loss $\lclip(\vU_{\gB}, \vV_{\gB})$. We measure mini-batch losses from ResNet-18 models trained on CIFAR-100 using the gradient descent algorithm with different batch selection methods: (i) SGD (Algo.~\ref{alg:sgd_wo_replacement}), (ii) OSGD (Algo.~\ref{alg:osgd_wo_replacement}), and (iii) the SC method (Algo.~\ref{alg:sc_appendix}). Fig.~\ref{fig:loss_histogram_appendix} illustrates histograms of mini-batch counts for $N/B$ mini-batches, where $N=50000$ and $B=20$. The results show that mini-batches generated through the proposed spectral clustering method tend to contain a higher proportion of large loss values when compared to the random batch selection, regardless of the pre-trained models used. 

\begin{figure}[!t]
\label{fig:loss_histogram_appendix}
\centering
\includegraphics[width=0.98\columnwidth]{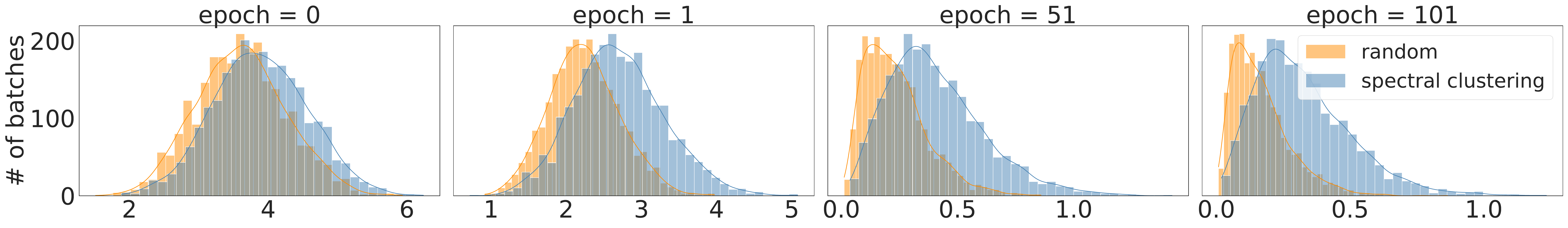}
\includegraphics[width=0.98\columnwidth]{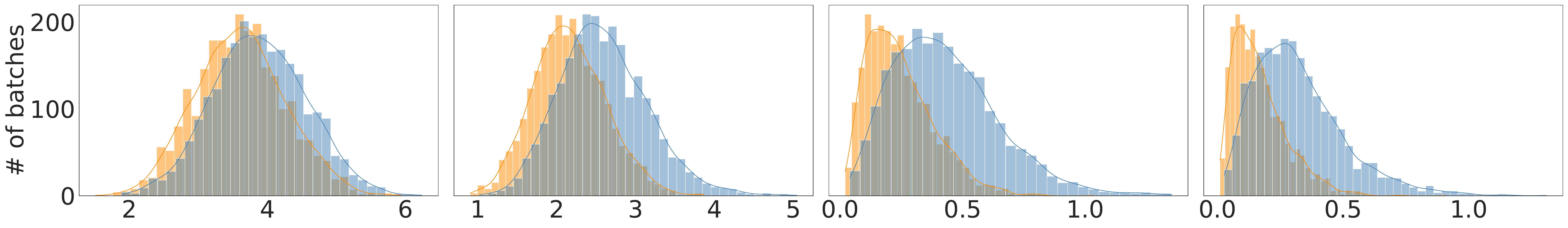}
\includegraphics[width=0.98\columnwidth]{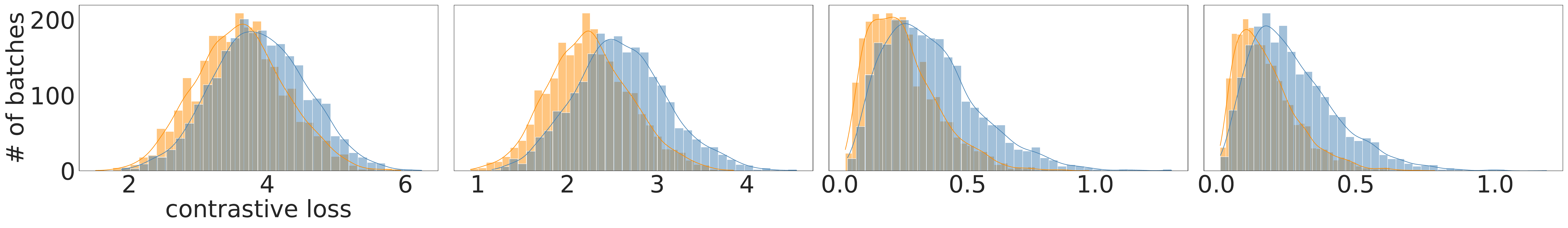}
\vspace{-3mm}
\caption{Histograms of mini-batch counts for $N/B$ mini-batches, for the contrastive loss measured from ResNet-18 models trained on CIFAR-100 using different batch selection methods: (i) SGD (Top), (ii) OSGD (Middle), (iii) SC method (Bottom), where $N$=50,000 and $B$=20. Each column of plots is derived from a distinct training epoch. 
Here we compare two batch selection methods: 
(i) randomly shuffling $N$ samples and partition them into $N/B$ mini-batches of size $B$, (ii) the proposed SC method given in Algo.~\ref{alg:spectral_clustering}. 
The histograms show that mini-batches generated through the proposed spectral clustering method tend to contain a higher proportion of large loss values when compared to random batch selection, regardless of the pre-trained models used.
}
\end{figure}

\subsection{Synthetic Dataset}\label{sec:synthetic_exp_appendix}

With the settings from Sec.~\ref{sec:synthetic_exp}, where each column of embedding matrices $\vU, \vV$ is initialized as a multivariate normal vector and then normalized as $\lVert \vu_i \rVert = \lVert \vv_i \rVert = 1$, for all $i$, we provide the results for $N\in\{4, 16\}$ and $d=2N$ or $d=N/2$. Fig.~\ref{fig:sim_theorem_base_N_4} and ~\ref{fig:sim_theorem_base_N_16} show the results for $N=4$ and $N=16$, respectively. We additionally present the results for theoretically unproven cases, specifically for $N=8$ and $d\in\{3, 5\}$ (see Fig.~\ref{fig:sim_theorem_unproven}). The results provide empirical evidence that all combinations of mini-batches leads to the optimal solution of full-batch minimization for the theoretically unproven cases.

\begin{figure*}[!t]
\centering

\raisebox{2.4mm}[0mm][0mm]{\rotatebox[origin=l]{90}{\scriptsize{\textsc{$d=2N$}}}}\hspace{1mm}
    \includegraphics[height=.12\columnwidth]{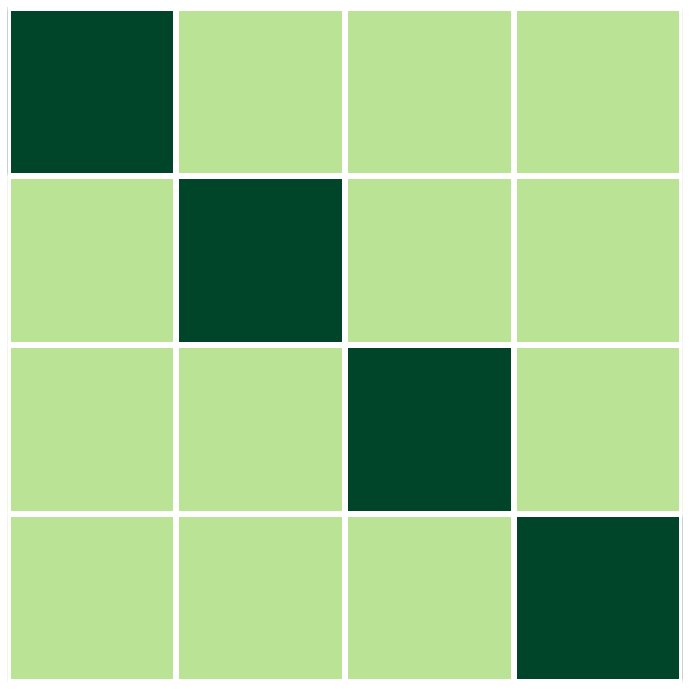}
    \label{fig:N4_d8_B2_ETF_wo_cbar}
    \hspace{1mm}
    \includegraphics[height=.12\columnwidth]{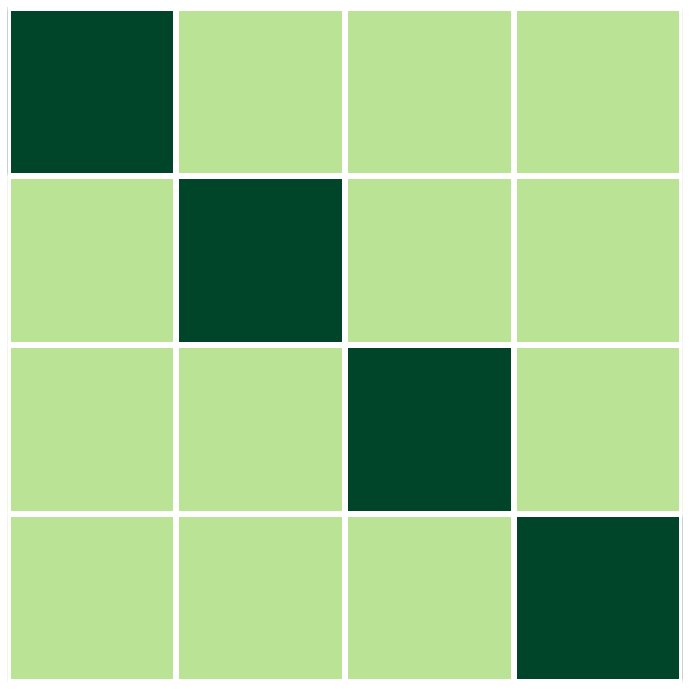}
    \label{fig:d=2N_N4_d8_lr0.5_s50000_z_fixed_mini_batch_B2_full_wo_cbar}
    \hspace{1mm}
    \includegraphics[height=.12\columnwidth]{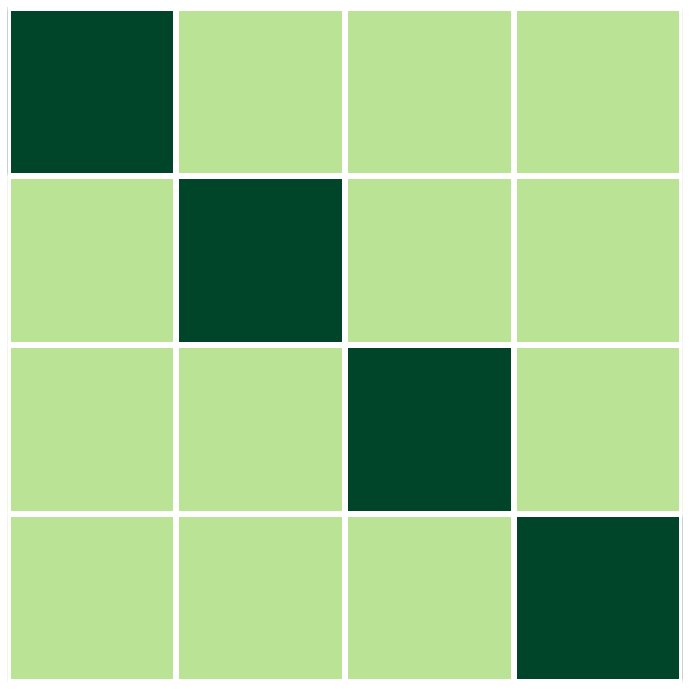}
    \label{fig:d=2N_N4_d8_lr0.5_s20000_z_fixed_mini_batch_B2_NcB_wo_cbar}
    \hspace{1mm}
    \includegraphics[height=.12\columnwidth]{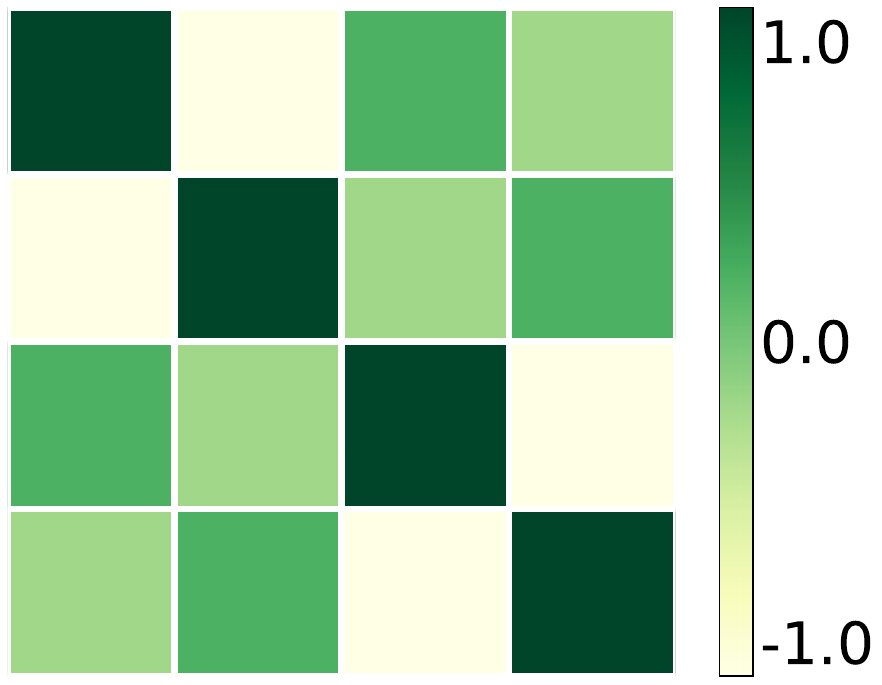}
    \label{fig:d=2N_N4_d8_lr0.5_s20000_z_fixed_mini_batch_B2_f_w_cbar}
    \hspace{1mm}
\includegraphics[height=.13\columnwidth]{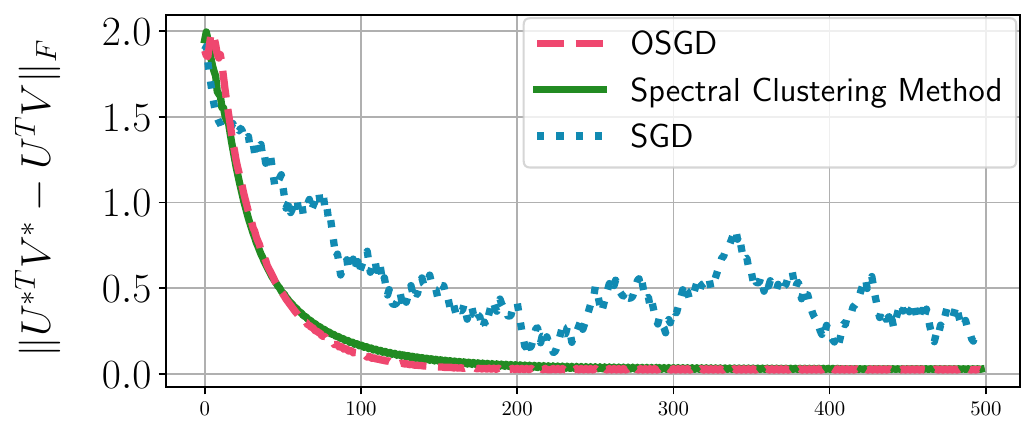}
\label{fig:d=2N_norm_diff_N4_d8}
\hspace{1mm}

\raisebox{2.2mm}[0mm][0mm]{\rotatebox[origin=l]{90}{\scriptsize{\textsc{$d=N/2$}}}
\hspace{-2.2mm}}\hspace{1mm}
\subfigure[solutions]{
    \includegraphics[height=.12\columnwidth]{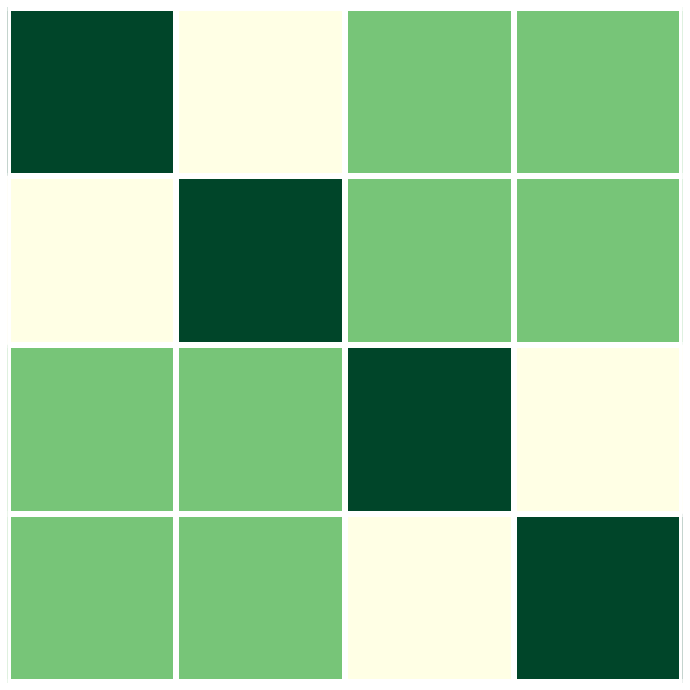}
    \label{fig:N4_d2_B2_CP_wo_cbar}
}
\subfigure[full-batch]{
    \includegraphics[height=.12\columnwidth]{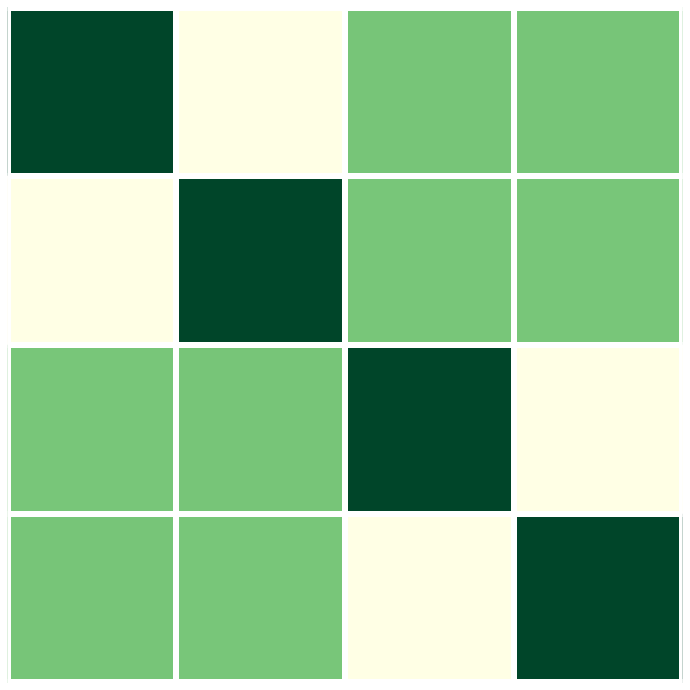}
    \label{fig:d=N_d_2_N4_d2_lr0.5_s50000_z_fixed_mini_batch_B2_full_wo_cbar}
}
\subfigure[${N\choose B}$-all]{
    \includegraphics[height=.12\columnwidth]{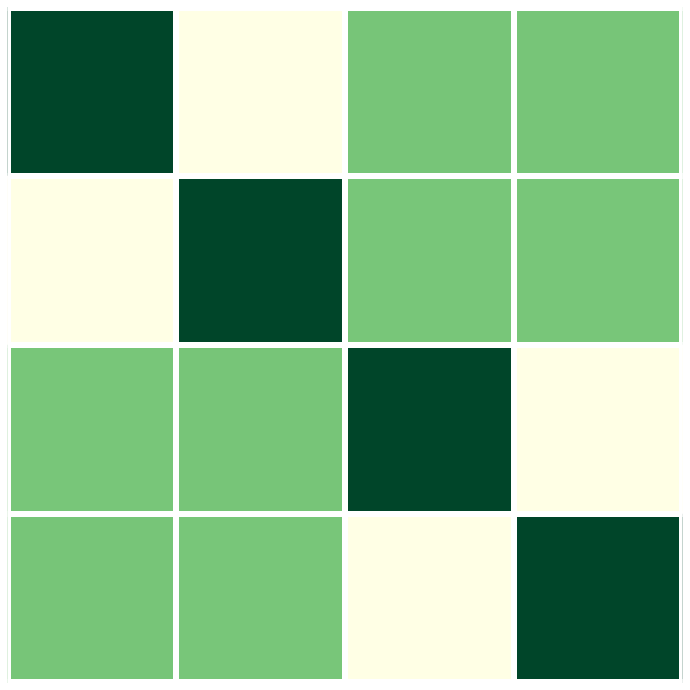}
    \label{fig:d=N_d_2_N4_d2_lr0.5_s50000_z_fixed_mini_batch_B2_NcB_wo_cbar}
}
\subfigure[${N\choose B}$-sub]{
    \includegraphics[height=.12\columnwidth]{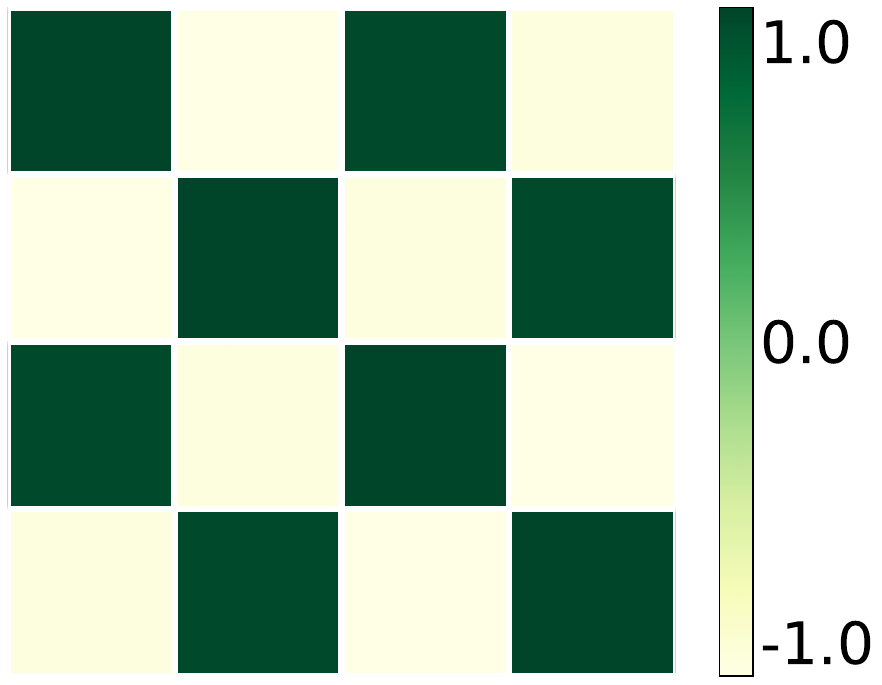}
    \label{fig:d=N_d_2_N4_d2_lr0.5_s50000_z_fixed_mini_batch_B2_f_w_cbar}
}
\subfigure[norm differences]{
    \includegraphics[height=.13\columnwidth]{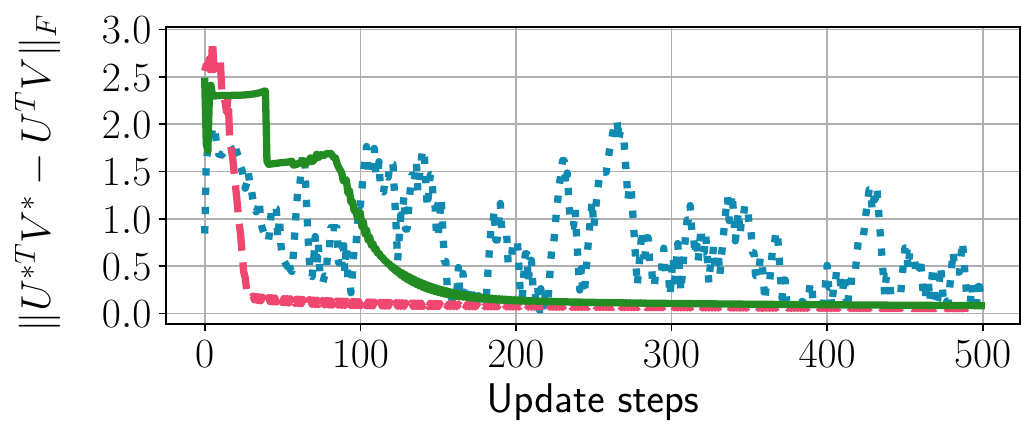}
    \label{fig:d=2N_norm_diff_N4_d2}
    \hspace{1mm}
}

\vspace{-3mm}
\caption{
Heatmap of $N \times N$ matrix visualizing the resulting values from the same settings with Fig~\ref{fig:sim_theorem_base_N_8} except $N=4$.
}
\label{fig:sim_theorem_base_N_4}
\end{figure*}
\vspace{-2mm}

\begin{figure*}[!t]
\centering

\raisebox{2.4mm}[0mm][0mm]{\rotatebox[origin=l]{90}{\scriptsize{\textsc{$d=2N$}}}}\hspace{1mm}
    \includegraphics[height=.12\columnwidth]{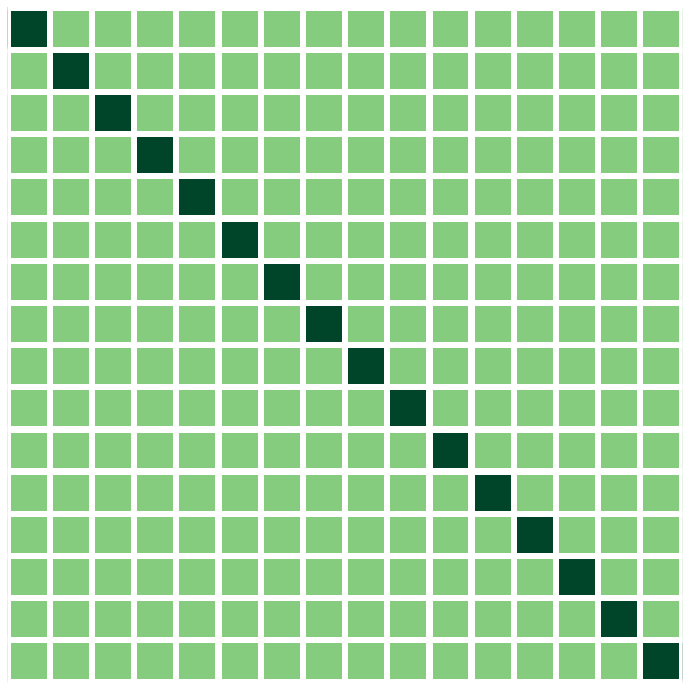}
    \label{fig:N16_d32_B2_ETF_wo_cbar}
    \hspace{1mm}
    \includegraphics[height=.12\columnwidth]{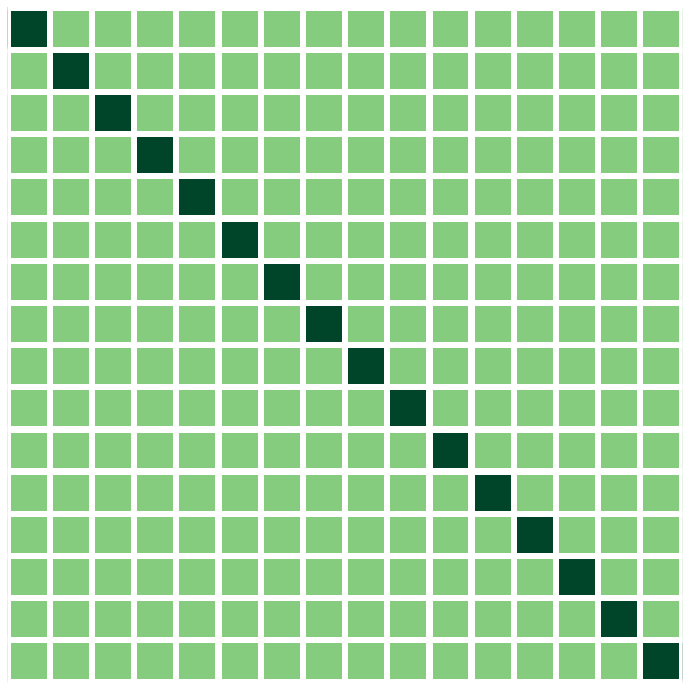}
    \label{fig:d=2N_N16_d32_lr0.5_s50000_z_fixed_mini_batch_B2_full_wo_cbar}
    \hspace{1mm}
    \includegraphics[height=.12\columnwidth]{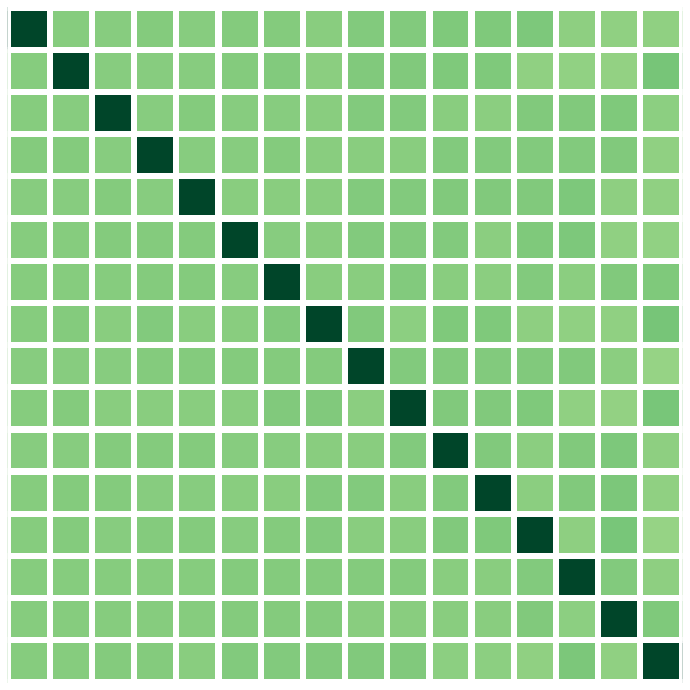}
    \label{fig:d=2N_N16_d32_lr0.5_s50000_z_fixed_mini_batch_B2_NcB_wo_cbar}
    \hspace{1mm}
    \includegraphics[height=.12\columnwidth]{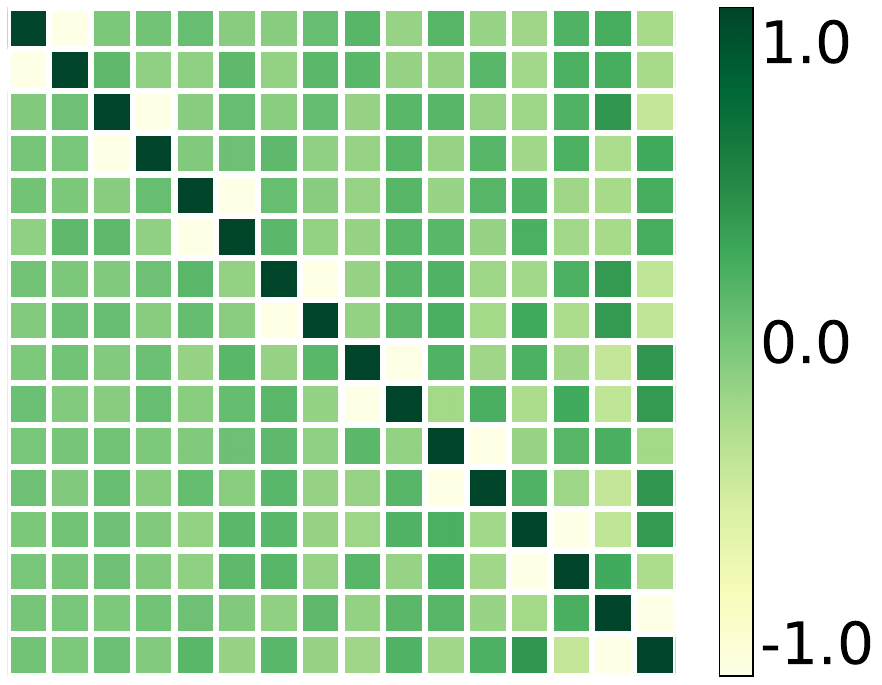}
    \label{fig:d=2N_N16_d32_lr0.5_s50000_z_fixed_mini_batch_B2_f_w_cbar}
    \hspace{1mm}
\includegraphics[height=.13\columnwidth]{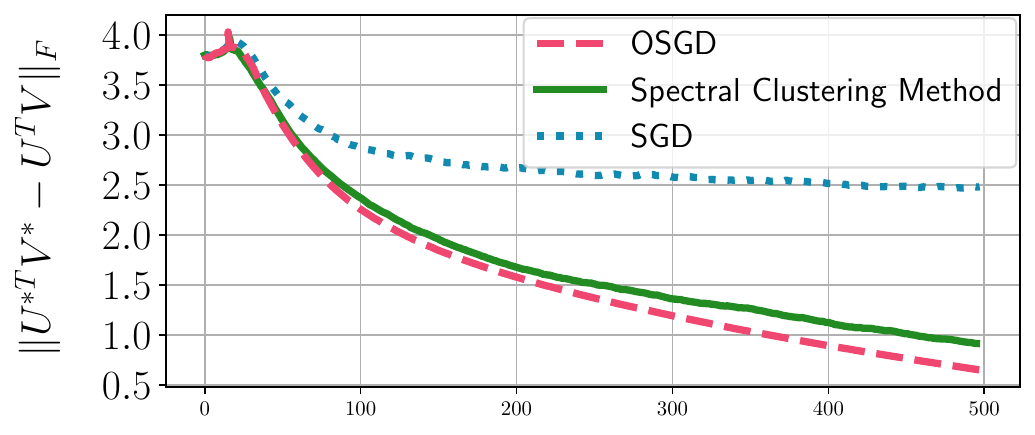}
\hspace{1mm}

\raisebox{2.2mm}[0mm][0mm]{\rotatebox[origin=l]{90}{\scriptsize{\textsc{$d=N/2$}}}
\hspace{-2.2mm}}\hspace{1mm}
\subfigure[solutions]{
    \includegraphics[height=.12\columnwidth]{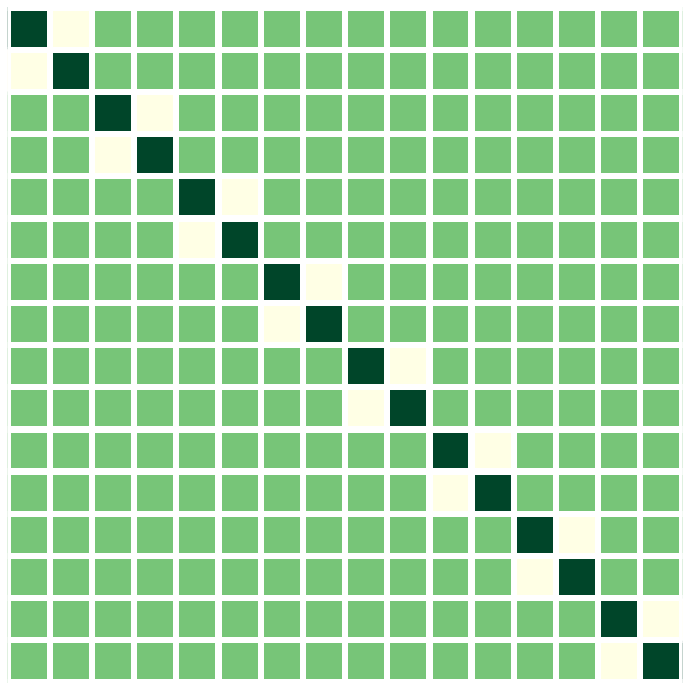}
    \label{fig:N16_d8_B2_CP_wo_cbar}
}
\subfigure[full-batch]{
    \includegraphics[height=.12\columnwidth]{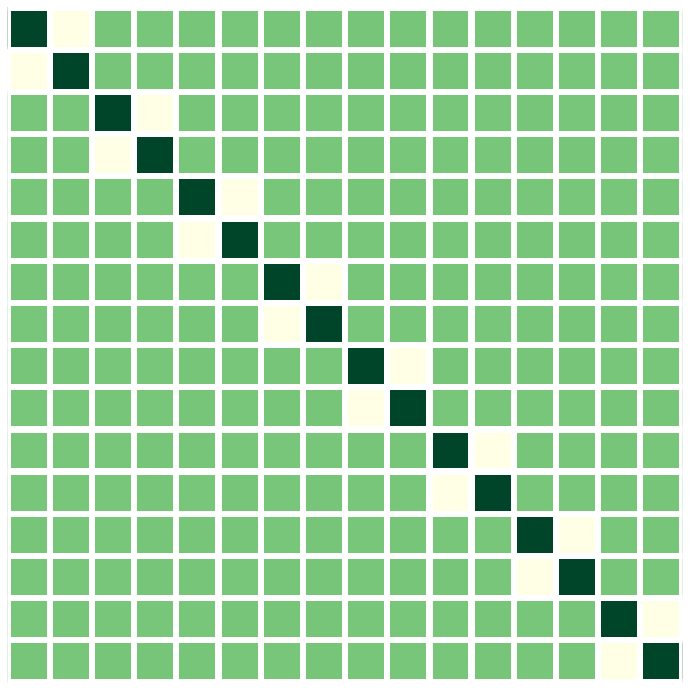}
    \label{fig:d=N_d_2_N16_d8_lr0.5_s500000_z_fixed_mini_batch_B2_full_wo_cbar}
}
\subfigure[${N\choose B}$-all]{
    \includegraphics[height=.12\columnwidth]{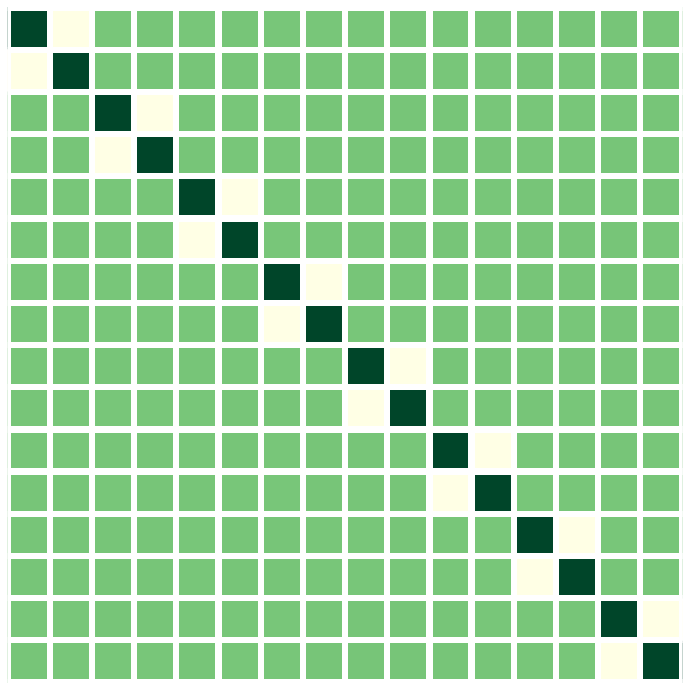}
    \label{fig:d=N_d_2_N16_d8_lr0.5_s500000_z_fixed_mini_batch_B2_NcB_wo_cbar}
}
\subfigure[${N\choose B}$-sub]{
    \includegraphics[height=.12\columnwidth]{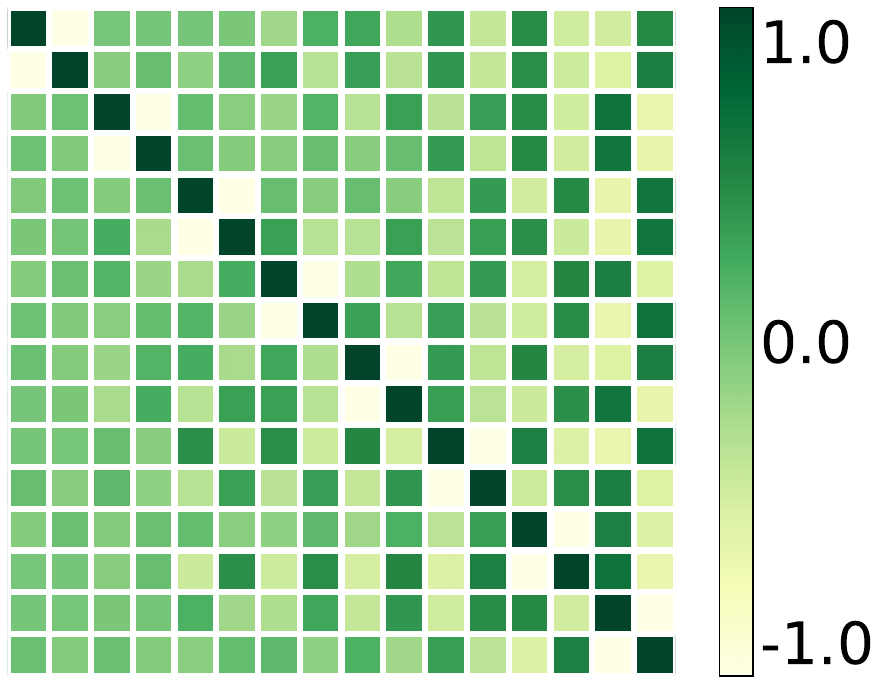}
    \label{fig:d=N_d_2_N16_d8_lr0.5_s500000_z_fixed_mini_batch_B2_f_w_cbar}
}
\subfigure[norm differences]{
    \includegraphics[height=.13\columnwidth]{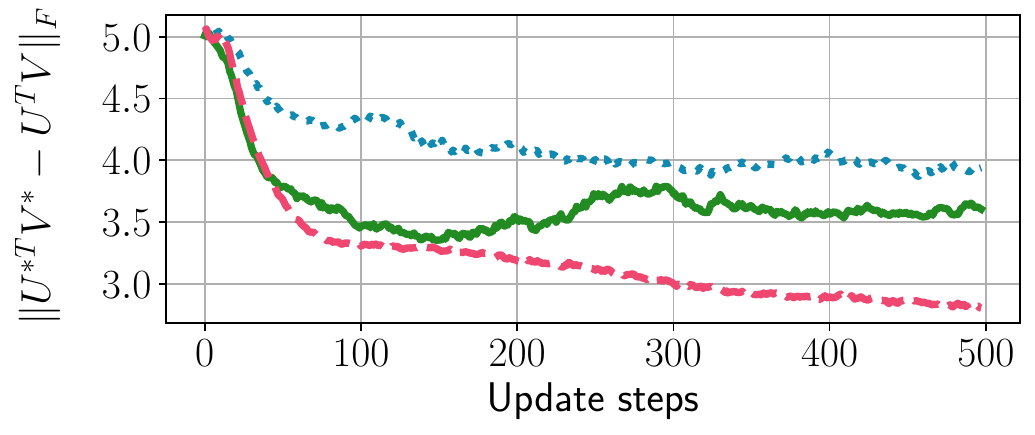}
    \label{fig:norm_diff_N16_d8}
    \hspace{1mm}
}

\vspace{-3mm}
\caption{
Heatmap of $N \times N$ matrix visualizing the resulting values from the same settings with Fig~\ref{fig:sim_theorem_base_N_8} except $N=16$.
}
\label{fig:sim_theorem_base_N_16}
\end{figure*}
\vspace{-2mm}

\begin{figure*}[!t]
\centering

\raisebox{4.0mm}[0mm][0mm]{\rotatebox[origin=l]{90}{\scriptsize{\textsc{$d=3$}}}}\hspace{1mm}\vspace{3mm}
    \includegraphics[height=.12\columnwidth]{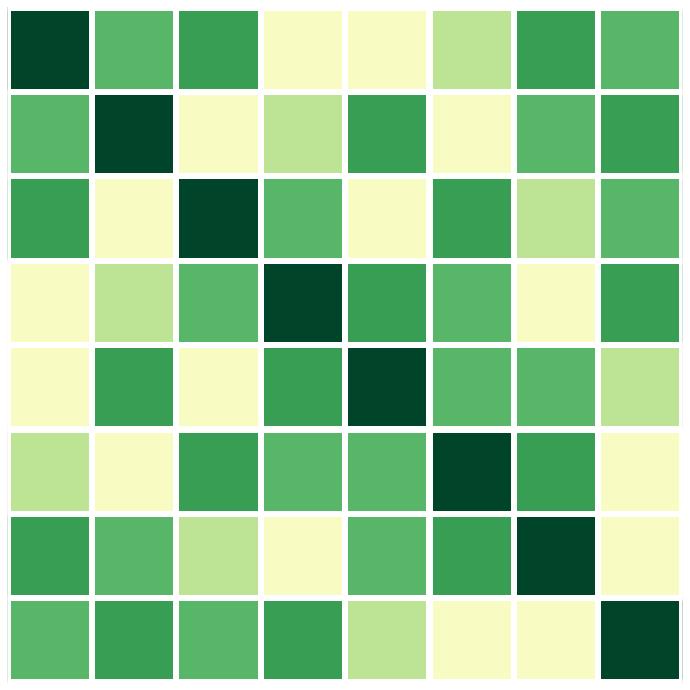}
    \label{fig:N=8_N8_d3_lr0.5_s20000_z_fixed_mini_batch_B2_full_step8000_wo_cbar}
    \hspace{1mm}
    \includegraphics[height=.12\columnwidth]{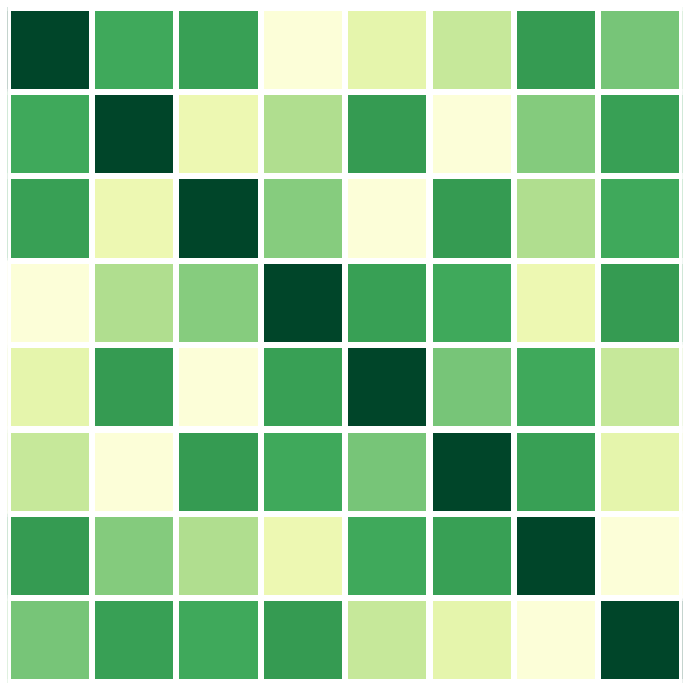}
    \label{fig:N=8_N8_d3_lr0.5_s20000_z_fixed_mini_batch_B2_NcB_step8000_wo_cbar}
    \hspace{1mm}
    \includegraphics[height=.12\columnwidth]{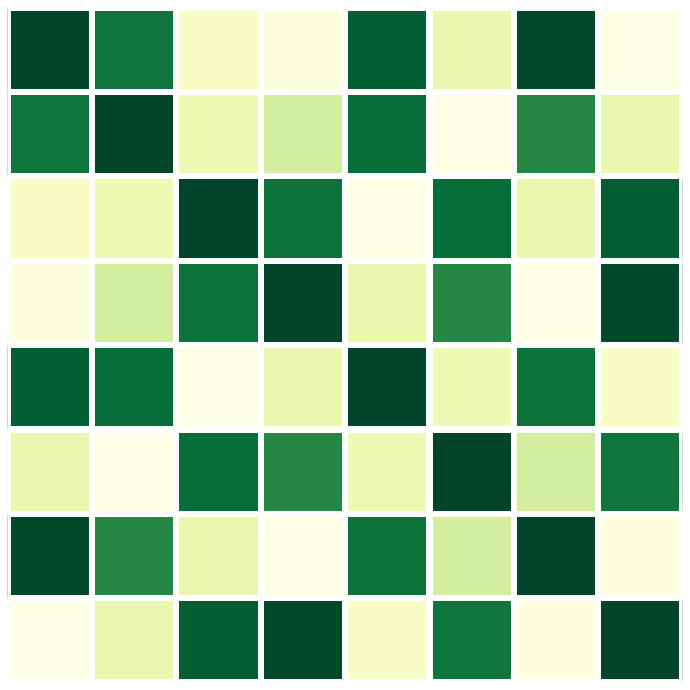}
    \label{fig:N=8_N8_d3_lr0.5_s20000_z_fixed_mini_batch_B2_f_step8000_wo_cbar}
    \hspace{1mm}

\vspace{-2.5mm}
\raisebox{4.0mm}[0mm][0mm]{\rotatebox[origin=l]{90}{\scriptsize{\textsc{$d=5$}}}\hspace{1mm}
\hspace{-2.4mm}}
\subfigure[full-batch]{
    \includegraphics[height=.12\columnwidth]{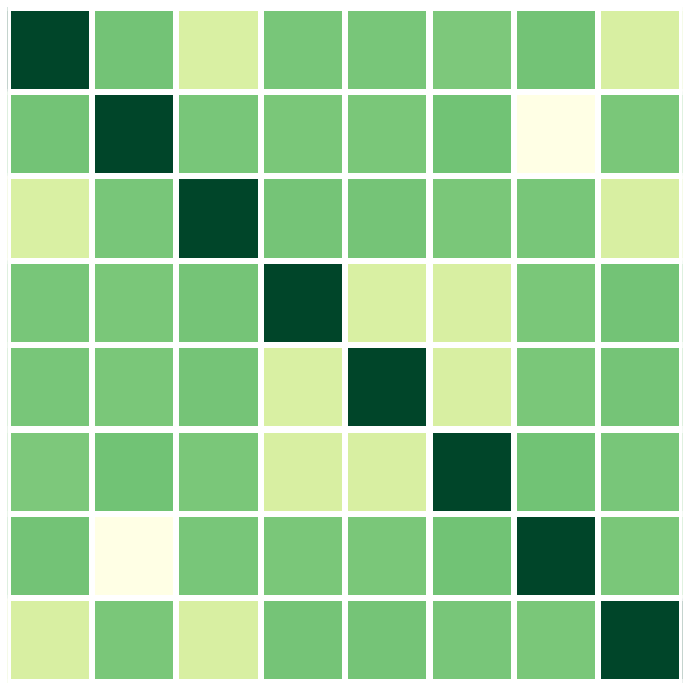}
    \label{fig:N=8_N8_d5_lr0.5_s20000_z_fixed_mini_batch_B2_full_step8000_wo_cbar}
}
\subfigure[${N\choose B}$-all]{
    \includegraphics[height=.12\columnwidth]{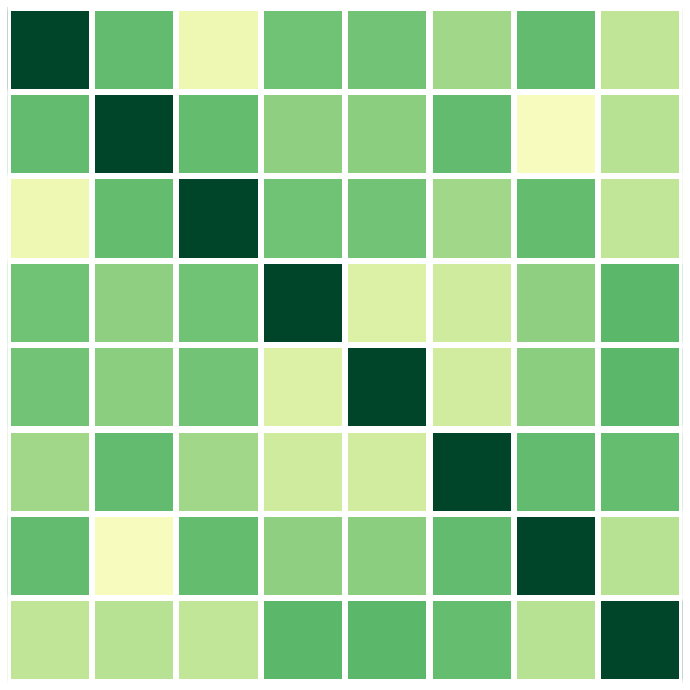}
    \label{fig:N=8_N8_d5_lr0.5_s200000_z_fixed_mini_batch_B2_NcB_step90000_wo_cbar}
}
\subfigure[${N\choose B}$-sub]{
    \includegraphics[height=.12\columnwidth]{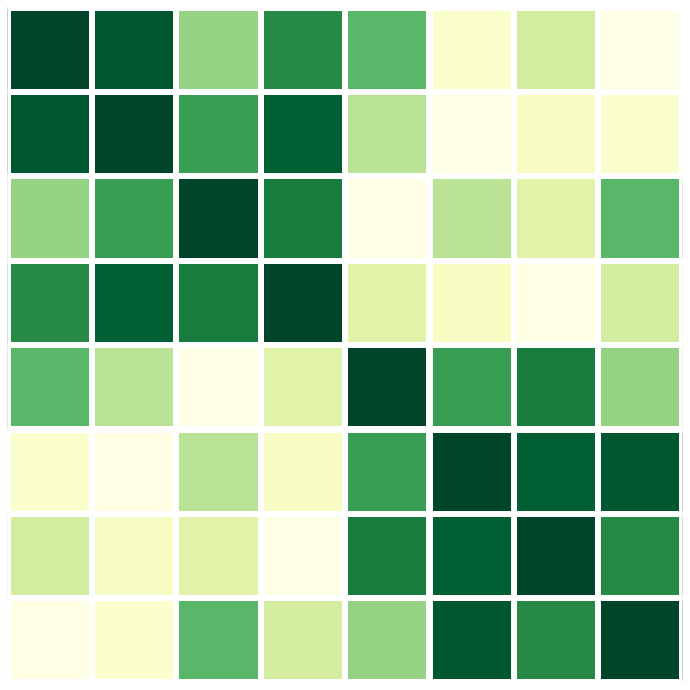}

    \label{fig:N=8_N8_d5_lr0.5_s20000_z_fixed_mini_batch_B2_f_step8000_wo_cbar}
}

\vspace{-3mm}
\caption{Theoretically unproven setting. Heatmap of $N \times N$ matrix when $N=8$ and $d<N-1$.
}
\label{fig:sim_theorem_unproven}
\end{figure*}

\subsection{Real Datasets}\label{sec:real_exp_appendix}
To demonstrate the practical effectiveness of the proposed SC method, we consider a setting where embeddings are learned by a parameterized encoder. We employ two widely recognized uni-modal mini-batch contrastive learning algorithms: SimCLR~\cite{chen2020simple} and SogCLR~\cite{yuan2022provable}, and integrate different batch selection methods from: (i) SGD (algo.~\ref{alg:sgd_wo_replacement}), (ii) OSGD (algo.~\ref{alg:osgd_wo_replacement}), (iii) SC (algo.~\ref{alg:sc_appendix}) into these frameworks. We compare the pre-trained models' performances in the retrieval downstream tasks on the corrupted and the original datasets.

We conduct the mini-batch contrastive learning with the mini-batch size $B=32$ using ResNet18-based encoders on CIFAR-100 and Tiny ImageNet datasets. All learning is executed on a single NVIDIA A100 GPU. The training code and hyperparameters are based on the official codebase of SogCLR\footnote{https://github.com/Optimization-AI/SogCLR}~\cite{yuan2022provable}. We use LARS optimizer\cite{you2017large} with the momentum of $0.9$ and the weight decay of $10^{-6}$. We utilize the learning rate scheduler which starts with a warm-up phase in the initial 10 epochs, during which the learning rate increases linearly to the maximum value  $\eta_{\max}=0.075 \sqrt{B}$. After this warm-up stage, we employ a cosine annealing (half-cycle) schedule for the remaining epochs. For OSGD, we employ $k=1500$, $q=150$. To expedite batch selection in the proposed SC, we begin by randomly partitioning $N$ positive pairs into $kB$-sized clusters, using $k=40$. We then apply the SC method to each $kB$ cluster to generate $k$ mini-batches, resulting in a total of $k\times (N/kB) = N/B$ mini-batches. We train models for a total of 100 epochs.

Table~\ref{tab:performance_comparison_all} presents the top-1 retrieval accuracy on CIFAR-100 and Tiny ImageNet. We measure validation retrieval performance on the true as well as corrupted datasets. The retrieval task is defined to be finding the positive pair image of a given image among all pairs (the number of images of the validation dataset).  

\begin{table}[ht]
\centering
\caption{Top-1 retrieval accuracy on CIFAR-100 (or Tiny ImageNet), when each algorithm uses CIFAR-100 (or Tiny ImageNet) to pretrain ResNet-18 with SimCLR and SogCLR objective. SC algorithm proposed in Sec.~\ref{subsec:spectral_clustering} outperforms existing baselines.
}

\label{tab:performance_comparison_all}
\begin{tabular}{@{}l*{4}{c}@{}}
\toprule
 & \multicolumn{4}{c}{Image Retrieval}\\
\cmidrule(lr){2-5} 
 & \multicolumn{2}{c}{CIFAR-100} & \multicolumn{2}{c}{Tiny ImageNet} \\
\cmidrule(lr){2-3} \cmidrule(lr){4-5} 
 & SimCLR & SogCLR & SimCLR & SogCLR \\
\midrule
SGD & 46.91\% & 12.34\% & 57.88\% & 16.70\% \\
OSGD & 47.55\% & 13.88\% & 59.34\% & 20.43\% \\
SC & $\bm{56.67}\%$ & $\bm{47.42}\%$ & $\bm{68.07}\%$ & $\bm{54.20}\%$ \\
\bottomrule
\end{tabular}
\end{table}

We also consider the retrieval task under a harder setting, where the various corruptions are applied per image so that we can consider a set of corrupted images as a hard negative samples. Table~\ref{tab:performance_comparison_corrupted} presents the top-1 retrieval accuracy results on CIFAR-100-C and Tiny ImageNet-C, the corrupted datasets~\cite{hendrycks2019benchmarking} designed for robustness evaluation. CIFAR-100-C (Tiny ImageNet-C) has the same images as CIFAR-100 (Tiny ImageNet), but these images have been altered by 19 (15) different types of corruption (e.g., image noise, blur, etc.). Each type of corruption has five severity levels. We utilize images corrupted at severity level 1. These images tend to be more similar to each other than those corrupted at higher severity levels, which consequently makes it more challenging to retrieve positive pairs among other images. To perform the retrieval task, we follow the following procedures: (i) We apply two distinct augmentations to each image to generate \emph{positive} pairs; (ii) We extract embedding features from the augmented images by employing the pre-trained models; (iii) we identify the pair image of the given augmented image among augmentations of 19 (15) corrupted images with the cosine similarity of embedding vectors. This process is iterated across $10$K CIFAR-100 images ($10$K Tiny-ImageNet images). The top-1 accuracy measures a percentage of retrieved images that match its positive pair image, where each pair contains two different modality stemming from a single image.

\end{document}